\pgfplotsset{compat=newest}
\newtheorem{theorem}{Theorem}
\newtheorem{lemma}{Lemma}
\newtheorem{corollary}{Corollary}
\newtheorem{observation}{Observation}
\newcommand{\R}{\mathbb{R}}
\newcommand{\N}{\mathbb{N}}
\DeclarePairedDelimiter\floor{\lfloor}{\rfloor}
\DeclarePairedDelimiter\ceil{\lceil}{\rceil}
\newcommand{\rls}{\textsc{RLS}\xspace}
\newcommand{\ea}{${(1 + 1)}$~EA\xspace}
\newcommand{\eab}{\emph{fast} ${(1 + 1)~\text{EA}_\beta}$\xspace}
\newcommand{\oea}{\ea}
\newcommand{\rea}{${(1 + 1)}$~EA$_{>0}$\xspace}
\newcommand{\mea}{${(1 + 1)}$~EA$_{0 \to 1}$\xspace}
\newcommand{\meatwo}{${(1 + 1)}$~EA$_{0 \to 2}$\xspace}
\newcommand{\muea}{${(\mu + 1)}$~EA\xspace}
\newcommand{\olea}{${(1 + \lambda)}$~EA\xspace}
\newcommand{\om}{\textsc{OneMax}\xspace}
\newcommand{\lo}{\textsc{LeadingOnes}\xspace}
\newcommand{\leadingones}{\lo}
\newcommand{\bv}{\textsc{BinVal}\xspace}
\newcommand{\onemaxplotwidthleft}{0.7\linewidth}
\newcommand{\onemaxplotwidthright}{0.32\linewidth}
\newcommand{\onemaxplotheight}{0.25\textheight}
\newlist{enumeratepar}{enumerate}{1}
\setlist[enumeratepar]{label=\textbf{(\arabic*)}, nosep, leftmargin=0pt, listparindent=\parindent, itemindent=2.8em}
\begin{document}


\title{Fixed-Target Runtime Analysis}
\author[1]{Maxim Buzdalov}\author[2]{Benjamin Doerr}\author[3]{Carola Doerr}\author[1]{Dmitry Vinokurov}\affil[1]{ITMO University, Saint Petersburg, Russia}\affil[2]{Laboratoire d'Informatique (LIX), CNRS, \'Ecole Polytechnique, Institut~Polytechnique de Paris, Palaiseau, France}\affil[3]{Sorbonne Universit{\'e}, CNRS, LIP6, Paris, France}\maketitle

\begin{abstract}
Runtime analysis aims at contributing to our understanding of evolutionary algorithms through mathematical analyses of their runtimes. In the context of discrete optimization problems, runtime analysis classically studies the time needed to find an optimal solution. However, both from a practical and from a theoretical viewpoint, more fine-grained performance measures are needed to gain a more detailed understanding of the main working principles and their resulting performance implications. Two complementary approaches have been suggested: fixed-budget analyses and fixed-target analyses.

In this work, we conduct an in-depth study on the advantages and the limitations of fixed-target analyses. We show that, different from fixed-budget analyses, many classical methods from the runtime analysis of discrete evolutionary algorithms yield fixed-target results without greater effort. We use this to conduct a number of new fixed-target analyses.
However, we also point out examples where an extension of existing runtime results to fixed-target results is highly non-trivial.

\end{abstract}


\section{Introduction}

The classic performance measure in the theory of evolutionary computation~\cite{Jansen13,DoerrN20} is optimization time, that is, the number of fitness evaluations that an algorithm uses to find an optimal solution for a given optimization problem. Often only expected optimization times are analyzed and reported, either for reasons of simplicity or because some analysis methods like certain drift theorems~\cite{Lengler20bookchapter} only yield such bounds.

Some works give more detailed information, e.g., the expectation together with a tail estimate~\cite{Kotzing16,Witt14,DoerrG10datb}. In some situations, only runtime bounds that hold with some or with high probability are given, either because these are easier to prove or more meaningful (see, e.g., the discussion in~\cite{Doerr19gecco} on such statements for estimation-of-distribution algorithms), or because the expectation is unknown~\cite{DoerrL17OM} or infinite. The use of the notion of stochastic domination~\cite{Doerr19tcs} is another way to give more detailed information on runtimes of algorithms.

Nevertheless, all these approaches reduce the whole optimization process to a single point in time: the moment in which an optimal solution is found. For various reasons, more detailed information on the whole process is also desirable, including the following:

\begin{enumeratepar}
\item Evolutionary algorithms, different from most classic algorithms, are so-called \emph{anytime algorithms}. This means that they can be interrupted at essentially any point of time and they still provide some valid solution (possibly of a low quality). The optimization time as the only performance measure gives no information on how good an algorithm is as an anytime algorithm. Such information, however, is of great interest in practice. It can be used, for instance, if one does not know in advance how much time can be allocated to the execution of an algorithm, or when it is important to report whenever a certain milestone (e.g., quality threshold) has been reached.
\item When several iterative optimization heuristics are available to solve a problem, one can try to start the optimization with one heuristic and then, at a suitable time, switch to another one which becomes more powerful at that time. To decide which heuristic to use up to a certain point of time or solution quality, more detailed information than the optimization time is needed.
\end{enumeratepar}

We note that the importance of reporting \emph{runtime profiles} instead of only optimization times has for a long time been recognized in algorithm benchmarking~\cite{cocoplat,iohprofiler}. These fine-grained performance analyses have helped to advance our understanding of evolutionary computation methods, and have contributed significantly to algorithm development. It is therefore surprising that such more fine-grained notions play only a marginal role in the runtime analysis literature. The following two notions have been used in the runtime analysis community.

\begin{itemize}
\item \emph{Fixed-budget analyses:} For a fixed number (``budget'') of fitness evaluations, one studies the (usually expected) quality of the best solution found within this budget.
\item \emph{Fixed-target analyses:} For a fixed quality threshold, one studies the (usually expected) time (often measured in terms of function evaluations) needed to find a solution of at least this quality.
\end{itemize}
	
The main goal of this work is a systematic fixed-target runtime analysis. We provide, in particular,
a comparison of different more fine-grained performance measures (Section~\ref{sec:FBFT}),  
a survey of the existing results (Section~\ref{sec:existing}),  
an analysis how the existing methods to prove runtime analysis results can be used to also give fixed-target results (Sections~\ref{sec:fitnesslevel} and~\ref{sec:drift}) together with several applications of these methods, some to reprove existing results, others to prove new fixed-target results.
The main insight here is that fixed-target results often come almost for free when one can prove a result on the optimization time, so it is a waste to not report them explicitly.  
However, in Section~\ref{sec:difficulties} we also point out situations in which the runtime is well understood, but the derivation of fixed-target results appears very difficult. 

The preliminary version of this work has been published in proceedings of the GECCO conference~\cite{buzdalovDDV-gecco20-fixed-target}.
In this present version, apart for implementing properly the pieces that have been shortened or omitted during to the conference page limit
and extending some of the results to evolutionary algorithms with \emph{fast} mutation operators,
we have introduced new versions of variable and multiplicative drift theorems that explicitly use the expected potential at the moment of stopping
and hence yield better bounds in the fixed-target settings.


\section{Fine-Grained Runtime Analysis: Fixed-Budget and Fixed-Target Analyses}\label{sec:FBFT}

Among the notions other than the time required to find an optimum, the first notion to become the object of rigorous mathematical analysis is \emph{fixed-budget analysis}~\cite{JansenZ14}. Fixed-budget analysis asks, given a computational budget $b \in \N$, for the expected fitness of the best solution seen within $b$ fitness evaluations. In the first paper devoted to this topic (extending results presented at GECCO 2012), Jansen and Zarges~\cite{JansenZ14} investigated the fixed-budget behavior of two simple algorithms, randomized local search (RLS) and the $(1+1)$ evolutionary algorithm (the \ea), on a range of frequently analyzed example problems. For these two elitist algorithms, fixed budget analysis amounts to computing or estimating $f(x_b)$, where $f$ is the objective function and $x_b$ is the $b$-th search point generated by the algorithm. Jansen and Zarges considered small budgets, that is, budgets $b$ below the expected optimization time, and argued that instead of larger budgets, one should rather regard the probability to find the optimum within the budget.

Jansen and Zarges~\cite{JansenZ14} obtained rather simple expressions for the fixed-budget fitness obtained by RLS, but those for the \ea were quite complicated. In~\cite{hypermut}, the same authors evaluated artificial immune systems (AIS). In terms of the classic optimization time, AIS are typically worse than evolutionary algorithms. Interestingly, in the fixed-budget perspective with relatively small budgets, AIS were proven to outperform evolutionary algorithms, confirming our claim that fine-grained runtime results can lead to insights that cannot be (easily) obtained from studying optimization times only.

These first results were achieved using proof techniques highly tailored to the considered algorithms and problems. Given the innocent-looking definition of fixed-budget fitness, the proofs were quite technical even for simple settings like RLS optimizing \leadingones. The analyses were even more complicated for the \oea and many analyses could not cover the whole range of budgets (e.g., for \leadingones, only budgets below $0.5 n^2$ were covered, whereas the (strongly concentrated) optimization time is around $0.86 n^2$, see~\cite{BottcherDN10}).

In~\cite{DoerrJWZ13}, a first general approach to proving fixed-budget results was presented. Interestingly, it works by estimating fixed-target runtimes and then using strong concentration results to translate the fixed-target result into a fixed-budget result. This might be the first work that explicitly defines the fixed-target runtime, without however using this name. The paper~\cite{LenglerS15} also uses fixed-target runtimes (called \emph{approximation times}, see~\cite[Corollary~3]{LenglerS15}) for a fixed-budget analysis, but most of the results in the paper are achieved by employing drift arguments. An explicit collection of drift theorems designed for fixed-budget analyses, along with an application to derive fixed-budget results for the well-studied \leadingones problem, can be found in~\cite{koetzing-witt-fixed-budget-drift}.  

The first fixed-budget analysis for a combinatorial optimization problem was conducted in~\cite{nallaperuma-fitness-gains-tsp}. Subsequently, several papers more concerned with classical optimization time also formulated their results in the fixed-budget perspective, among them~\cite{DoerrDY16,DoerrDY20,DoerrDY16PPSN}. 

A similar notion of fine-grained runtime analysis, called the \emph{unlimited budget analysis}~\cite{unlimited-budget-analysis-gecco19}, was recently proposed. It can be seen as either a complement to the fixed-budget analysis (as its primary goal is to measure how close an algorithm gets to the optimum of the problem in a rather large number of steps) or as an extension of fixed-budget analysis which goes beyond using small budgets only.

For the second main fine-grained runtime notion, \emph{fixed-target analysis}, due to it being a direct extension of the optimization time, it is harder to attribute a birthplace. As we argue also in Section~\ref{sec:fitnesslevel}, the fitness level method is intimately related to the fixed-target view. As such, many classic papers can be seen as fixed-target works, which is particularly true for papers where the fitness level method is not used as a black box, but one explicitly splits a runtime analysis into the time to reach a particular fitness and the another time to find the optimum, as done, e.g., in~\cite{Witt06}. The first, to the best of our knowledge, explicit definition of the fixed-target runtime in a runtime analysis paper can be found in the above-mentioned fixed-budget work~\cite[Section~3]{DoerrJWZ13}. There, for a given algorithm $A$, a given objective function $f$ (to be maximized), and a given target $k \in \R$, the fixed-target runtime $T_{A,f}(k)$ is defined as the number of fitness evaluations after which a search point of fitness at least $k$ is found. Since this notion was merely used as a tool in a proof, the name \emph{fixed-target runtime} was not used yet. The paper~\cite{practice-aware} argued that fixed-target results, coined \emph{runtime profiles} in~\cite{practice-aware}, should be made explicit, to provide more information to practitioners. The name \emph{fixed-target analysis} was, in the context of runtime analysis, first used in the GECCO 2019 student workshop paper~\cite{fixed-target-gecco19}, the only other work putting fixed-target analysis into its center.
It is also worth noting that certain papers combine theoretical results for optimization time and experimental fixed-target results, \cite{doerrYRWB-profiling-one-plus-lambda} being a good example, 
which suggests that there is a demand for a method for an easy translation of the results between these two areas.

In summary, we see that there are generally not too many runtime results that give additional information on how the process progresses over time. Since fixed-budget analysis, as a topic on its own, was introduced earlier into the runtime analysis community, there are more results on fixed-budget analysis. At the same time, by looking over all fixed-budget and fixed-target results, it appears that the fixed-budget results tend to be harder to obtain.

From the viewpoint of designing dynamic algorithms, that is, algorithms that change parameter values or sub-heuristics during the runtime, it appears to us that fixed-budget results are more useful for time-dependent dynamic choices, whereas fixed-target results aid the design of fitness-dependent schemes. If algorithm $A$ with a budget of $b$ computes better solutions than algorithm $B$ with the same budget, then in a time-dependent scheme one would rather run algorithm $A$ for the first $b$ iterations than $B$. However, if the runtime to the fitness target $x$ of algorithm $A$ is lower than that of $B$, then a fitness-dependent scheme would use rather $A$ than $B$ to reach a fitness of at least $x$.

Since we do not see that the increased difficulty of obtaining fixed-budget results is compensated by being significantly more informative or easier to interpret and since we currently see more fitness-dependent algorithm designs (e.g.,~\cite{BottcherDN10,DoerrDE15,DoerrDY16,LehreS20} than time-dependent ones (where, in fact, we are only aware of a very rough proof-of-concept evolutionary algorithm in the early work~\cite{JansenW06}), we advocate in this work to rather focus on fixed-target results. We support this view by further elaborating how the existing analyses methods for the optimization time, in particular, the fitness-level methods and drift, can easily be adapted to also give fixed-target results.


\section{Preliminaries}\label{sec:preliminaries}

Throughout the paper we use the notation $[a..b]$ to denote a set of integer numbers $\{a, a+1, \ldots, b-1, b\}$,
and we denote the set $[1..n]$ as $[n]$. We write $H_n$ for the $n$-th harmonic number, that is, $H_n = \sum_{i=1}^n 1/i$,
and $H_0 = 0$ for convenience. Finally, we use the shorthand $[\mathcal{E}]$ to denote the function that returns 1 if event $\mathcal{E}$ holds true, and which returns 0 otherwise. 

We consider simple algorithms, such as the \ea, the \muea, and the \olea, which solve optimization problems on bit strings of length $n$.
Due to the increased interest in mutation operators that do not produce offspring identical to the parent~\cite{practice-aware},
and to mutation operators different to standard bit mutation, such as the \emph{fast} mutation operators sampling from heavy-tailed distributions~\cite{DoerrLMN17},
we consider them in a generalized form, which samples the number of bits to flip from some distribution $\mathcal{M}$.
We also use a distribution over search points $\mathcal{D}$ during initialization. A default choice for $\mathcal{D}$ is to sample every
bit string with equal probability, however, we consider also initialization with the search point having smallest possible fitness value.
These algorithms are presented in the most general form in Algorithm~\ref{algo:mpl} as a $(\mu+\lambda)$~EA
parameterized by $\mathcal{M}$ and $\mathcal{D}$.

\begin{algorithm}[!t]
\caption{The $(\mu+\lambda)$~EA to maximize $f: \{0,1\}^n \to \R$}\label{algo:mpl}
\begin{algorithmic}
    \Require mutation strength distribution $\mathcal{M}$, initialization distribution $\mathcal{D}$
    \For{$i \in [\mu]$}
        \State $x_i \gets \text{sample from } \mathcal{D}$ 
				\State query $f(x_i)$
    \EndFor
    \State $X \gets \{x_1, \ldots, x_{\mu} \}$
    \While{\textbf{true}}
        \For{$i \in [\lambda]$}
            \State $j \gets \text{sample uniformly from } [\mu]$
			\State $\ell \gets \text{sample from } \mathcal{M}$ 
            \State $y_i \gets \text{flip } \ell \text{ pairwise different, uniformly chosen bits in } x_j$ 
						\State query $f(y_i)$
        \EndFor
        \State $Y \gets \{ y_1, \ldots, y_{\lambda}\}$
        \State $X \gets \mu \text{ best solutions from } X \cup Y$, breaking ties randomly,
        \State \phantom{$X \gets$}~preferring offspring in the case of ties
    \EndWhile
\end{algorithmic}
\end{algorithm}

We consider the following distributions of $\mathcal{M}$ for the \ea:
\begin{itemize}
    \item randomized local search, or RLS: $\mathcal{M} = 1$;
    \item the \ea with standard bit mutation: $\mathcal{M} = B(n,p)$, where $B(n,p)$ is the binomial distribution;
    \item the \mea using the \emph{shift mutation strategy}: $\mathcal{M} = \max\{1, B(n,p)\}$;
    \item the \rea using the \emph{resampling mutation strategy}:\\$\mathcal{M} = [x \sim B(n,p) \mid x > 0]$;
    \item the \emph{fast} \ea with parameter $\beta > 1$: $\mathcal{M} = [x \sim B(n, a/n) \mid a \sim \mathcal{H}_{\beta}]$, and $\mathcal{H}_{\beta}$ is a distribution defined as follows:
    \begin{equation*}
        \Pr[x = a \mid x \sim \mathcal{H}_{\beta}] = a^{-\beta} \cdot (C_{n/2}^{\beta})^{-1},
    \end{equation*}
    where $C_{n/2}^{\beta} = \sum_{i=1}^{n/2} i^{-\beta}$ is the normalization constant.
\end{itemize}

Note that, in fact, we could have also considered the \emph{fast} \ea with the shift or replacement mutation strategy,
as well as the version of the \emph{fast} \ea that directly samples the number of bits to flip from $\mathcal{H}_{\beta}$ without
using the binomial distribution as a proxy. However, since this paper would not benefit from repetitive analyses of very similar
algorithms, we restrict ourselves only to the canonical \emph{fast} \ea.

Sometimes we are only interested in the probability $q$ of flipping a particular bit while not flipping any other bit. 
For problem size $n$ and mutation strength $p$, the values of $q$ for the algorithms above are
\begin{itemize}
    \item RLS: $q = 1/n$;
    \item \ea: $q = p(1-p)^{n-1}$;
    \item \mea: $q = p(1-p)^{n-1} + \frac{(1-p)^n}{n}$;
    \item \rea: $q = \frac{p(1-p)^{n-1}}{1-(1-p)^n}$;
    \item \eab: $q = \frac{1}{n} \cdot (C_{n/2}^{\beta})^{-1} \cdot \gamma(n,\beta) = \frac{1}{n} \cdot (C_{n/2}^{\beta})^{-1} \cdot \Theta(1)$,
          as the factor  $\gamma(n, \beta) = \sum_{i=1}^{n/2} i^{1-\beta} (1-\frac{i}{n})^{n-1}$
          is between $e^{-1}$ and $\frac{2\pi^2}{3}$ by~\cite[Lemma 5]{DoerrLMN17}.
\end{itemize}

We consider the following classical problems on bit strings:
\begin{align*}
    \om(x) &\mapsto \sum\nolimits_{i=1}^n x_i \\
    \lo(x) &\mapsto \sum\nolimits_{i=1}^n \prod\nolimits_{j=1}^i x_i \\
    \bv(x) &\mapsto \sum\nolimits_{i=1}^n 2^{i-1} x_i.
\end{align*}

We also consider the minimum spanning tree (MST) problem. 
Given a connected undirected graph $G$ with positive weights on each edge, the MST problem asks to find a minimum spanning tree of it, that is,
a subgraph that connects all vertices of $G$ and that has the minimum possible weight. 
This problem was adapted to bit strings as in~\cite{NeumannW07} as follows: each bit corresponds to one edge of $G$,
and the bit value of 1 means that the edge is included in the subgraph.


\section{Overview of Known Fixed-Target Results}\label{sec:existing}

In this section, we comment on the known fixed-target results available in recent papers.
We cover both the results and the techniques which have been used and possibly modified to achieve these results.
Where necessary, we estimate the precision of these results by comparing them with the actual expected hitting times of the corresponding algorithms.
In the whole section, $n$~is the problem size and $k$~is the target fitness.

\subsection{\lo}

\lo was the first problem for which certain fixed-target results were derived.
The paper~\cite{Witt06} studied upper and lower bounds on the runtime of the \muea on several benchmark problems.
For \lo, the runtime was proven in~\cite[Theorem~1]{Witt06} using a technique similar to fitness levels, where the state space
also involved the number of the best individuals in the population. Then, \cite[Corollary~1]{Witt06} bounded the expectation
of the time needed to reach a state of at least $k$ leading ones from above by $\mu + 3ek \cdot\max\{\mu \ln en, n\}$, which is a fixed-target result.
In the framework of that paper, this result appeared to be useful in a subsequent part,
where the necessity of having population size $\mu > 1$ was discussed by constructing and analyzing an artificial problem
involving both \om and \lo.

B{\"o}ttcher et al.~\cite{BottcherDN10} proved exact expected runtimes for the \ea on \lo, which was an important cornerstone in the studies of \lo.
In this paper it is proved for the \ea that the expected time to leave a state with the fitness of $i$
and the mutation probability $p$ is $E(A_i) = 1/((1-p)^ip)$. However, the next fitness value may
be greater than $i+1$. The problem-dependent observation, which simplifies the analysis,
is that the bits following the $i$-th bit have a probability of exactly $1/2$ to equal 1.
This yields the expected optimization time to be exactly $((1-p)^{1-n}-(1-p))/(2p^2)$ for a fixed $p$.

This result was reused in~\cite[Section~4]{DoerrJWZ13} to prove that
the expected time that the \oea with mutation probability $p \in (0,1)$ needs to hit target $k \in [n]$ equals  
\begin{equation}
\frac{(1-p)^{1-k}-(1-p)}{2p^2}.\label{eq:lo:ea:fix}
\end{equation}
Technically, the result in~\cite{DoerrJWZ13} is given only for $p = 1/n$,
partially because they used this mutation probability throughout the whole paper.
However, their proof does not depend on the value of $p$ and hence extends to the general case.

Theorem~11 in~\cite{practice-aware} extends this result to similar algorithms: the \rea, the \mea, and RLS.
While only upper bounds are claimed in that work, it is clear from the above that these bounds are exact.
As the results of our techniques are identical, we direct the reader to our Theorem~\ref{result:lo:exact}.

Finally, in~\cite[Corollary 4]{Doerr19tcs} an even stronger result is proven about the exact \emph{distribution} of
the time the \ea needs to hit a certain target. The result is expressed in terms of random variables $X_i$ for initial bit values
and the probabilities $q_i$ to generate a point with a strictly better fitness from a point with fitness $i$,
and reads as $\sum_{i=0}^{k-1} X_i \cdot \text{Geom}(q_i)$, where Geom is the geometric distribution.

The summary of the results for \lo is given in Table~\ref{table:lo:summary}.

\begin{table}[!ht]
\caption{Summary of the fixed-target results available in literature for \lo}\label{table:lo:summary}
\begingroup\centering
\begin{tabular}{lllc}
\hline Algorithm & Result & Constraints & Reference \\
\hline
\muea & $\le \mu + 3ek \cdot \max{\{\mu\ln(en),n\}}$                                       & $p = 1/n$,            & \cite[Cor.~1]{Witt06} \\
      &                                                                                    & $\mu=\text{poly}(n)$  & \\
\ea   & $=\frac{n^2 - n}{2} \cdot \left(\left(1+\frac{1}{n-1}\right)^k - 1\right)$         & $p = 1/n$             & \cite[Sec.~4]{DoerrJWZ13} \\[1.5ex]
\rls  & $=^{(1)} kn/2$                                                                     & --                    & \cite[Th.~11]{practice-aware} \\[1.5ex]
\ea   & $=^{(1)} \frac{(1-p)^{1-k} - (1-p)}{2p^2}$                                         & --                    & \cite[Th.~11]{practice-aware} \\[1.5ex]
\mea  & $=^{(1)} \frac{1}{2}\sum_{i=n-k+1}^{n}\frac{1}{p(1-p)^{n-i} + \frac{1}{n}(1-p)^n}$ & --                    & \cite[Th.~11]{practice-aware} \\[1.5ex]
\rea  & $=^{(1)} \frac{1 - (1-p)^n}{2p^2} ((1-p)^{1-k}-(1-p))$                             & --                    & \cite[Th.~11]{practice-aware} \\
\hline
\end{tabular}
\par\endgroup
$^{(1)}$Note: \cite[Th.~11]{practice-aware} states only the upper bound. However, using the same arguments as in~\cite[Sec.~4]{DoerrJWZ13}, it is not difficult to verify that these bounds are exact.
\end{table}

\subsection{\om}

To the best of our knowledge, explicitly formulated fixed-target results regarding \om exist so far only for the \ea and similar algorithms.
What is more, due to the fact that the \ea shows different expected progress for different fitness values on the same problem,
for which it is hard to find sharp bounds, the available upper and lower fixed-target bounds for \om are generally less precise than those for \lo.
For this reason, we augment the existing theoretical results with the actual runtime profiles of the \ea,
which are given in Figure~\ref{om:profile} for three problem sizes $n = 10^3, 10^4, 10^5$.

\begin{figure}[!t]\centering
\includegraphics[scale=1.22]{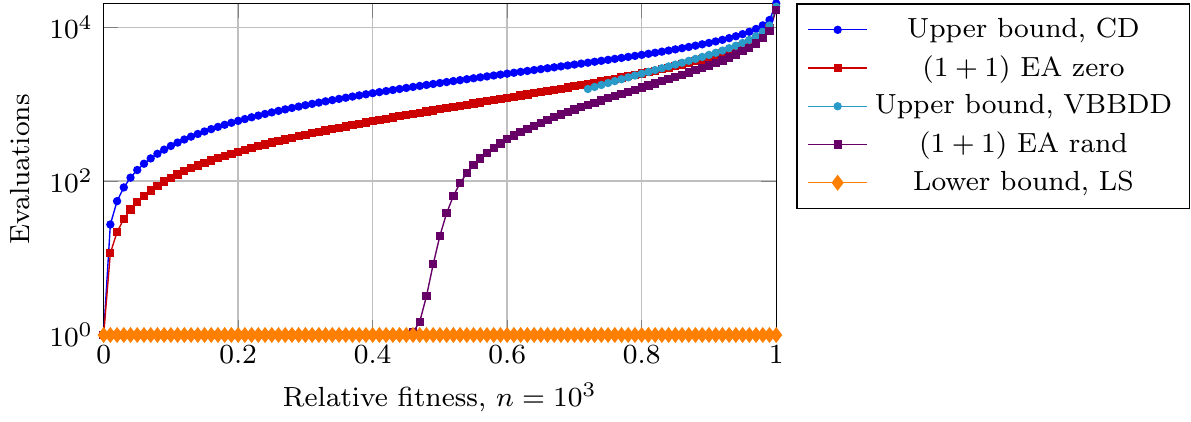}
\par
\includegraphics[scale=1.22]{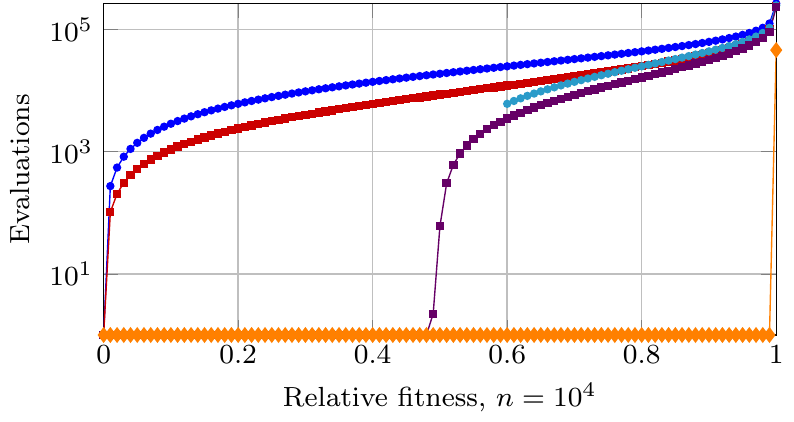}%
\includegraphics[scale=1.22]{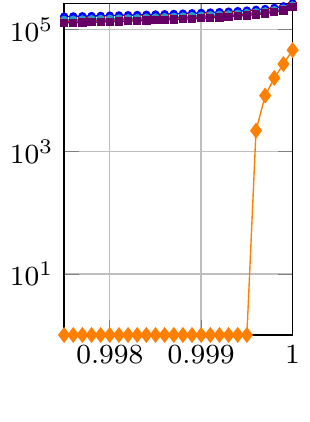}
\par
\includegraphics[scale=1.22]{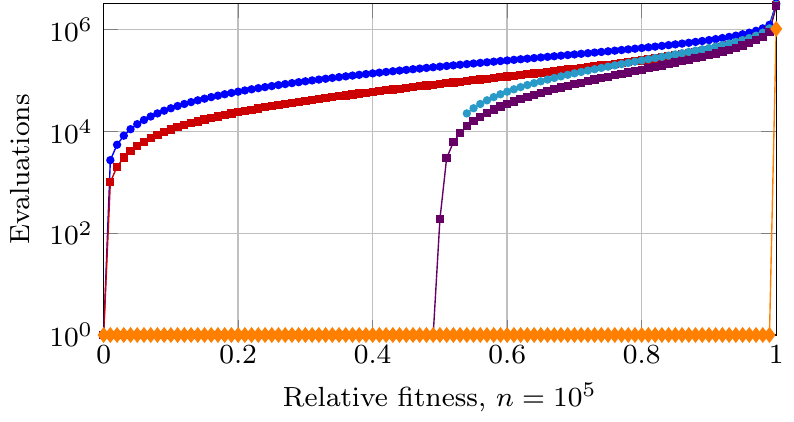}%
\includegraphics[scale=1.22]{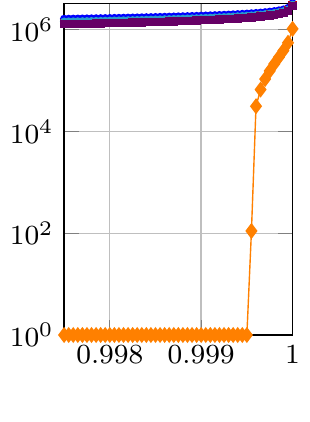}
\par
\caption{The runtime profile of the \ea on \om and the applicable bounds. Plots in the left part of the figure show the general picture for three problem sizes $n \in \{10^3, 10^4, 10^5\}$,
whereas plots in the right part highlight the regions close to the optimum. 
The plotted quantity is the expected number of fitness evaluations required to reach the target fitness value against that (relative) fitness value. 
The runtime profiles for both cases regarding \ea were computed exactly using dynamic programming following the ideas of~\cite{buskulic-doerr}.
For the upper and lower bounds, the respective expressions were evaluated and plotted, disregarding the $(1 \pm o(1))$ factors.
The notation in the legend is as follows: 
``\ea zero'' is the \ea starting at the all-zero bit string,
``\ea rand'' is the \ea starting at a randomly generated bit string,
``Upper bound, CD'' is the bound from~\cite[Theorem~10]{practice-aware},
``Upper bound, VBBDD'' is the bound from~\cite[Theorem~3.1]{fixed-target-gecco19} applicable to ``\ea rand'',
and finally ``Lower bound, LS'' is the bound from~\cite[Corollary~3]{LenglerS15}.
}\label{om:profile}
\end{figure}

The first result for \om appeared in~\cite[Corollary~3]{LenglerS15}, again in a fixed-budget setting.
It is a lower bound, which was proven using fitness levels for lower bounds~\cite{Sudholt13}.
This result says that, for the \ea with mutation probability $p=1/n$ optimizing \om, the expected time until hitting a target $k$ is at least
\begin{equation}
en \ln (n/(n-k)) - 2n \ln \ln n - 16n, \label{eq:om:lengler}
\end{equation}
which was used in~\cite{LenglerS15} to prove fixed-budget results.

This bound is aimed at estimating the time needed to reach targets which are very close to the optimum. Since the \ea spends most of its time
in these regions, and because~\cite{LenglerS15} is dedicated to fixed-budget estimations, this choice seems reasonable.
However, from the fixed-target perspective, \eqref{eq:om:lengler} results in a satisfactory lower bound only for a small fraction of possible conditions.
In particular, it turns negative when the target is smaller than $n \cdot (1 - (\ln n)^{-2/e})$,
so, for instance, it cannot be used to estimate the time needed to reach targets of the form $cn$, where $0 < c < 1$ is a constant factor.
The combination of the constant factors also renders it usable only for quite large values of $n$:
for instance, at $n=10^4$ only five values $k \in [9996..10^4]$ produce nonzero values.
For this reason, Figure~\ref{om:profile} contains special subfigures which highlight this particular bound at targets close to the optimum.

In~\cite[Theorem~10]{practice-aware}, the first fixed-target upper bounds were given for the \ea, the \rea, the \mea, and for RLS on \om.
They cover the whole range of targets, as well as arbitrary mutation probabilities.
They are based on simple fitness level arguments, and for simplicity they assume the worst case regarding initialization,
that is, that the algorithm starts at the all-zero bit string. As a result, all these bounds have a form of
of $\frac{1}{q}(H_n - H_{n-k})$, where $H_i$ is the $i$-th harmonic number and $q$ is the lower bound on the probability
of flipping exactly one bit. This bound is exact for RLS and also captures the behavior of other \ea flavors quite well
(see Figure~\ref{om:profile} for the example of the standard \ea).

To quantify how much the upper bound for the \ea on \om from~\cite[Theorem~10]{practice-aware}
differs from the actual times needed for the \ea to hit various targets when starting at the all-zeros bit string,
we reuse the data from Figure~\ref{om:profile} to compute the ratios of the upper bound to the corresponding average
target-hitting time. We present these insights in Table~\ref{table:om:upper:gaps}, from which one can infer, for instance, that
the upper bound from~\cite[Theorem~10]{practice-aware} overestimates the true hitting times at most
by a factor that tends to some constant value.

\begin{table}[!t]
\caption{Insights about the ratio of the upper bound on the fixed-target runtime of the \ea on \om from~\cite[Th.~10]{practice-aware}
         and the actual expected hitting times computed using dynamic programming following the ideas of~\cite{buskulic-doerr}.
         We present the maximum of the ratio across all targets $k \in [n]$, as well as the ranges of targets
         where the ratio is at least some fixed value (values of 1.5 and 2.5 are used).
         Note how the left endpoints of these ranges remain the same as the problem size $n$ grows,
         whereas the right endpoints scale linearly with remarkable precision.
         The maximum ratio slowly grows and apparently approaches a constant, which resembles $e = 2.71\ldots$ quite closely.}
         \label{table:om:upper:gaps}
\centering
\begin{tabular}{cccccc}
\hline $n$ & Maximum & \multicolumn{2}{c}{Ratio $\ge 2.5$} & \multicolumn{2}{c}{Ratio $\ge 1.5$} \\
           & ratio   & Min target & Max target & Min target & Max target \\\hline
$10^3$     & $2.606391$ & 19  & 221    & 2 & 920 \\
$10^4$     & $2.681519$ & 18  & 2320   & 2 & 9205 \\
$10^5$     & $2.706512$ & 18  & 23306  & 2 & 92057 \\\hline
\end{tabular}
\par
\end{table}

While being moderately precise for estimating the fixed-target runtimes of the \ea, and similar algorithms, on \om starting from the all-zeros bit string,
these bounds overestimate the fixed-target results much more when these algorithms start from a randomly generated bit string, which is a more realistic condition.
For random initialization,~\cite[Theorem~3.1]{fixed-target-gecco19} proves a similar upper bound of $\frac{1}{q} (H_{n/2} - H_{n-k}) (1-o(1))$
for $k > n/2 + \sqrt{n}\ln n$ using the same technique and additional arguments to use $H_{n/2}$. Technically, this result was proven only for \rea,
however, the proof holds also for arbitrary probabilities $q$ of flipping exactly one bit. Such a bound gives a moderately good approximation,
as illustrated in Figure~\ref{om:profile} for the classic \ea. The evaluation of the approximation quality, similar to the one shown in Table~\ref{table:om:upper:gaps},
shows that the constant factors in the approximation are better: the maximum ratio for $n=10^5$ is less than $1.8$, and the ratio gets smaller than $1.5$ at smaller $k \approx 0.815 n$.

In~\cite[Theorem 6.5]{Witt13}, lower bounds for the runtimes of the \ea on \om
are proven, assuming arbitrary mutation probabilities $p$ satisfying $p = O(n^{-2/3-\varepsilon})$.
For this, an interval $S = [n\tilde{p}\ln^2 n; 1/(2\tilde{p}^2n \ln n)]$ of distances to the optimum is considered,
where $\tilde{p} = \max\{1/n, p\}$. Multiplicative drift for lower bounds
is applied to this interval to yield the lower bound of
\begin{equation*}
(1 - o(1)) (1-p)^{-n} (1/p) \min\{\ln n, \ln(1/(p^3n^2))\}.
\end{equation*}

This result was used in~\cite{fixed-target-gecco19} to obtain the fixed-target results for the \rea on \om.
The small difference between the \ea and the \rea, from the point of view of the above-mentioned proof,
is only in one of the factors, which depends on $p$ in a slightly different way.
The obtained fixed-target results are piecewise. 
Indeed, \cite[Theorem 6.5]{Witt13} uses the time that the \ea needs to go through an interval of fitness values $S$.
For this reason, for all targets $k$ that do not belong to this interval, $k \ge n - n\tilde{p} \ln^2 n$,
the fixed-target runtime bound does not depend on $k$.
What is more, the bounds contain a factor $1 - o(1)$ which is hard to estimate well, and whose value appears to be large enough for practical values of $n$.
For this reason we do not show the plots for these bounds in Figure~\ref{om:profile}.

The summary of the results for \om is given in Table~\ref{table:om:summary}.

\begin{table}[!t]
\scriptsize\setlength{\tabcolsep}{0.3em}
\caption{Summary of the fixed-target results available in literature for \om}\label{table:om:summary}
\begingroup\centering
\begin{tabular}{lllc}
\hline Algorithm & Result & Constraints & Reference \\
\hline
\ea  & $\ge en \ln (n/(n-k)) - 2n \ln \ln n - 16n$                & $p=1/n$            & \cite[Cor.~3]{LenglerS15} \\[1.8ex]
\rls & $\le n \cdot (H_n - H_{n-k})$                              & --                 & \cite[Th.~10]{practice-aware} \\[1.8ex]
\ea  & $\le \frac{H_n - H_{n-k}}{p(1-p)^{n-1}}$                   & --                 & \cite[Th.~10]{practice-aware} \\[1.8ex]
\mea & $\le \frac{H_n - H_{n-k}}{(1-p)^{n-1} (p+1/n-p/n)}$        & --                 & \cite[Th.~10]{practice-aware} \\[1.8ex]
\rea & $\le \frac{1 - (1-p)^n}{p(1-p)^{n-1}} (H_n - H_{n-k})$     & --                 & \cite[Th.~10]{practice-aware} \\[1.8ex]
\rea & $\le \frac{1 - (1-p)^n}{p(1-p)^{n-1}} (H_{n/2} - H_{n-k}) \cdot (1-o(1))$ & $k > n/2 + \sqrt{n}\ln n$ & \cite[Th.~3.1]{fixed-target-gecco19} \\[1.8ex]
\rea & $\ge (1-o(1)) \frac{1-(1-p)^n}{p(1-p)^{n-1}} \ln \frac{1}{4\tilde{p}^3n^2 \ln^3 n}$
                                                                                    & $p = O(n^{-2/3-\varepsilon})$,  & \cite[Th.~3.4]{fixed-target-gecco19} \\[1.8ex]
     & where $\tilde{p} = \max\{1/n, p\}$                         & $k \ge n - n\tilde{p} \ln^2 n$  & \\[1.8ex]
\rea & $\ge (1-o(1)) \frac{1-(1-p)^n}{p(1-p)^{n-1}} \ln \frac{1}{4\tilde{p}^2n(n-k) \ln n}$
                                                                                    & $p = O(n^{-2/3-\varepsilon})$,  & \cite[Th.~3.4]{fixed-target-gecco19} \\[1.8ex]
     & where $\tilde{p} = \max\{1/n, p\}$                         & $k \le n - n\tilde{p} \ln^2 n$, & \\[1.8ex]
     &                                                            & $n - k = o(1/(2\tilde{p}^2n \ln n))$    & \\
\hline
\end{tabular}
\par\endgroup
\end{table}

\subsection{\bv}

The fixed-target bounds for \bv were proven in~\cite{fixed-target-gecco19} for the \rea.
The methods for proving optimization times for linear functions, such as the ones in~\cite{Witt13}, were found to be insufficient,
so a problem-dependent observation was used. To achieve a target value $k$ such that $2^n - 2^t \le k < 2^n - 2^{t+1}$,
one requires to optimize the $t$ heaviest bits, and it is enough to optimize the $t+1$ heaviest bits.
As a result, reaching the target $k$ is equivalent to solving \bv of size $t + O(1)$ to optimality using a $n/t$ times smaller
mutation rate. The quite complicated bounds from~\cite{Witt13} were adapted to the case of \bv in~\cite[Theorem 4.1]{fixed-target-gecco19}.
The summary of these results is given in Table~\ref{table:bv:summary}.

\begin{table}[!t]
\scriptsize\setlength{\tabcolsep}{0.3em}
\caption{Summary of the fixed-target results for \bv}\label{table:bv:summary}
\begingroup\centering
\begin{tabular}{lllc}\hline
Algorithm & Result & Constraints & Reference \\\hline
\rea & $\ge (1 - o(1)) \frac{1 - (1-p)^n}{p (1-p)^{n^{-}}} \min\left\{\ln n^{-}, \ln \frac{1}{p^3 (n^{-})^2}\right\}$,
                                                                                                            & $p = O(n^{-2/3-\varepsilon})$ & \cite[Th.~4.1]{fixed-target-gecco19} \\
     & $n^{-} = n-\lceil\log_2 (2^n - k)\rceil$                                                             &                            & \\[2ex]
\rea & $\le \frac{pn^{+}\alpha^2(1-p)^{1-n^{+}} + \alpha\left(\ln\frac{1}{p} + (n^{+}-1) \ln (1-p) + 1\right)}{(1-p)^{n^{+}-1} \cdot p(\alpha-1) \cdot (1 - (1-p)^n)^{-1}}$,
                                                                                                            & $p = O(n^{-2/3-\varepsilon})$ & \cite[Th.~4.1]{fixed-target-gecco19} \\
     & $n^{+} = n-\lfloor\log_2 (2^n - k)\rfloor$,                                                          &                            & \\
     & $\alpha$ arbitrary e.g. $\ln \ln n^{+}$                                                              &                            & \\
\hline 
\end{tabular}
\par\endgroup
\end{table}


\section{Fitness Levels}\label{sec:fitnesslevel}

In this section we consider the fitness level theorems in the fixed-target context. 
A key take-away, implicitly known in the community for years, is that the most important theorems of this sort are already suitable to produce
fixed-target results.


\subsection{Fitness Level Theorems}

In the fitness level method (also known as \emph{artificial fitness levels} and \emph{the method of $f$-based partitions}~\cite{Wegener02}),
the state of the algorithm is typically described by the best fitness of an individual in the population.
It is transformed into a \emph{fitness level}, which may aggregate one or more consecutive fitness values.
For the fitness function $f$ (to be maximized) and two search space subsets $A$ and $B$ one writes $A <_f B$ if $f(a) < f(b)$ for all $a \in A$ and $b \in B$.
A partition $\{A_1 \ldots A_m\}$ of the search space, such that $A_i <_f A_{i+1}$ for all $1 \le i < m$ and $A_m$ contains only the optima of the problem, is called an \emph{$f$-based partition}.
If the best individual in a population of an algorithm $\mathcal{A}$ has a fitness $f \in A_i$, then the algorithm is said to be in $A_i$.

Fitness level theorems work best for the algorithms that typically visit the fitness levels one by one in the increasing order.
We summarize below several popular fitness level theorems. We note that a simpler version of the fitness level theorems in~\cite{Sudholt13} was recently given in~\cite{DoerrK21gecco}. Since this work appeared significantly after the original preparation of this manuscript, we do not discuss it any further, but note nevertheless that for future fixed-target analyses it might be easier to use than the theorems from~\cite{Sudholt13}.

\begin{theorem}[Fitness levels for upper bounds; Lemma~1 from~\cite{Wegener02}]
Let $\{A_i\}_{1 \le i \le m}$ be an $f$-based partition, and let $p_i$ be a lower bound for the probability that the elitist algorithm $\mathcal{A}$
samples a search point belonging to $A_{i+1} \cup \ldots \cup A_m$ provided it currently is in~$A_i$.
Then the expected hitting time of $A_m$ is at most\label{fitness-levels-upper}
\begin{equation}
\sum_{i=1}^{m - 1} P[\mathcal{A} \text{ starts in } A_i] \cdot \sum_{j=i}^{m-1} \frac{1}{p_i} \le \sum_{j=1}^{m-1} \frac{1}{p_i}.
\end{equation}
\end{theorem}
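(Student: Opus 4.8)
The plan is to exploit elitism to turn the run into a monotone walk through the levels and then to bound the number of iterations spent in each level by a geometric random variable.

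First I would observe that, since $\mathcal{A}$ is elitist, the best fitness in the population never decreases; hence the index of the current fitness level is non-decreasing along the run, so once $\mathcal{A}$ has left $A_i$ it never returns. In particular, the hitting time of $A_m$ equals the sum, over the levels among $A_1, \ldots, A_{m-1}$ that are actually visited, of the number of iterations spent in each of them (a level that is skipped contributes $0$).

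Next I would condition on the starting level. Fix $i$ and assume $\mathcal{A}$ starts in $A_i$, an event of probability $P[\mathcal{A}\text{ starts in }A_i]$. While $\mathcal{A}$ is in level $A_j$ for some $i \le j \le m-1$, the hypothesis guarantees that, independently of the history and of which point of $A_j$ is currently held, the next iteration produces (and, by elitism, keeps) a search point in $A_{j+1}\cup\ldots\cup A_m$ with probability at least $p_j$. Therefore the number of iterations $\mathcal{A}$ spends in $A_j$ is stochastically dominated by a geometric random variable with parameter $p_j$, whose expectation is $1/p_j$. By linearity of expectation, the conditional expected hitting time of $A_m$, given that $\mathcal{A}$ starts in $A_i$, is at most $\sum_{j=i}^{m-1} 1/p_j$.

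Finally I would remove the conditioning by the law of total expectation, summing over $i$ with weights $P[\mathcal{A}\text{ starts in }A_i]$, which gives the first inequality; the second follows because these weights sum to at most $1$ and each inner sum is bounded by $\sum_{j=1}^{m-1} 1/p_j$. The only point requiring care — hardly an \emph{obstacle} — is phrasing the geometric-domination step correctly when levels may be skipped: one states it as a uniform lower bound $p_j$ on the one-step probability of leaving $A_j$ over all search points of $A_j$ and all histories, and concludes that the sojourn time in $A_j$, conditioned on $A_j$ being entered at all, is dominated by a $\mathrm{Geom}(p_j)$ variable. Everything else is bookkeeping.
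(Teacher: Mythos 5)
Your proof is correct and is the standard argument for the fitness level method: elitism makes the level index non-decreasing, the sojourn time in each level $A_j$ is stochastically dominated by a geometric random variable with parameter $p_j$, and linearity plus the law of total expectation yields the claimed bound (you also implicitly fix the index typo $1/p_i$ versus $1/p_j$ in the displayed sums). The paper does not prove this theorem itself but imports it from Wegener's Lemma~1, whose proof proceeds exactly along these lines, so there is nothing to compare beyond noting agreement.
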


\begin{theorem}[Fitness levels for lower bounds; Theorem~3 from~\cite{Sudholt13}]
Let $\{A_i\}_{1 \le i \le m}$ be a partition of the search space. Let the probability for the elitist algorithm $\mathcal{A}$ to transfer in one step from $A_i$
to $A_j$, $i < j$, to be at most $u_i \cdot \gamma_{i,j}$, and let $\sum_{j=i+1}^m \gamma_{i,j} = 1$. Assume there is some $\chi \in [0,1]$ 
such that $\gamma_{i,j} \ge \chi \sum_{k=j}^m \gamma_{i,k}$ for all $1 \le i < j \le m$.
Then the expected hitting time of $A_m$ is at least\label{fitness-levels-lower}
\begin{equation}
\sum_{i=1}^{m - 1} P[\mathcal{A} \text{ starts in } A_i] \cdot \left(\frac{1}{u_i} + \chi \sum_{j=i+1}^{m-1} \frac{1}{u_i}\right).
\end{equation}
\end{theorem}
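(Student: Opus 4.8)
The plan is to split the analysis into two questions that can be treated independently --- for each fitness level, \emph{how long does the algorithm stay there once it has arrived}, and \emph{which levels does it ever occupy at all} --- and then to combine the two contributions. Elitism is what makes this decomposition legitimate: the current level is non-decreasing, so the time to hit $A_m$ is exactly the sum over all levels of the number of steps spent on that level, with a skipped level contributing $0$.

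First I would lower-bound the sojourn time on a level that is actually visited. When the current level is $A_i$, the one-step probability of creating \emph{and} accepting a search point on a strictly higher level is at most $\sum_{j=i+1}^{m}u_i\gamma_{i,j}=u_i$, and this holds conditionally on the whole history. Hence, given that $A_i$ is ever entered, the number of steps the algorithm remains in $A_i$ stochastically dominates a geometric variable with success probability $u_i$, so its expectation is at least $1/u_i$. This is the same computation that underlies Theorem~\ref{fitness-levels-upper}, with the inequality reversed, and it does not involve $\chi$ at all.

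Second --- and here is where the regularity condition enters --- I would show that, conditioned on the run starting in $A_i$, every higher non-optimal level $A_j$ with $i<j<m$ is visited with probability at least $\chi$ (and $A_i$ itself with probability $1$). Since levels only increase, $A_j$ is missed precisely when some single accepted step jumps from a level below $j$ directly to a level above $j$, i.e.\ ``overshoots'' $A_j$. I would bound this overshoot probability by backward induction over $\ell=j-1,j-2,\dots,i$: writing $s_\ell$ for the probability of overshooting $A_j$ when the current level is $A_\ell$, one sets up $s_\ell=a_\ell+\sum_{\ell<k<j}b_{\ell k}\,s_k$, where $a_\ell$ and $b_{\ell k}$ are the conditional probabilities that, \emph{given} the algorithm leaves $A_\ell$, it moves above $j$ respectively to $A_k$; plugging in the induction hypothesis $s_k\le 1-\chi$ and using $\sum_{k>\ell}\gamma_{\ell,k}=1$ together with $\gamma_{\ell,j}\ge\chi\sum_{k\ge j}\gamma_{\ell,k}$ collapses the right-hand side to at most $1-\chi$; the base case $\ell=j-1$ is the regularity inequality $\gamma_{j-1,j}\ge\chi\sum_{k\ge j}\gamma_{j-1,k}$ itself. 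This yields $\Pr[A_j\text{ visited}\mid\text{start in }A_i]\ge\chi$.

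Finally I would assemble the bound. Write $v_{ij}:=\Pr[A_j\text{ is visited}\mid\text{the run starts in }A_i]$, so $v_{ii}=1$ and $v_{ij}\ge\chi$ for $i<j<m$ by the previous step. Conditioning on the initial level $A_i$ and summing the per-level sojourn times, linearity of expectation gives
\begin{equation*}
E[\,T\mid\text{start in }A_i\,]\;\ge\;\frac1{u_i}+\sum_{j=i+1}^{m-1}v_{ij}\cdot\frac1{u_j}\;\ge\;\frac1{u_i}+\chi\sum_{j=i+1}^{m-1}\frac1{u_j},
\end{equation*}
and averaging over the distribution of the starting level yields the claimed bound (with the inner sum over $1/u_j$, which is how the subscript in the statement should read). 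The step I expect to be the genuine obstacle is the second one: the transition data are only upper bounds and may depend on the entire history of the run, so the overshoot recursion must be arranged so that these upper bounds are all that is needed, and one has to check carefully that the $\chi$-regularity of each $\gamma_{\ell,\cdot}$ is exactly the property forcing the threshold-crossing step to land on $A_j$ itself with probability at least $\chi$ rather than beyond it. Everything else is routine bookkeeping that runs parallel, level by level, to the proof of Theorem~\ref{fitness-levels-upper}.
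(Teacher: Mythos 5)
The paper itself does not prove this statement --- it quotes it from \cite{Sudholt13} --- so the comparison is with the original proof. Your architecture (expected sojourn time at a visited level $A_j$ is at least $1/u_j$; every intermediate level is visited with probability at least $\chi$; combine by linearity of expectation and average over the starting level) is exactly the standard route for this theorem, and your reading of the displayed bound with $1/u_j$ rather than $1/u_i$ inside the inner sum is indeed the intended statement; steps one and three of your plan are fine.

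The genuine gap is your step two, and it is not something the recursion can be ``arranged'' to fix: from the hypotheses as you use them --- only the upper bounds $\Pr[A_i \to A_j] \le u_i\gamma_{i,j}$ plus the viability condition on the $\gamma$'s --- the claim that each $A_j$ with $i<j<m$ is visited with probability at least $\chi$ is false, and so is the theorem's conclusion itself. Take three singleton levels, a process that moves from $A_1$ straight to $A_3$ with probability $\epsilon$ per step and never to $A_2$, and from $A_2$ to $A_3$ with probability $\delta<\epsilon$; choose $\gamma_{1,2}=\gamma_{1,3}=1/2$, $\gamma_{2,3}=1$, $\chi=1/2$, $u_1=2\epsilon$, $u_2=\delta$. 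Every stated condition holds, yet $A_2$ is visited with probability $0<\chi$ and the bound $1/u_1+\chi/u_2=1/(2\epsilon)+1/(2\delta)$ exceeds the true expected hitting time $1/\epsilon$. The flaw in your backward induction is that $a_\ell$ and $b_{\ell k}$ are conditional probabilities of the \emph{actual} process given that it leaves $A_\ell$, and the hypotheses give no lower bound on any $p_{\ell,j}$ (nor on the leaving probability), so replacing them by expressions in the $\gamma_{\ell,\cdot}$ is unjustified. The reading under which the theorem is true --- and under which it is applied, e.g., in Lemma~\ref{lemma-lo-params} of this paper --- is that the $\gamma_{i,j}$ describe the distribution of the next level \emph{conditional on leaving} $A_i$ (for every history), equivalently that this true conditional distribution itself satisfies the viability inequality, with $u_i$ an upper bound on the leaving probability; the product bound $p_{i,j}\le u_i\gamma_{i,j}$ quoted here is a strictly weaker consequence. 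Under that reading your step two follows at once, with no recursion: condition on the first step that reaches a level $\ge j$ and on the level $\ell<j$ from which it is taken; the probability of landing exactly on $A_j$ is $\gamma_{\ell,j}/\sum_{k\ge j}\gamma_{\ell,k}\ge\chi$. With that repair the rest of your argument is the standard proof; see \cite{DoerrK21gecco} for a formulation in terms of visit probabilities that makes this structure explicit.
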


We also use an upper-bound theorem similar to Theorem~\ref{fitness-levels-lower}.

\begin{theorem}[Refined fitness levels for upper bounds; Theorem~4 from~\cite{Sudholt13}]
Let $\{A_i\}_{1 \le i \le m}$ be a partition of the search space. Let the probability for the elitist algorithm $\mathcal{A}$ to transfer in one step from $A_i$
to $A_j$, $i < j$, to be at least $s_i \cdot \gamma_{i,j}$, and let $\sum_{j=i+1}^m \gamma_{i,j} = 1$. Assume there is some $\chi \in [0,1]$ 
such that $\gamma_{i,j} \le \chi \sum_{k=j}^m \gamma_{i,k}$ for all $1 \le i < j < m$. 
Further, assume that $(1 - \chi) s_i \le s_{i+1}$ for all $1 \le i \le m - 2$. 
Then the expected hitting time of $A_m$ is at most\label{fitness-levels-upper-refined}
\begin{equation}
\sum_{i=1}^{m - 1} P[\mathcal{A} \text{ starts in } A_i] \cdot \left(\frac{1}{s_i} + \chi \sum_{j=i+1}^{m-1} \frac{1}{s_i}\right).
\end{equation}
\end{theorem}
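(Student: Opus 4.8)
The plan is to run the classical first-step analysis behind fitness-level arguments, but while carefully tracking the level at which the process \emph{enters} each $A_i$. First I would condition on the level $A_i$ in which $\mathcal{A}$ is initialized: since the claimed bound is the average over these events weighted by $P[\mathcal{A}\text{ starts in }A_i]$, it suffices to show that, started in $A_i$, the expected hitting time $T_i$ of $A_m$ satisfies $T_i\le B_i$, where $B_i:=\frac1{s_i}+\chi\sum_{j=i+1}^{m-1}\frac1{s_j}$ is the per-level quantity from the statement (with the inner summand read as $1/s_j$, which is also what makes the hypothesis $(1-\chi)s_i\le s_{i+1}$ the natural one). Because $\mathcal{A}$ is elitist, the sequence of occupied fitness levels is non-decreasing, so $T_i$ is well defined as a function of the current level -- or, if the one-step probabilities depend on the whole state, as the worst case over states of level $A_i$, as is standard in the refined fitness-level method.

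Next I would set up the recursion. Let $q_{i,j}$ be the true one-step probability of moving from $A_i$ to $A_j$ and $p_i:=\sum_{j>i}q_{i,j}$ the true probability of leaving $A_i$; by hypothesis $q_{i,j}\ge s_i\gamma_{i,j}$, hence $p_i\ge s_i$. Conditioning on the step that leaves $A_i$ yields
\[
p_i\,T_i \;=\; 1+\sum_{j=i+1}^{m-1} q_{i,j}\,T_j ,
\]
with $T_m=0$. I would then prove $T_i\le B_i$ by downward induction on $i$; the base case $i=m-1$ is immediate, since $T_{m-1}=1/p_{m-1}\le 1/s_{m-1}=B_{m-1}$.

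The inductive step is the heart of the matter. Substituting $T_j\le B_j$ for $j>i$, expanding $B_j=\frac1{s_j}+\chi\sum_{k=j+1}^{m-1}\frac1{s_k}$, and swapping the order of summation reduces the goal $p_iT_i\le p_iB_i$ to
\[
1-\frac{p_i}{s_i}\;\le\;\sum_{k=i+1}^{m-1}\frac1{s_k}\Bigl(\chi\sum_{j=k}^{m}q_{i,j}-q_{i,k}\Bigr).
\]
I would split the true probabilities as $q_{i,j}=s_i\gamma_{i,j}+r_{i,j}$ with $r_{i,j}\ge 0$. The ``$\gamma$-part'' of the right-hand side has summands proportional to $\chi\sum_{j\ge k}\gamma_{i,j}-\gamma_{i,k}\ge 0$ for $i<k<m$ by the hypothesis $\gamma_{i,j}\le\chi\sum_{k\ge j}\gamma_{i,k}$, so it may be dropped. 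For the ``excess part'', set $S_k:=\sum_{j\ge k}r_{i,j}$ (so $r_{i,k}=S_k-S_{k+1}$ and $S_{i+1}=p_i-s_i$) and apply summation by parts over the levels: the sum telescopes into a boundary term $S_m/s_{m-1}\ge 0$, middle terms $S_k\bigl(\tfrac1{s_{k-1}}-\tfrac{1-\chi}{s_k}\bigr)\ge 0$ (nonnegative exactly because $(1-\chi)s_{k-1}\le s_k$), and a term $-(1-\chi)S_{i+1}/s_{i+1}$. Since the left-hand side equals $-S_{i+1}/s_i$, the inequality follows from $(1-\chi)/s_{i+1}\le 1/s_i$, i.e.\ from the hypothesis $(1-\chi)s_i\le s_{i+1}$ once more.

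I expect the main obstacle to be precisely this last bookkeeping: since the hypothesis bounds the transition probabilities only \emph{below} by $s_i\gamma_{i,j}$, the ``extra'' escape mass $r_{i,j}$ that makes $p_i$ exceed $s_i$ must be shown harmless for an \emph{upper} bound, and it is the summation-by-parts rearrangement together with the condition $(1-\chi)s_i\le s_{i+1}$ that makes all the relevant pieces nonnegative; getting the telescoping and every sign right is the delicate step. A secondary, more routine point is the reduction in the first paragraph -- making ``expected time from level $A_i$'' a meaningful quantity when the dynamics depend on the full state -- which is handled, as usual, by the worst-case reading of $T_i$.
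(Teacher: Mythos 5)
Your argument is correct, but there is nothing in the paper to compare it against: Theorem~\ref{fitness-levels-upper-refined} is quoted (typo included) from Theorem~4 of~\cite{Sudholt13} and is used as a black box, with no proof given here. Your decision to read the inner summand as $1/s_j$ rather than the printed $1/s_i$ is the right one; it matches the cited original and the way the theorem is actually applied later (e.g.\ in Lemma~\ref{lemma-lo-sum}, whose inner sum runs over $1/q_j$). I checked the substance of your sketch and it goes through: conditioning on the starting level and doing downward induction on $T_i\le B_i$ with $B_i=\frac{1}{s_i}+\chi\sum_{j=i+1}^{m-1}\frac{1}{s_j}$; the reduction of $1+\sum_{j=i+1}^{m-1}q_{i,j}B_j\le p_iB_i$ to $1-\frac{p_i}{s_i}\le\sum_{k=i+1}^{m-1}\frac{1}{s_k}\bigl(\chi\sum_{j=k}^{m}q_{i,j}-q_{i,k}\bigr)$ is algebraically exact; the $\gamma$-part of the right-hand side is nonnegative by $\gamma_{i,k}\le\chi\sum_{j\ge k}\gamma_{i,k'}$; and the Abel summation of the excess mass produces the boundary term $S_m/s_{m-1}\ge 0$, middle terms that are nonnegative exactly because $(1-\chi)s_{k-1}\le s_k$, and the term $-(1-\chi)S_{i+1}/s_{i+1}$, which dominates the left-hand side $-S_{i+1}/s_i$ again by $(1-\chi)s_i\le s_{i+1}$ (the case $i=m-1$ being the trivial base case, so the restriction $i\le m-2$ in that hypothesis is exactly what is needed).

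Two routine points that you flag but should be spelled out in a full write-up: in the state-dependent setting the level recursion is an inequality $T(a)\le 1+(1-p_i(a))T_i+\sum_{j>i}q_{i,j}(a)T_j$ rather than an equality, and closing the induction from $T(a)\le p_i(a)B_i+(1-p_i(a))T_i$ needs an a priori finiteness bound such as $T_i\le\sum_{j\ge i}1/s_j$ (time on each level is dominated by a geometric with parameter $s_j$) together with a supremum/$\varepsilon$ argument; and the first-step analysis implicitly uses that the process never drops to a lower level, which for a general (non-$f$-based) partition is an extra assumption on the transition graph rather than a consequence of elitism, as the surrounding discussion in the paper also notes.
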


Theorems~\ref{fitness-levels-upper}--\ref{fitness-levels-upper-refined} are applicable only to elitist algorithms.
However, a fitness level theorem was proposed in~\cite{CorusDEL18} that can be applied to non-elitist algorithms as well (see~\cite{DoerrK19} for a sharper version of this result).

\begin{observation}
If one of Theorems~\ref{fitness-levels-upper}--\ref{fitness-levels-upper-refined} is proven for a certain algorithm on a certain problem,
it also holds if a new target fitness level $m'$ is chosen, such that $1 < m' < m$,
and all fitness levels $A_{m'}, A_{m'+1}, \ldots, A_m$ are merged.\label{fitness-levels-observation}
\end{observation}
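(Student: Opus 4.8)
The plan is to observe that coarsening the partition by fusing the top levels preserves, \emph{verbatim}, all the parameters appearing in the hypotheses of Theorems~\ref{fitness-levels-upper}--\ref{fitness-levels-upper-refined}, so that re-applying the relevant theorem to the coarser partition requires no new probabilistic reasoning at all. Concretely, I would fix $m'$ with $1 < m' < m$ and define $A'_i := A_i$ for $1 \le i < m'$ and $A'_{m'} := A_{m'} \cup \dots \cup A_m$. Then $\{A'_i\}_{1 \le i \le m'}$ is again a partition of the search space with $A'_i <_f A'_{i+1}$ for $1 \le i < m'$, and its ``last level'' $A'_{m'}$ is exactly the set of search points of fitness at least the threshold defining $A_{m'}$; hence the expected hitting time of $A'_{m'}$ is precisely the fixed-target runtime for that threshold. (The only formal mismatch is that in Theorem~\ref{fitness-levels-upper} an \emph{$f$-based partition} is required to have ``only optima'' in its last level; this is harmless, since the statement and proof only ever bound the hitting time of the last level and never use optimality of its members.)

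For Theorem~\ref{fitness-levels-upper} there is nothing to aggregate: for every $i < m'$ the set $A'_{i+1} \cup \dots \cup A'_{m'}$ coincides, as a set of search points, with $A_{i+1} \cup \dots \cup A_m$, so each lower bound $p_i$ remains valid, and the theorem's conclusion is just the original bound with the inner sum cut off at $m'-1$. For Theorems~\ref{fitness-levels-lower} and~\ref{fitness-levels-upper-refined} the work is pure bookkeeping on the $\gamma_{i,j}$: keep $u_i$ (resp.\ $s_i$) and $\chi$ unchanged, set $\gamma'_{i,j} := \gamma_{i,j}$ for $j < m'$ and $\gamma'_{i,m'} := \sum_{k=m'}^m \gamma_{i,k}$, and then check the four hypotheses in turn. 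First, $\sum_{j=i+1}^{m'} \gamma'_{i,j} = \sum_{j=i+1}^{m} \gamma_{i,j} = 1$. Second, the one-step probability from $A_i$ into $A'_{m'}$ is the sum of the probabilities into $A_{m'}, \dots, A_m$, hence at most $\sum_{k=m'}^m u_i \gamma_{i,k} = u_i \gamma'_{i,m'}$ (and symmetrically at least $s_i \gamma'_{i,m'}$ in the refined-upper case). Third, for every $j \le m'$ one has $\sum_{k=j}^{m'} \gamma'_{i,k} = \sum_{k=j}^{m} \gamma_{i,k}$, so the domination inequality $\gamma'_{i,j} \ge \chi \sum_{k=j}^{m'} \gamma'_{i,k}$ (resp.\ $\le$) reduces to the original one when $j < m'$ and is trivially true for $j = m'$, where it reads $\gamma'_{i,m'} \ge \chi\,\gamma'_{i,m'}$ (resp.\ $\le$). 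Fourth, the monotonicity requirement $(1-\chi)s_i \le s_{i+1}$ for $1 \le i \le m'-2$ in Theorem~\ref{fitness-levels-upper-refined} is a sub-range of the original one. Applying the theorem then yields exactly the original bound restricted to the first $m'-1$ starting levels.

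The only point I expect to need a moment's care is the index restriction ``$1 \le i < j < m$'' in the $\chi$-domination hypothesis of Theorem~\ref{fitness-levels-upper-refined}: after merging one needs it for $1 \le i < j < m'$, and since we assumed $m' < m$, every such $j$ satisfies $j < m' < m$, so the original hypothesis does cover it (and if one allowed $m' = m$ the condition would be vacuous). Everything else is mechanical, which is precisely the intended ``fixed-target comes for free'' message: no step of the underlying proofs is redone, one merely restricts attention to a coarser $f$-based partition and reads off the truncated bound.
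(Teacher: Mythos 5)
Your proposal is correct and follows essentially the same route as the paper's own proof: keep $p_i$, $u_i$, $s_i$, and $\chi$ unchanged, merge the transition probabilities into $\gamma'_{i,m'} = \sum_{k=m'}^m \gamma_{i,k}$, and observe that the $\chi$-domination constraints either reduce to the original ones (for $j < m'$) or become trivial/vacuous at $j = m'$. Your treatment is somewhat more explicit (e.g., checking the refined theorem's $(1-\chi)s_i \le s_{i+1}$ sub-range and the ``only optima'' caveat of the classical upper-bound theorem), but there is no substantive difference in approach.
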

\begin{proof}
This is essentially the same argument as in~\cite{LenglerS15}.

This modification does not alter the estimations of probabilities to leave fitness levels preceding $m'$:
$p_i$, $u_i$ and $s_i$ for Theorems~\ref{fitness-levels-upper}, \ref{fitness-levels-lower}, and~\ref{fitness-levels-upper-refined}, respectively. 
The only affected locations are the constraints on $\gamma_{i,j}$. Their affected occurrences on the right-hand sides
effectively merge by summing up, e.g., $\gamma'_{i,m'} \gets \sum_{k=m'}^{m} \gamma_{i,k}$. Note that
only those right-hand sides,  which contain the complete sums from $m'$ to $m$, survive after the transformation, and not just their parts.
For the left-hand sides, only those $\gamma_{i,j}$ survive where $j = m'$, as all others
are either unchanged or cease to exist. However, these occurrences are trivial,
since they are limited only by identity inequalities $\gamma_{i,m'} \le \chi \gamma_{i,m'}$ in Theorem~\ref{fitness-levels-lower}
and are not limited by anything in Theorem~\ref{fitness-levels-upper-refined} as their limits are conditioned on $j < m'$. 
\end{proof}

It follows from Observation~\ref{fitness-levels-observation} that it is very easy to obtain fixed-target results
from the existing optimization time results whose proofs use the above-mentioned fitness level theorems.

Note that, technically, Theorems~\ref{fitness-levels-lower}, \ref{fitness-levels-upper-refined}
and the theorem from~\cite{CorusDEL18} do not require the employed partition of the search space to be an $f$-based partition.
Formally speaking, this enables using them in the fixed-target context as is, contrary to the original formulation of fitness levels
for upper bounds, where the last partition must contain optima and nothing else.
However, this alone does not yet allow to reuse the existing optimization time results in order to obtain the fixed-target ones.
Observation~\ref{fitness-levels-observation} fills this gap by specifying the sufficient conditions for this to be possible.

We also note that Theorems~\ref{fitness-levels-lower} and \ref{fitness-levels-upper-refined}
only require the transition graph over the employed partition to be acyclic with regard to the analyzed algorithm,
but does not require that a fitness level $A_m$ is reachable from another fitness level $A_n$ for every $n < m$.
In fact, Observation~\ref{fitness-levels-observation} may be extended without much effort to allow merging of fitness levels whose indices do not form
a contiguous sequence, provided that one cannot reach a non-merged fitness level from any of the merged levels.
This feature might have applications in analysis of certain local search algorithms.


\subsection{Applications}
\subsubsection{Hill Climbers on \lo}

We re-prove here the statements about the fixed-target performance of the algorithms from the \ea family,
which were proven in~\cite[Section~4]{DoerrJWZ13} and stated, but not formally proven, in~\cite[Theorem~11]{practice-aware}.
For this we use Theorems~\ref{fitness-levels-lower} and~\ref{fitness-levels-upper-refined}, similarly to their use in~\cite{Sudholt13}
to prove the results from~\cite{BottcherDN10} with fitness levels alone.

\begin{lemma}
    In the context of Theorems~\ref{fitness-levels-lower} and~\ref{fitness-levels-upper-refined} for \lo,
    assuming the target fitness is $k$, the values $\gamma_{i,j} = 2^{i-j}$ when $j < k$,
    $\gamma_{i,k} = 2^{i-k+1}$, and $\chi = 1/2$ satisfy their requirements. \label{lemma-lo-params}
\end{lemma}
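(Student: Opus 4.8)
The plan is to instantiate the fitness-level partition for \lo in the obvious way and then check that the data $(\gamma_{i,j},\chi)$ of the statement matches the transition structure of the \ea family on \lo and meets the hypotheses of Theorems~\ref{fitness-levels-lower} and~\ref{fitness-levels-upper-refined}. Concretely, I would index levels by \lo-value: for $0\le i<k$ let $A_i$ be the set of search points of value exactly $i$, and let $A_k=\{x:\lo(x)\ge k\}$ be the target level obtained by merging all levels of value at least $k$ (legitimate by Observation~\ref{fitness-levels-observation}), so that $m=k$ up to the immaterial shift of indices in Theorems~\ref{fitness-levels-lower} and~\ref{fitness-levels-upper-refined}. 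All algorithms considered are elitist and on \lo can leave a level only by strictly increasing the fitness, so the transition graph over this partition is acyclic. The single problem-specific ingredient I would import is the ``free bits'' observation of B{\"o}ttcher et al.~\cite{BottcherDN10}: conditioned on the current individual lying in $A_i$, the bits at positions $i+2,\dots,n$ are independent and uniform, and, since every mutation operator we consider acts symmetrically on those positions, the same holds for the offspring whenever bit $i+1$ is flipped.

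The main computation is of the one-step transition probabilities. To leave $A_i$ the mutation must flip bit $i+1$ and leave bits $1,\dots,i$ untouched; write $r_i$ for the probability of this event (for the \ea, $r_i=p(1-p)^i$; for \rls, $r_i=1/n$; analogously for \rea and \mea). Conditioned on this event, the offspring lies in level $i+1+Z$, where $Z\ge 0$ is the length of the block of ones that starts at position $i+2$; by the free-bits observation $\Pr[Z\ge t]=2^{-t}$. Hence for $i<j<k$ the transition $A_i\to A_j$ has probability $r_i\Pr[Z=j-i-1]=r_i\,2^{i-j}=r_i\gamma_{i,j}$, while $A_i\to A_k$ has probability $r_i\Pr[Z\ge k-i-1]=r_i\,2^{i-k+1}=r_i\gamma_{i,k}$. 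So the stated $\gamma_{i,j}$ describe the transitions exactly, with $u_i=s_i=r_i$; this exactness is also why the downstream bounds are tight rather than one-sided.

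It then remains to verify the arithmetic hypotheses on the $\gamma$'s. Summing a geometric tail gives $\sum_{j=i+1}^{k-1}2^{i-j}=1-2^{i-k+1}$, and adding $\gamma_{i,k}=2^{i-k+1}$ yields $\sum_{j=i+1}^{k}\gamma_{i,j}=1$. For the domination condition, for every $i<j<k$ one computes $\sum_{\ell=j}^{k}\gamma_{i,\ell}=2^{i-j+1}$, hence $\gamma_{i,j}=2^{i-j}=\frac12\sum_{\ell=j}^{k}\gamma_{i,\ell}$; thus both $\gamma_{i,j}\ge\chi\sum_{\ell\ge j}\gamma_{i,\ell}$ (needed by Theorem~\ref{fitness-levels-lower}) and $\gamma_{i,j}\le\chi\sum_{\ell\ge j}\gamma_{i,\ell}$ (needed by Theorem~\ref{fitness-levels-upper-refined}) hold with equality at $\chi=\frac12$, and the boundary case $j=k$ reduces to the trivial $\gamma_{i,k}\ge\frac12\gamma_{i,k}$. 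The one remaining hypothesis of Theorem~\ref{fitness-levels-upper-refined}, namely $(1-\chi)s_i\le s_{i+1}$, unwinds to $r_i/2\le r_{i+1}$, which I would dispatch by a one-line check per operator (for the \ea, $r_{i+1}/r_i=1-p\ge\frac12$ once $p\le\frac12$; for \rls, $r_{i+1}/r_i=1$).

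I expect the only genuinely delicate step to be the free-bits observation, specifically the point that the rejected and fitness-neutral moves made while the algorithm sits at levels $\le i$ reveal nothing about positions $i+2,\dots,n$; this is precisely the argument already carried out in~\cite{BottcherDN10} (and in distributional form in~\cite{Doerr19tcs}), which I would cite rather than reproduce. Everything else is bookkeeping, and because the arithmetic inequalities are in fact equalities, the same $\gamma_{i,j}$ simultaneously power the lower bound of Theorem~\ref{fitness-levels-lower} and the upper bound of Theorem~\ref{fitness-levels-upper-refined}, giving the exact fixed-target expressions of Theorem~\ref{result:lo:exact}.
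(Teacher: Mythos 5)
Your proposal is correct and follows essentially the same route as the paper: both rest on the free-bits observation for \lo, the telescoping of the geometric tail when levels $k,\dots,n$ are merged, and the verification that $\chi=1/2$ still works. The only cosmetic difference is that you verify the conditions $\sum_j\gamma_{i,j}=1$ and $\gamma_{i,j}=\chi\sum_{\ell\ge j}\gamma_{i,\ell}$ by direct computation (and also check the $(1-\chi)s_i\le s_{i+1}$ hypothesis, which the lemma does not strictly require), whereas the paper cites~\cite[Theorem~5]{Sudholt13} for the unmerged case and argues that merging changes nothing.
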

\begin{proof} 
    Recall that for \lo, if $i$ is the fitness of an individual, its first $i+1$ bits are known, and the bits at indices $\{i+2, \ldots, n\}$ are uniformly distributed.
	Thus, conditioned on mutation being applied and the fitness being improved, the probability of transferring to fitness level $j$ is $\gamma_{i,j} = 2^{-j+i+[j=n]}$.
	Due to~\cite[Theorem~5]{Sudholt13}, the constant $\chi$ to be used in the theorems above is equal to $1/2$.
    
	In the fixed-target context, the transition probabilities $\gamma_{i,j}$ where $j < k$ remain the same,
    as the underlying process is not changed, whereas since fitness levels $k$, $k+1$, \ldots, $n$ merge,
    the probability $\gamma_{i,k} = \gamma^{\text{old}}_{i,k} + \gamma^{\text{old}}_{i,k+1} + \ldots$
    results in telescoping the inverse powers of two. Since fitness level merging effectively reduces the problem size from $n$ to $k$
    without affecting anything else, the constant value $\chi = 1/2$ also remains valid. 
\end{proof}

\begin{lemma}
    Assume that $q_i$ is the probability for algorithm $\mathcal{A}$ to flip a given bit while not flipping any of some other $i$~given bits.
    The expected time for $\mathcal{A}$ to reach a target of at least $k$ on \lo of size $n$ is:\label{lemma-lo-sum}
    \begin{equation}
        \sum_{i=0}^{k-1} P[\mathcal{A} \text{ starts with fitness } i] \cdot \left(\frac{1}{q_i} + \frac{1}{2} \sum_{j=i+1}^{k-1} \frac{1}{q_j}\right)
        = \frac{1}{2} \sum_{i=0}^{k-1} \frac{1}{q_i}.
    \end{equation}
\end{lemma}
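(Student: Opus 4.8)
The plan is to verify that Theorems~\ref{fitness-levels-lower} and~\ref{fitness-levels-upper-refined} both apply with the same parameters, so that the lower and upper bounds on the expected hitting time coincide and give the exact value. First I would set up the fitness-level partition in which level~$i$ (for $0 \le i \le k-1$) consists of search points with exactly $i$ leading ones, and the final merged level collects all points with at least $k$ leading ones; by Observation~\ref{fitness-levels-observation} this merging is legitimate. Next I would identify the rates: conditioned on the event that a given bit is flipped while $i$ specified other bits are kept, the fitness strictly improves, so the one-step transition probability from level~$i$ to some higher level is exactly $q_i$. Setting $s_i = u_i = q_i$ and taking the $\gamma_{i,j}$ and $\chi = 1/2$ from Lemma~\ref{lemma-lo-params}, the hypotheses of both theorems are met: Lemma~\ref{lemma-lo-params} already checks the $\gamma$-conditions and $\chi$, and since all $q_i$ are equal to the common value $q_i$ in the role of both $s_i$ and $u_i$, the monotonicity requirement $(1-\chi)s_i \le s_{i+1}$ of Theorem~\ref{fitness-levels-upper-refined} reduces to $\tfrac12 q_i \le q_{i+1}$, which I would verify using the explicit formulas for $q_i$ for each algorithm in the \ea family (for \lo of size $n$, the relevant $q_i$ are monotone or vary slowly enough that this holds).

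With the hypotheses verified, both theorems yield the \emph{same} expression
\[
\sum_{i=0}^{k-1} P[\mathcal{A}\text{ starts with fitness }i]\cdot\left(\frac{1}{q_i} + \frac{1}{2}\sum_{j=i+1}^{k-1}\frac{1}{q_j}\right),
\]
squeezing the expected hitting time to exactly this value. The remaining task is the purely algebraic simplification of this sum to $\tfrac12\sum_{i=0}^{k-1} 1/q_i$. For this I would exchange the order of summation: each term $1/q_j$ with $0 \le j \le k-1$ appears once from the ``$1/q_i$'' part (namely when $i=j$, weighted by $P[\text{start at }j]$) and with coefficient $\tfrac12$ from each ``$\tfrac12\sum_{j>i}$'' part with $i<j$. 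Summing the starting-probability weights telescopes: for a fixed $j$, the total coefficient of $1/q_j$ is $P[\text{start at }j] + \tfrac12\sum_{i<j}P[\text{start at }i] = P[\text{start at }j] + \tfrac12 P[\text{start with fitness}<j]$. Since for \lo the bits after the leading block are uniform, $P[\text{start with fitness}=j] = 2^{-(j+1)}$ for $j<n$ and $P[\text{start with fitness}<j] = 1 - 2^{-j}$; substituting gives $2^{-(j+1)} + \tfrac12(1-2^{-j}) = \tfrac12$, so the coefficient of each $1/q_j$ is exactly $\tfrac12$, as claimed.

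The main obstacle is not conceptual but bookkeeping: one must be careful that the ``start with fitness $i$'' distribution for a random initial string is exactly the geometric-type distribution $P[=i]=2^{-(i+1)}$ (which is what makes the telescoping collapse to the clean constant $\tfrac12$), and one must check the monotonicity condition $\tfrac12 q_i \le q_{i+1}$ separately for RLS, the \ea, the \mea, and the \rea using the $q$-formulas listed in Section~\ref{sec:preliminaries}. I expect all of these checks to go through routinely, since $q_i$ depends on $i$ only through the ``do not flip these $i$ bits'' factor $(1-p)^{i}$-type term, which decreases by a factor at most $(1-p) \ge \tfrac12$ per unit of $i$ for the relevant parameter ranges.
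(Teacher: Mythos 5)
Your proposal is correct and follows essentially the same route as the paper's proof: apply Theorems~\ref{fitness-levels-lower} and~\ref{fitness-levels-upper-refined} with the parameters of Lemma~\ref{lemma-lo-params} and Observation~\ref{fitness-levels-observation} to squeeze the hitting time, then collapse the sum using $P[\mathcal{A}\text{ starts with fitness }i]=2^{-i-1}$ by reordering the summation (the paper delegates this last step to the calculation in B\"ottcher et al.). The only difference is that you spell out details the paper leaves to citations, namely the coefficient-of-$1/q_j$ bookkeeping and the hypothesis $(1-\chi)s_i\le s_{i+1}$ of Theorem~\ref{fitness-levels-upper-refined}, both of which check out.
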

\begin{proof}
    The left-hand side of the lemma statement follows from Lemma~\ref{lemma-lo-params},
    Theorems~\ref{fitness-levels-lower} and~\ref{fitness-levels-upper-refined},
    and Observation~\ref{fitness-levels-observation}.
    The right-hand side follows by recalling that in \lo,
    it holds for all considered algorithms that $P[\mathcal{A} \text{ starts with fitness } i] = 2^{-i-1}$,
    and by reordering the sums as in~\cite{BottcherDN10}.

    We can also derive this result from~\cite[Corollary 4]{Doerr19tcs}. 
\end{proof}

\begin{theorem}
    The expected fixed-target time to reach a target of at least $k$ when optimizing \lo of size $n$ is exactly \label{result:lo:exact}
    \begin{itemize}
        \item $\frac{kn}{2}$ for RLS;
        \item $\frac{(1-p)^{1-k} - (1-p)}{2p^2}$ for the \ea with mutation probability $p$;
        \item $\frac{(1-p)^{1-k} - (1-p)}{2p^2}(1 - (1-p)^n)$ for the \rea with mutation probability $p$;
        \item $\frac{1}{2} \cdot \sum_{i=0}^{k-1} \frac{1}{p(1-p)^i + \frac{1}{n}(1-p)^n}$ for the \mea with mutation probability $p$; 
        \item $\frac{1}{2} C_n^{\beta} \cdot \sum_{i=0}^{k-1} \frac{1}{\sum_{j=1}^{n/2} j^{-\beta} \frac{j}{n} (1 - \frac{j}{n})^i}$ for the \eab.
    \end{itemize}
\end{theorem}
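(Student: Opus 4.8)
The plan is to obtain all five formulas as immediate consequences of Lemma~\ref{lemma-lo-sum}. That lemma already reduces the expected fixed-target time on \lo of any algorithm $\mathcal{A}$ from the families we consider to the single quantity $\frac{1}{2}\sum_{i=0}^{k-1} 1/q_i$, where $q_i$ is the probability that one mutation of $\mathcal{A}$ flips one prescribed bit while leaving $i$ other prescribed bits untouched. So the proof splits, for each of \rls, the \ea, the \rea, the \mea, and the \eab, into exactly two routine steps: identify $q_i$, then evaluate the sum. I would preface this by checking once that the hypotheses of Lemma~\ref{lemma-lo-sum} (inherited from Lemma~\ref{lemma-lo-params} and Observation~\ref{fitness-levels-observation}) hold for all five algorithms. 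This is the case because those hypotheses only concern \lo itself --- conditioned on being at fitness level $i$, the bits beyond the current prefix are uniformly distributed, so the one-step improvement probabilities have the form $\gamma_{i,j}=2^{i-j}$ with $\chi=1/2$ --- together with the fact that a uniformly random initial string has fitness exactly $i$ with probability $2^{-i-1}$ for $i\in[0..k-1]$; neither depends on the mutation operator, in particular not on the \eab possibly flipping many bits at once.

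For the first step, the $q_i$ are the $i$-parametrised analogues of the probabilities $q$ already recorded in Section~\ref{sec:preliminaries}, which are the special case $i=n-1$:
\begin{itemize}
\item \rls flips one uniformly random bit, so $q_i = 1/n$;
\item the \ea flips the prescribed bit (probability $p$) and keeps each of the $i$ forbidden bits (probability $1-p$ each, independently), so $q_i = p(1-p)^i$;
\item the \rea does the same conditioned on flipping at least one bit, an event implied by the one being measured, so $q_i = p(1-p)^i/(1-(1-p)^n)$;
\item the \mea behaves like the \ea when the underlying $B(n,p)$ sample is positive and, with probability $(1-p)^n$, flips one uniform bit otherwise, so $q_i = p(1-p)^i + (1-p)^n/n$;
\item the \eab draws $a\sim\mathcal{H}_\beta$ and then mutates with rate $a/n$, so conditioning on $a$ gives $q_i = (C_{n/2}^\beta)^{-1}\sum_{j=1}^{n/2} j^{-\beta}\,\tfrac{j}{n}\bigl(1-\tfrac{j}{n}\bigr)^i$.
\end{itemize}
Each is a one-line computation; the only one needing a little care is the \eab, where $\mathcal{M}$ is a compound (two-stage) distribution, not a conditional one, so one conditions on the outer draw $a$ before applying the binomial estimate.

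For the second step, plug each $q_i$ into $\tfrac12\sum_{i=0}^{k-1} 1/q_i$. For \rls this is $\tfrac12 \cdot kn = kn/2$. For the \ea the summand is $\tfrac{1}{2p}(1-p)^{-i}$ and the sum is geometric: $\sum_{i=0}^{k-1}(1-p)^{-i} = \frac{(1-p)^{-k}-1}{(1-p)^{-1}-1} = \frac{(1-p)\bigl((1-p)^{-k}-1\bigr)}{p}$, which after multiplying by $\tfrac{1}{2p}$ gives $\frac{(1-p)^{1-k}-(1-p)}{2p^2}$; for the \rea this is multiplied by the constant $1-(1-p)^n$. For the \mea and the \eab the denominators do not telescope, so the answers stay as the finite sums claimed in the theorem; the substitution $i\mapsto n-i$ rewrites the \mea sum into the form appearing in Table~\ref{table:lo:summary}, and for the \eab one simply carries the normalising constant $C_{n/2}^\beta$ of $\mathcal{H}_\beta$ out in front.

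I do not foresee any real obstacle: all the content is in Lemmas~\ref{lemma-lo-params} and~\ref{lemma-lo-sum}, and what remains is bookkeeping plus one geometric-series identity. The one point I would state explicitly in the write-up is that the \lo free-rider property underlying Lemma~\ref{lemma-lo-params} --- uniformity of the bits following the leading-ones prefix --- holds verbatim for the \eab despite its heavy-tailed, multi-bit mutation, and once this is noted the five formulas fall out.
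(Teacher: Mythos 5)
Your proposal is correct and follows essentially the same route as the paper's proof: invoke Lemma~\ref{lemma-lo-sum}, identify the level-dependent probabilities $q_i$ for each of the five operators, and simplify the resulting sums (geometric for the \ea and the \rea). In fact your \rea expression $q_i = p(1-p)^i/(1-(1-p)^n)$ is the one consistent with the preliminaries and with the claimed bound, whereas the paper's printed proof writes this factor as a multiplication rather than a division, which appears to be a typo.
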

\begin{proof}
    We use Lemma~\ref{lemma-lo-sum} and note that, for a fitness of $i$:
    \begin{itemize}
        \item for RLS, $q_i = 1/n$;
        \item for the \ea, $q_i = (1-p)^i \cdot p$;
        \item for the \rea, $q_i = (1-p)^i \cdot p \cdot (1-(1-p)^n)$;
        \item for the \mea, $q_i = (1-p)^i \cdot p + \frac{1}{n} (1-p)^n$;
        \item for the \eab, $q_i = (C_n^{\beta})^{-1} \cdot \sum_{j=1}^{n/2} j^{-\beta} \frac{j}{n} (1 - \frac{j}{n})^i$.
    \end{itemize}
    The sums for the \ea and the \rea are simplified as in~\cite{BottcherDN10}. 
\end{proof}

Note that the presented expression for the \eab is complicated, and little can be understood from it about its behavior, and in particular about the dependency on $\beta$.
The simplified upper bound can be obtained by considering only $j=1$, in which case it is greater by a factor of $C_n^{\beta}$ than the corresponding fixed-target time
for the \ea, and this factor is a constant for $1 < \beta < 2$.
Lower bounds, however, require much more work, as, for instance, $q_i$ for $i = \Theta(1)$ is $\Theta(n^{1-\beta})$, so the \eab performs the first improvements much faster
than more conventional versions of the \ea. It is hence quite hard to obtain closed-form sharp bounds, which we leave for possible future work.


\subsubsection{Hill Climbers on \om, Upper Bounds}

We now re-prove the existing results for the \ea variants on \om. Our results
for the case of random initialization of an algorithm are sharper than in~\cite{fixed-target-gecco19},
because we use the following argument about the weighed sums of harmonic numbers.

\begin{lemma}
    The following equality holds:\label{lemma-choose-sum}
    \begin{equation*}
        \sum_{i=0}^n \frac{\binom{n}{i} H_i}{2^n} = H_n - \sum_{k=1}^n \frac{1}{k 2^k} = H_n - \ln 2 + O(2^{-n}) = H_{n/2} - \frac{1-o(1)}{2n}.
    \end{equation*}
\end{lemma}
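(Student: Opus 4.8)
The plan is to establish the four expressions in the claimed chain of equalities in order, each one following from the previous by a short computation. The starting point is the identity $\sum_{i=0}^n \binom{n}{i} H_i / 2^n = H_n - \sum_{k=1}^n \frac{1}{k2^k}$. To prove it, I would interchange the order of summation after writing $H_i = \sum_{j=1}^i \frac1j$: the left-hand side becomes $2^{-n}\sum_{j=1}^n \frac1j \sum_{i=j}^n \binom{n}{i}$. The inner sum $\sum_{i=j}^n \binom{n}{i}$ is the number of subsets of $[n]$ of size at least $j$; rather than evaluate it directly it is cleaner to recognize the whole expression probabilistically: if $X \sim B(n,1/2)$, then $\sum_{i=0}^n \binom{n}{i} H_i / 2^n = \mathbb{E}[H_X] = \mathbb{E}\left[\sum_{j=1}^n \frac1j [X \ge j]\right] = \sum_{j=1}^n \frac1j \Pr[X \ge j]$. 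Then $\Pr[X \ge j] = 1 - \Pr[X \le j-1]$, and one wants $\sum_{j=1}^n \frac1j(1 - \Pr[X\le j-1]) = H_n - \sum_{j=1}^n \frac1j\Pr[X \le j-1]$. So it remains to show $\sum_{j=1}^n \frac1j \Pr[X \le j-1] = \sum_{k=1}^n \frac{1}{k2^k}$, which is a finite identity about binomial tails that I would verify by a symmetry/telescoping argument or by induction on $n$ (splitting $B(n,1/2)$ as a sum of $B(n-1,1/2)$ and an independent bit).

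Next, the second equality $H_n - \sum_{k=1}^n \frac{1}{k2^k} = H_n - \ln 2 + O(2^{-n})$ follows from the Mercator series $\ln 2 = \sum_{k=1}^\infty \frac{1}{k 2^k}$, since the tail $\sum_{k=n+1}^\infty \frac{1}{k2^k}$ is bounded by $\sum_{k=n+1}^\infty 2^{-k} = 2^{-n}$, hence is $O(2^{-n})$.

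Finally, the third equality $H_n - \ln 2 + O(2^{-n}) = H_{n/2} - \frac{1-o(1)}{2n}$ is an asymptotic comparison. I would use $H_m = \ln m + \gamma + \frac{1}{2m} + O(m^{-2})$: then $H_n - H_{n/2} = \ln 2 + \frac{1}{2n} - \frac{1}{n} + O(n^{-2}) = \ln 2 - \frac{1}{2n} + O(n^{-2})$, so $H_n - \ln 2 = H_{n/2} - \frac{1}{2n} + O(n^{-2}) = H_{n/2} - \frac{1-o(1)}{2n}$, and the $O(2^{-n})$ term is absorbed.

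The main obstacle is the finite identity $\sum_{j=1}^n \frac1j \Pr[B(n,1/2) \le j-1] = \sum_{k=1}^n \frac{1}{k2^k}$ (equivalently the clean evaluation of $\sum_{i=j}^n \binom{n}{i}$ contributions): it is the one step that is not purely mechanical, and getting the index bookkeeping right — especially checking that the finite sum truncates at exactly $k=n$ with no leftover terms — is where care is needed. Everything after that is standard harmonic-number and geometric-tail estimation.
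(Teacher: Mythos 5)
Your overall route (rewriting the left-hand side probabilistically as $E[H_X]$ for $X \sim B(n,1/2)$, then the Mercator series for $\ln 2$, then the expansion $H_m = \ln m + \gamma + \frac{1}{2m} + O(m^{-2})$) is sound, and your treatment of the second and third equalities is correct. Note that the paper does not prove the first equality itself either: it just cites \cite[Sec.~2.5]{DoerrD16} together with a known combinatorial identity. But that first equality is exactly where your write-up has a genuine gap: you reduce it to the finite identity $\sum_{j=1}^n \frac{1}{j}\Pr[B(n,1/2) \le j-1] = \sum_{k=1}^n \frac{1}{k2^k}$ and then merely assert that it can be checked ``by a symmetry/telescoping argument or by induction.'' That identity \emph{is} the nontrivial content of the lemma, and the induction you sketch does not close as stated: writing $X_n = X_{n-1} + B$ with $B$ a fair bit gives $T_n = \tfrac12 T_{n-1} + \tfrac{1}{2n} + \tfrac12 \sum_{j=1}^{n-1} \tfrac{1}{j+1}\Pr[X_{n-1}\le j-1]$, so the desired step $T_n - T_{n-1} = \tfrac{1}{n2^n}$ is equivalent to the further nontrivial identity $\sum_{j=1}^{n-1}\frac{1}{j(j+1)}\Pr[B(n-1,1/2)\le j-1] = \frac{1}{n}\bigl(1-2^{1-n}\bigr)$, which you would again have to prove. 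As written, the key equality is assumed rather than established.

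The gap is not hard to close, and one fix stays close to your probabilistic reformulation: use $H_i = \int_0^1 \frac{1-t^i}{1-t}\,dt$, so that $E[H_X] = \int_0^1 \frac{1-((1+t)/2)^n}{1-t}\,dt$; the substitution $u = (1+t)/2$ turns this into $\int_{1/2}^1 \frac{1-u^n}{1-u}\,du = \sum_{k=0}^{n-1}\frac{1-2^{-(k+1)}}{k+1} = H_n - \sum_{k=1}^n \frac{1}{k2^k}$. Alternatively, set $A_n = \sum_{i=0}^n \binom{n}{i}H_i$ and use Pascal's rule together with $\sum_{i=0}^n \binom{n}{i}\frac{1}{i+1} = \frac{2^{n+1}-1}{n+1}$ to get $A_{n+1} = 2A_n + \frac{2^{n+1}-1}{n+1}$, which telescopes (with $A_0=0$) to $A_n = 2^n\bigl(H_n - \sum_{k=1}^n \frac{1}{k2^k}\bigr)$. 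Either of these, or an explicit citation as the paper gives, is needed for a complete proof. A cosmetic point: for odd $n$ the quantity $H_{n/2}$ should be read as $H_{\lfloor n/2 \rfloor}$ (or as in the cited source); your asymptotic step is unaffected by this.
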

\begin{proof}
    Proven in~\cite[Sec.~2.5]{DoerrD16} with~\cite[Identity 14]{combinatorial-sums-finite-differences}. 
\end{proof}

Our results are formalized as follows.

\begin{theorem}
    The expected fixed-target time for Algorithm~\ref{algo:mpl} with $\mu = \lambda = 1$,
    whose mutation distribution $\mathcal{M}$ selects a single bit to flip with probability $q$,
    to reach a target of at least $k$ on \om of size $n$ is at most:
    \label{result:om:up}
    \begin{itemize}
        \item $\frac{1}{q} (H_n - H_{n-k})$ when initializing with the worst solution;
        \item $\frac{1}{q} (H_{n/2} - H_{n-k} - \frac{1-o(1)}{2n})$ when initializing randomly, assuming $k \ge n/2 + 2\sqrt{n \ln n}$.
    \end{itemize}
\end{theorem}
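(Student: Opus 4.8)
The plan is to apply the fitness-level method via Theorem~\ref{fitness-levels-upper}, using the natural $f$-based partition for \om in which level $A_i$ consists of all bit strings with exactly $i$ ones (for $0 \le i \le n$), and then merging levels $k, k+1, \ldots, n$ into a single target level as licensed by Observation~\ref{fitness-levels-observation}. The one-step improvement probability when the current best individual has $i$ ones is at least the probability of flipping one of the $n-i$ zero bits while touching no other bit, which is $(n-i) q$; hence we may take $p_i = (n-i)q$. Theorem~\ref{fitness-levels-upper} then gives the expected hitting time of the merged target as $\sum_{i=0}^{k-1} P[\text{start in }A_i] \sum_{j=i}^{k-1} \frac{1}{(n-j)q} = \frac{1}{q}\sum_{i=0}^{k-1} P[\text{start in }A_i]\,(H_{n-i} - H_{n-k})$.

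For the first item, worst-case initialization means the algorithm starts in $A_0$ with probability $1$, so the sum collapses to $\frac{1}{q}(H_n - H_{n-k})$, which is exactly the claimed bound.

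For the second item, random initialization gives $P[\text{start in }A_i] = \binom{n}{i}/2^n$, so the bound becomes $\frac{1}{q}\left(\sum_{i=0}^{n}\frac{\binom{n}{i}}{2^n} H_{n-i} - H_{n-k}\right)$ — here I would first argue that the contribution of levels $i$ with $i \ge k$ is negligible (or absorb them harmlessly, since merging only decreases the hitting time, and for $k \ge n/2 + 2\sqrt{n\ln n}$ the probability mass of $\{i \ge k\}$ is $2^{-\Omega(\log^2 n)}$ by a Chernoff bound, contributing at most an $o(1/n)$ additive term after multiplication by the $O(\log n)$ harmonic factor and division by $q$). By symmetry of the binomial, $\sum_{i=0}^{n}\frac{\binom{n}{i}}{2^n} H_{n-i} = \sum_{i=0}^{n}\frac{\binom{n}{i}}{2^n} H_{i}$, which by Lemma~\ref{lemma-choose-sum} equals $H_{n/2} - \frac{1-o(1)}{2n}$. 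Substituting yields $\frac{1}{q}\left(H_{n/2} - H_{n-k} - \frac{1-o(1)}{2n}\right)$, as claimed.

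The main obstacle is purely bookkeeping: making the condition $k \ge n/2 + 2\sqrt{n\ln n}$ do its job. One must verify that the tail mass beyond level $k$, together with the fact that a random start above level $k$ means the target is already met (contributing time $0$), can indeed be folded into the stated $(1-o(1))$ without corrupting the $-\frac{1}{2n}$ term; this needs a Chernoff estimate showing $\Pr[\mathrm{Bin}(n,1/2) \ge k] = n^{-\omega(1)}$ under the hypothesis, so that even after the $\frac{1}{q} = O(n)$ (or $\Theta(n/\log n)$ for \eab-type $q$) and $O(\log n)$ harmonic factors the error is $o(1/n)$. Everything else is a direct instantiation of Theorem~\ref{fitness-levels-upper}, Observation~\ref{fitness-levels-observation}, and Lemma~\ref{lemma-choose-sum}.
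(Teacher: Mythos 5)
Your proposal is correct and takes essentially the same route as the paper's proof: the same application of Theorem~\ref{fitness-levels-upper} with improvement probabilities $(n-i)q$, the same extension of the initialization sum over all levels combined with Lemma~\ref{lemma-choose-sum} (via binomial symmetry), and the same Chernoff treatment of the tail $i \ge k$. One quantitative slip to fix in the bookkeeping: under $k \ge n/2 + 2\sqrt{n\ln n}$ the tail probability $\Pr[\mathrm{Bin}(n,1/2)\ge k]$ is only polynomially small (the paper bounds it by $n^{-8/3}$), not $2^{-\Omega(\log^2 n)}$, and you do not need $n^{-\omega(1)}$ --- since the factor $1/q$ is common to all terms, it suffices that $O(\log n)\cdot\Pr[\mathrm{Bin}(n,1/2)\ge k] = o(1/n)$, which the polynomial bound already gives.
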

\begin{proof}
    Let $s_i$ be the probability for the algorithm to be initialized at fitness $i$.
    Assuming pessimistically that the fitness does not improve when two and more bits are flipped, we apply Theorem~\ref{fitness-levels-upper} to get the following upper bound:
    \begin{equation*}
        \sum_{i=0}^{k-1} s_i \cdot \sum_{j=i}^{k-1} \frac{1}{q (n - j)} =\frac{1}{q} \sum_{i=0}^{k-1} s_i \cdot (H_{n-i} - H_{n-k})
        \le \frac{H_n - H_{n-k}}{q}.
    \end{equation*}

    The pessimistic bound above proves the theorem for the algorithms initialized with the worst solution.
    For the random initialization, we note that the initial search point has a fitness $i$ with the probability of $\binom{n}{i} / 2^n$.
    From the equality above we derive:
    \begin{align*}
        &\sum_{i=0}^{k-1} \frac{\binom{n}{i} (H_{n-i} - H_{n-k})}{q 2^n}\\
        &\le \sum_{i=0}^{n} \frac{\binom{n}{i} (H_{n-i} - H_{n-k})}{q 2^n} + \sum_{i=k}^n \frac{\binom{n}{i} (H_{n-k} - H_{n-i})}{q 2^n}\\
        &= \sum_{i=0}^{n} \frac{\binom{n}{i} H_{n-i}}{q 2^n} - \sum_{i=0}^{n} \frac{\binom{n}{i} H_{n-k}}{q 2^n} + \frac{1}{q}\sum_{i=k}^n \frac{\binom{n}{i} (H_{n-k} - H_{n-i})}{2^n}\\
        &= \frac{1}{q} \left(H_{n/2} - H_{n-k} - \frac{1-o(1)}{2n}\right) + \frac{1}{q} \sum_{i=k}^n \frac{\binom{n}{i} (H_{n-k} - H_{n-i})}{2^n},
    \end{align*}
    where the last transformation uses Lemma~\ref{lemma-choose-sum}. The second addend is $o(1/(qn))$, because
    $H_{n-k} - H_{n-i} = O(\ln n)$ when $i \ge k$, which is further multiplied by $O(n^{-8/3})$, since
    for $k \ge n/2 + 2\sqrt{n \ln n}$ it holds that \[\frac{1}{2^n} \sum_{i=k}^n \binom{n}{i} \le \exp\left(-\frac{16 \frac{\ln n}{n} \frac{n}{2}}{3}\right) = n^{-8/3}\] by the Chernoff bound.
    As a result, the fixed-target upper bound for the random initialization is $\frac{1}{q} (H_{n/2} - H_{n-k} - \frac{1-o(1)}{2n})$. 
\end{proof}


\subsubsection{Hill Climbers on \om, Lower Bounds}

We also apply fitness-levels to the lower bounds to improve the result from~\cite{LenglerS15}.
\begin{theorem}[Lower fixed-target bounds on \ea for \om]\label{onemax-lower-bounds-levels}
	For mutation probability $p \le 1/(\sqrt{n} \log n)$, and assuming that
	$\ell = \ceil{n-\min\{n/\log n, 1/(p^2n\log n)\}}$, the expected time 
	to find a solution with fitness at least $k \ge \ell$ is at least
	\begin{equation*}
	    \left(1 - \frac{18/5}{(1-p)^2 \log n}\right) \frac{H_{n-\ell} - H_{n-k}}{p(1-p)^{n-1}}.
	\end{equation*}
\end{theorem}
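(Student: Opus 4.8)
The plan is to apply the fitness-level lower bound, Theorem~\ref{fitness-levels-lower}, to a partition adapted to the target $k$ and to read off the claimed estimate after discarding the contributions that do not matter. Use the partition $A_i := \{x : \om(x) = i\}$ for $0 \le i \le k-1$ and $A_k := \{x : \om(x) \ge k\}$, ordered by \om-value; since the \ea is elitist it only ever moves upward in this order. For $\ell \le i \le k-1$ put $\gamma_{i,j} := \Pr[i \to j]/\Pr[\text{leave } i]$, where $\Pr[i\to j]$ is the probability that one iteration from an \om-value $i$ point produces \om-value exactly $j$ (for $j<k$), resp.\ at least $k$ (for $j=k$), and $\Pr[\text{leave }i] = \sum_{j>i}\Pr[i\to j]$; then $\sum_j\gamma_{i,j}=1$ and the hypothesis $\gamma_{i,j}\ge\chi\sum_{j'\ge j}\gamma_{i,j'}$ becomes $\Pr[i\to j]\ge\chi\,\Pr[i\to{\ge}\,j]$. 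For $0 \le i < \ell$ the terms $1/u_i$ will be thrown away, so the only requirement on $(\gamma_{i,\cdot},u_i)$ is compatibility with the $\chi$ chosen below; the truncated geometric weights $\gamma_{i,i+t}=\chi(1-\chi)^{t-1}$ for $1\le t\le k-i-1$, $\gamma_{i,k}=(1-\chi)^{k-i-1}$, together with $u_i := (1-\chi)^{-(k-i-1)}$, satisfy every hypothesis of Theorem~\ref{fitness-levels-lower} (the $\chi$-constraint holds with equality, $\sum_j\gamma_{i,j}=1$, and $u_i\gamma_{i,j}\ge 1\ge\Pr[i\to j]$ once $\chi\ge\frac12$), and $u_i$ trivially bounds the minuscule probability of leaving $A_i$.

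The first quantitative ingredient is a common $\chi$. For $\ell\le i<j$ one bounds $\Pr[i\to{\ge}\,j{+}1]$ against $\Pr[i\to{\ge}\,j]$: in both, the dominant pattern is a one-step flip of exactly the required number of $0$-bits and nothing else, so the ratio is at most $\frac{(n-j)p}{2(1-p)}(1+o(1))\le\frac{(n-\ell)p}{2(1-p)}(1+o(1))$, whence, using $\Pr[i\to j]=\Pr[i\to{\ge}\,j]-\Pr[i\to{\ge}\,j{+}1]$, one may take $\chi \ge 1-\frac{1+o(1)}{2(1-p)\log n}$. Here and below one uses repeatedly that, by the choice $\ell=\lceil n-\min\{n/\log n,\,1/(p^2n\log n)\}\rceil$ and $p\le 1/(\sqrt n\log n)$, both $(n-\ell)p\le 1/\log n$ and $np\cdot(n-\ell)p\le 1/\log n$ (the latter since $n-\ell\le 1/(p^2n\log n)$), and in particular $\chi>\frac12$ for $n$ large.

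The second, and main, ingredient is an upper bound on $u_i := \Pr[\text{leave }i]$ for $\ell\le i\le k-1$ that does \emph{not} shed the factor $(1-p)^{n-1}$. Writing $A\sim\mathrm{Bin}(n-i,p)$ for the number of flipped $0$-bits and $B\sim\mathrm{Bin}(i,p)$ for the flipped $1$-bits, $\Pr[\text{leave }i]=\Pr[A>B]=\sum_{b\ge0}\Pr[B=b]\,\Pr[A\ge b+1]$; bounding $\Pr[A\ge b+1]\le\binom{n-i}{b+1}p^{b+1}$ and, crucially, keeping the factor $(1-p)^{i-b}$ of $\Pr[B=b]$ intact gives
\begin{equation*}
\Pr[\text{leave }i]\ \le\ (n-i)\,p\,(1-p)^{i}\Bigl(1+\sum_{b\ge1}\frac{1}{(b+1)(b!)^2}\Bigl(\tfrac{ip\,(n-i-1)p}{1-p}\Bigr)^{\!b}\Bigr).
\end{equation*}
Since $ip\,(n-i-1)p\le np\,(n-\ell)p\le 1/\log n$, the series is $O\bigl(1/((1-p)\log n)\bigr)$; and $(1-p)^{i}=(1-p)^{n-1}(1-p)^{-(n-1-i)}$ with $(1-p)^{-(n-\ell)}\le\exp\bigl(\tfrac{(n-\ell)p}{1-p}\bigr)\le 1+O\bigl(1/((1-p)\log n)\bigr)$. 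Hence $u_i:=(n-i)\,p\,(1-p)^{n-1}\bigl(1+O(1/((1-p)\log n))\bigr)$ is admissible on the whole range $\ell\le i\le k-1$.

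It remains to assemble. For the standard initialisations (the all-zeros string, or uniform sampling) the algorithm starts in some $A_i$ with $i<\ell$ with probability $1-2^{-\Omega(n)}$, because $\ell\ge n(1-1/\log n)\gg n/2$ and a Chernoff bound applies to the initial \om-value. Keeping on the right-hand side of Theorem~\ref{fitness-levels-lower} only the $\chi$-tail of the bracket of each starting level $i<\ell$, and using $\sum_{i=\ell}^{k-1}\frac1{n-i}=H_{n-\ell}-H_{n-k}$,
\begin{equation*}
E[T]\ \ge\ \Bigl(\sum_{i<\ell}\Pr[\text{start in }A_i]\Bigr)\chi\sum_{i=\ell}^{k-1}\frac1{u_i}\ \ge\ \bigl(1-2^{-\Omega(n)}\bigr)\,\chi\cdot\frac{H_{n-\ell}-H_{n-k}}{p\,(1-p)^{n-1}\bigl(1+O(1/((1-p)\log n))\bigr)},
\end{equation*}
and substituting the bound on $\chi$ and collecting the two $O(1/((1-p)\log n))$ error sources and the negligible $2^{-\Omega(n)}$ term into a single, conservatively rounded term of the form $\frac{18/5}{(1-p)^2\log n}$ yields the statement. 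The step I expect to be the main obstacle is exactly the leave-probability estimate of the third paragraph: one must establish $\Pr[\text{leave }i]\le(n-i)p(1-p)^{n-1}(1+O(1/\log n))$ — the naive $\Pr[\text{leave }i]\le 1-(1-p)^{n-i}\le(n-i)p$ is hopeless once $np$ is large, as it lacks the $(1-p)^{n-1}$ factor — uniformly over $\ell\le i\le k-1$ and with constants tracked tightly enough (jointly with those in $\chi$) that the combined relative error is the explicit $\frac{18/5}{(1-p)^2\log n}$; by contrast the choice of partition, the throw-away treatment of the low levels, and the Chernoff estimate on the initial \om-value are routine.
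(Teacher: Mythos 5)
Your overall strategy is the right one, and it is in fact the same method family the paper relies on: the paper's own proof of Theorem~\ref{onemax-lower-bounds-levels} is a two-line argument that reuses the intermediate formula inside the proof of \cite[Theorem~9]{Sudholt13} (which already contains the explicit factor $1-\frac{18/5}{(1-p)^2\log n}$ and the per-level terms $\frac{1}{(n-i)p(1-p)^{n-1}}$ for levels $i\ge\ell$) and then merges all levels from $k$ upwards via Observation~\ref{fitness-levels-observation}, turning the full sum into $H_{n-\ell}-H_{n-k}$. You instead re-derive Sudholt's argument from scratch in the truncated setting, and the structural skeleton of your re-derivation is sound: the dummy geometric $\gamma$'s and huge $u_i$ for the levels below $\ell$ do satisfy all hypotheses of Theorem~\ref{fitness-levels-lower} once $\chi\ge 1/2$, the leave-probability bound that retains the $(1-p)^{n-1}$ factor is correct as outlined (the series coefficient $\frac{1}{(b+1)(b!)^2}$ and the use of $np\cdot(n-\ell)p\le 1/\log n$ check out), and the Chernoff/initialization step and the truncation at $k$ are exactly what Observation~\ref{fitness-levels-observation} formalizes.

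The genuine gap is quantitative: the theorem asserts the explicit constant $\frac{18/5}{(1-p)^2\log n}$, and your argument never establishes it. Two places are only asserted. First, the $\chi$-estimate $\Pr[i\to\ge j+1]\le\frac{(n-j)p}{2(1-p)}(1+o(1))\Pr[i\to\ge j]$ rests on a ``dominant pattern'' claim; making the $(1+o(1))$ rigorous and uniform over all $\ell\le i<j\le k$ (i.e., controlling the mixed events that also flip $1$-bits) is precisely the nontrivial bookkeeping in Sudholt's proof, and you dispose of it in one sentence while flagging the (easier, and correctly executed) leave-probability bound as the main obstacle. Second, the final step ``collecting the error sources into a conservatively rounded $\frac{18/5}{(1-p)^2\log n}$'' is an assertion, not a computation: as written your proof yields a bound of the form $\bigl(1-O\bigl(\tfrac{1}{(1-p)\log n}\bigr)\bigr)\frac{H_{n-\ell}-H_{n-k}}{p(1-p)^{n-1}}$ with unspecified constants, which is the right shape (and, asymptotically, even of the stronger $(1-p)^{-1}$ form) but does not prove the stated inequality until the hidden constants are pinned down. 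To close the gap you must either carry out the explicit constant tracking à la \cite{Sudholt13}, or do what the paper does and cite the already constant-explicit formula from that proof, adding only the level-merging step.
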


\begin{proof}
	We use the proof of \cite[Theorem 9]{Sudholt13} with Observation~\ref{fitness-levels-observation}.
	The source of this formula is located at~\cite[journal page 428, bottom of left column]{Sudholt13}.
	We use this particular formula, as it gives a good enough precision for a wide range of possible targets
	and matches the shape of Theorem~\ref{result:om:up} quite well. 
\end{proof}

We conjecture that a similar result can be derived for other versions of the \ea as well, however,
finding the exact constant factors would require additional investigation into~\cite[Theorem 9]{Sudholt13}. 


\subsubsection{The \texorpdfstring{\muea}{(mu+1) EA} on \om, Upper Bounds}

We now introduce the fixed-target bounds for the \muea using fitness levels.
For this we adapt~\cite[Theorem 2]{Witt06} to support fixed targets different from the optimum,
which makes it no longer possible to use certain transitions from the original proof.

\begin{theorem}
	Let $\mu=poly(n)$, and assume $b=\floor*{n(1-1/\mu)}$. The expected time to reach an individual with fitness at least $k>0$ on \om is: \label{result:mu+1}
	\begin{align*}
		T_k \le \mu &+ \frac{\mu}{(1-p)^n} \left(2k - 1 - (n - k) \ln \frac{n}{n-k+1}\right) \\
		            &+ \frac{\mu}{p(1-p)^{n-1}}\begin{cases}
						\frac{k}{n},                          & k \le b+1,\\
						\frac{b+1}{n}+\frac{1}{\mu}(H_{n-b-1}-H_{n-k})    & \text{otherwise}.
					\end{cases}
	\end{align*}
\end{theorem}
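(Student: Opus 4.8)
The plan is to adapt the fitness-level analysis behind \cite[Theorem~2]{Witt06}, which already bounds the optimization time of the \muea on \om, to the fixed-target setting where the algorithm only needs to reach fitness $k$ rather than $n$. The key conceptual point is that Witt's proof naturally decomposes the runtime into two regimes: (i) a ``takeover'' phase where, having just found a new best fitness value $i$, the population needs time to accumulate copies of individuals at level $i$ so that the per-step improvement probability stabilizes, and (ii) the improvement steps themselves. Both contributions are sums over fitness levels, so by Observation~\ref{fitness-levels-observation} we may simply truncate these sums at level $k$ instead of $n$, provided we are careful about the one or two levels near the new ``top'' level $k$, since in the original proof certain transitions into the optimum (e.g.\ a single remaining zero-bit being flipped, together with specific takeover arguments at the last level) are exploited and these are no longer available once $k < n$.

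First I would recall the structure of the bound: the initial $\mu$ accounts for the $\mu$ fitness evaluations spent in the initialization loop; the middle term $\frac{\mu}{(1-p)^n}\bigl(2k-1-(n-k)\ln\frac{n}{n-k+1}\bigr)$ bounds the total time spent filling the population with copies of the current best individual across all levels up to $k-1$ (the factor $\frac{1}{(1-p)^n}$ is the cost of generating a clone, i.e.\ flipping no bit, and $2k-1-(n-k)\ln\frac{n}{n-k+1}$ is the resulting telescoped sum, where the logarithmic correction reflects that near level $k$ fewer ``slots'' must be filled); and the last term $\frac{\mu}{p(1-p)^{n-1}}(\cdots)$ bounds the total time for the actual improving steps, using that once the population is saturated at level $i$ the probability of an improvement is at least $\frac{1}{\mu}\cdot(n-i)\cdot p(1-p)^{n-1}$ (choosing one of the $\mu$ best individuals, flipping one of its $n-i$ zero-bits, keeping the rest). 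The case distinction on $k \le b+1 = \floor{n(1-1/\mu)}+1$ versus $k > b+1$ mirrors Witt's original split: below $b+1$ the term $(n-i)/n \ge 1/\mu$ dominates and one pessimistically uses the crude bound $\frac{k}{n}$; above it one switches to the harmonic sum $H_{n-b-1}-H_{n-k}$, which is where the genuinely fitness-level-flavored estimate $\sum_{i=b+1}^{k-1}\frac{1}{n-i}$ appears.

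Concretely, the steps I would carry out are: (1) set up the $f$-based partition $A_i = \{x : \om(x) = i\}$ for $0 \le i < k$ and $A_k = \{x : \om(x) \ge k\}$, and verify via Observation~\ref{fitness-levels-observation} that merging $A_k,\dots,A_n$ preserves the validity of the relevant transition estimates; (2) re-derive the takeover-time bound per level — the expected number of steps to go from one copy of a level-$i$ individual to a population where improvements happen at the full rate is $O(\mu \log \mu / (1-p)^n)$ or, summed appropriately as Witt does, yields the stated $2k-1-(n-k)\ln\frac{n}{n-k+1}$ form — being careful that the transitions Witt used only at level $n$ are replaced by generic level-$i$ bounds; (3) re-derive the improvement-step bound, splitting at $b+1$ and summing $\frac{1}{n-i}$ over $i \in [b+1..k-1]$ to produce $H_{n-b-1}-H_{n-k}$, while for $i \le b$ using the pessimistic constant rate to get $\frac{b+1}{n}$ (or $\frac{k}{n}$ if $k \le b+1$); (4) add the three contributions together with the leading $\mu$.

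\textbf{The main obstacle} I anticipate is the takeover / population-dynamics part, i.e.\ correctly transplanting Witt's argument for the expected time for the \muea population to become ``uniform'' at the current best level when the top level is $k$ rather than $n$. In the original proof this is where the analysis is most delicate — one must track the number of best-so-far individuals as a random process, and near the optimum Witt uses that improvements become rare so the population has ``time'' to fill up, an argument that must be re-examined when the process instead stops at $k$. The clean-looking correction term $-(n-k)\ln\frac{n}{n-k+1}$ in the middle term of the bound is precisely the fingerprint of this re-derivation, and getting its constant and sign right is the crux; the other two terms follow from routine fitness-level summation and the case split, exactly as in Witt's and Sudholt's treatments.
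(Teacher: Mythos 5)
Your proposal follows essentially the same route as the paper's proof: it adapts Witt's Theorem~2 by decomposing the time into the initial $\mu$ evaluations, a replica/takeover phase and the improving steps, truncates both per-level sums at $k$, and retains the $\min\{\mu(n-i),n\}$ clause (rather than simplifying it as Witt does) to obtain the case split at $b+1$ — exactly the paper's argument. The only inconsequential slips are expository: the saturated improvement rate is $\frac{R}{\mu}(n-i)p(1-p)^{n-1}$ with $R=\min\{\mu, n/(n-i)\}$ replicas (not $\frac{1}{\mu}(n-i)p(1-p)^{n-1}$), and the correction term $-(n-k)\ln\frac{n}{n-k+1}$ arises simply from summing $\ln\frac{en}{n-L}$ over $L<k$ under the purely pessimistic replica assumption, so no delicate re-examination of the population dynamics near level $k$ is needed.
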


\begin{proof}
    Similarly to~\cite[Theorem 2]{Witt06}, we pessimistically assume that on every fitness level $L$ the \muea creates $R = \min\{\mu, n / (n - L)\}$ replicas of the best individual,
    and then it waits for the fitness improvement. We also assume that the \muea never improves the fitness by more than one.

    If there are $i < R$ best individuals, the probability of creating a replica is $(1 - p)^n i / \mu$, so the expected time until creating $R$ replicas is at most
    \begin{equation*}
		\frac{\mu}{(1-p)^n}\sum_{i=1}^{R-1}\frac{1}{i} \le \frac{\mu}{(1-p)^n}\sum_{i=1}^{\frac{L}{n-L}}\frac{1}{i} 
		\le \frac{\mu}{(1-p)^n}\ln{\frac{en}{n-L}}
    \end{equation*}
    and the total time the \muea spends in creating replicas is
    \begin{align*}
        T_r &\le \sum_{L=0}^{k-1} \frac{\mu}{(1-p)^n}\ln{\frac{en}{n-L}} = \frac{\mu}{(1-p)^n} \left(k \ln en + \sum_{\mathclap{x = n-k+1}}^n \ln \frac{1}{x}\right) \\
         &\le \frac{\mu}{(1-p)^n} (k \ln en + k - 1 + (n-k) \ln (n-k+1) - n\ln n)\\
         &= \frac{\mu}{(1-p)^n} \left(2k - 1 - (n - k) \ln \frac{n}{n-k+1}\right),
    \end{align*}
    where we use the condition $k>0$ and that, for all $1 \le a \le b$,
    \begin{equation*}
        \sum_{x=a}^b \ln \frac{1}{x} \le \ln \frac{1}{a} + \int_a^b \ln \frac{1}{x} dx = \ln \frac{1}{a} + \left[x + x \ln \frac{1}{x}\right]_a^b.
    \end{equation*}

	Concerning the fitness gain, if there are $R$ replicas of the best individual with fitness $L$, the probability 
	of creating new offspring with fitness $L+1$ is at least
	\begin{equation*}
		\frac{R}{\mu} \cdot (n-L) \cdot p(1-p)^{n-1} \ge \frac{\min{\{\mu(n-L), n\}}}{\mu} \cdot p(1-p)^{n-1},
	\end{equation*}
	therefore we can apply Theorem~\ref{fitness-levels-upper} to estimate this part of the fixed-target time:
	\begin{equation*}
		T_f \le \sum_{i=0}^{k-1}\frac{1}{p_{i}} = \frac{\mu}{p(1-p)^{n-1}}\sum_{i=0}^{k-1}\frac{1}{\min{\{\mu(n-i), n\}}}.
	\end{equation*}

    Unlike~\cite{Witt06}, we consider what the $\min$ clauses can be. Depending on how the target $k$ relates
    to the boundary $b=\floor*{n(1-1/\mu)}$, we write
	\begin{align*}
		T_f \le \begin{cases}
			\frac{\mu}{np(1-p)^{n-1}} \cdot k,                                               &k \le b+1 \\
			\frac{\mu}{np(1-p)^{n-1}} \cdot (b+1) + \frac{H_{n-b-1}-H_{n-k}}{p(1-p)^{n-1}},  &k > b+1
		\end{cases}
	\end{align*}
	which completes the proof, noting that the fixed-target time is $\mu + T_r + T_f$. 
\end{proof}


\section{Drift Analysis}\label{sec:drift}

In this section we consider the drift theorems in the fixed-target context.
These theorems are generally more powerful, but it appears that one should use them for proving fixed-target results
with slightly more care than in the case of fitness levels. We also propose variations of variable and multiplicative drift theorems
that are specifically aimed at gaining precision in the fixed-target context.


\subsection{Drift Theorems}\label{drift:method}
Drift theorems translate bounds on step-wise expected progress into bounds on expected first-hitting runtimes.
They are usually formulated in terms of a random process that needs to hit a certain minimum value. To this end, the search process of the algorithm is mapped to real numbers via so-called potential functions.  
Some of these theorems prohibit the process from falling below the target, or from visiting an interval between a target and the next greater value.
For this reason, some optimization time results cannot be converted into the fixed-target results without additional work,
as targets different from the optimum violate the requirements above.

The paper~\cite{koetzingK-first-hitting-times-thcs19} contains a discussion of processes which may fall below the target, and the implications for drift theorems.
For instance,~\cite[Example 6]{koetzingK-first-hitting-times-thcs19} gives an example of a process with the target $0$ and the expected progress of $1$
at $X_t = 1$, which is given by $X_{t+1} = -n+1$ with probability of $1/n$ and $X_{t+1} = 1$ otherwise.
By mistakenly applying a well-known additive drift theorem from~\cite{HeY01} to this process,
one can get an overly optimistic upper bound of 1 on the expected runtime, which is, in fact, $n$.

We start the discussion with the additive drift theorems. We provide their versions from~\cite{koetzingK-first-hitting-times-thcs19}
which appear to be preadapted to the fixed-target conditions. For the first of these theorems
we explicitly note that its upper bound is not $(X_0 - k) / \delta$, but a (generally) larger value.
Indeed, if we define $X_T$ to be the value of the process at the hitting time $T$,
it is only known that $E[X_T \mid X_0] \le k$, and the latter may be far from being an equality.
Proving the bounds for $E[X_T \mid X_0]$ seems to be the essential additional work in order to prove fixed-target results.

\begin{theorem}[Additive drift, fixed-target upper bounds; Theorem~7 from~\cite{koetzingK-first-hitting-times-thcs19}, original version in~\cite{HeY01}]
Let $k$ be the target value, let $(X_t)_{t \in \N}$ be a sequence of random variables over $\R$, and let $T = \inf\{t \mid X_t \le k\}$.
Suppose that there is some value $\delta > 0$ such that, for all $t < T$, it holds that $X_t - E[X_{t+1} \mid X_0, \ldots, X_t] \ge \delta$.
Then $E[T \mid X_0] \le (X_0 - E[X_T \mid X_0]) / \delta$. \label{additive-drift-upper}
\end{theorem}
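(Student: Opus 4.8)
The plan is to mimic the classical additive drift argument of He and Yao, but to be careful about what happens at the stopping time. First I would define the stopped process $Y_t := X_{t \wedge T}$, so that $Y_t = X_t$ while the process is still running and $Y_t = X_T$ once it has hit the target. The point of introducing $Y_t$ is that the drift hypothesis $X_t - E[X_{t+1}\mid X_0,\dots,X_t] \ge \delta$ is only assumed for $t < T$, which is exactly the range where $Y_{t+1} = X_{t+1}$; for $t \ge T$ the stopped process is constant, so its conditional expected one-step change is $0$. I would make this precise with the indicator $[t < T]$ (using the paper's notation), noting that $\{T \le t\}$ is measurable with respect to $X_0,\dots,X_t$, hence
\[
Y_t - E[Y_{t+1}\mid X_0,\dots,X_t] \ge \delta\,[t < T].
\]

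Next I would sum this inequality from $t=0$ to $t=s-1$ for a fixed horizon $s \in \N$. The left-hand side telescopes in expectation: taking expectations (conditioned on $X_0$) and using the tower rule, $\sum_{t=0}^{s-1} \big(E[Y_t\mid X_0] - E[Y_{t+1}\mid X_0]\big) = X_0 - E[Y_s\mid X_0]$. The right-hand side sums to $\delta \sum_{t=0}^{s-1} \Pr[T > t \mid X_0] = \delta\, E[\min\{T,s\}\mid X_0]$, since for an $\N$-valued random variable $\sum_{t\ge 0}\Pr[T>t] = E[T]$ and truncating the sum at $s$ truncates the expectation at $s$. So I obtain, for every $s$,
\[
\delta\, E[\min\{T,s\} \mid X_0] \le X_0 - E[Y_s \mid X_0].
\]
Finally I would let $s \to \infty$. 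On the left, $\min\{T,s\} \uparrow T$ pointwise, so by monotone convergence $E[\min\{T,s\}\mid X_0] \to E[T\mid X_0]$ (possibly $+\infty$, in which case the bound is about to show it is actually finite, or is vacuous — I would remark on this). On the right, $Y_s = X_{s\wedge T} \to X_T$ on the event $\{T < \infty\}$; assuming $T < \infty$ almost surely (which is implied once we know $E[T]<\infty$, and the argument can be run to establish finiteness first via the truncated inequality with $Y_s$ bounded appropriately), the right-hand side tends to $X_0 - E[X_T\mid X_0]$. This yields $E[T\mid X_0] \le (X_0 - E[X_T\mid X_0])/\delta$, as claimed.

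The main obstacle is the interchange of limit and expectation on the right-hand side: $E[Y_s\mid X_0] \to E[X_T\mid X_0]$ requires a uniform integrability or monotonicity argument, and without a lower bound on the $X_t$ one cannot blindly pass to the limit. The clean way around this, and the route I would actually take, is to avoid taking $s\to\infty$ on the right at all: keep the inequality $\delta E[\min\{T,s\}\mid X_0] \le X_0 - E[Y_s\mid X_0]$, observe that the drift condition forces $E[Y_s\mid X_0]$ to be non-increasing in $s$ and bounded below by $E[X_T \mid X_0]$ on $\{T<\infty\}$ (so the $Y_s$ form a supermartingale that is bounded from one side on the stopped paths), and then the finiteness of $E[T\mid X_0]$ follows, after which $T<\infty$ a.s., $Y_s \to X_T$, and dominated/monotone convergence applies cleanly. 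This is precisely the subtlety the preceding paragraph in the paper flags — that $E[X_T\mid X_0]$ may be strictly less than $k$ and that controlling it is "the essential additional work" — so I would present the bound exactly in the stated form $E[T\mid X_0] \le (X_0 - E[X_T\mid X_0])/\delta$ without attempting to replace $E[X_T\mid X_0]$ by $k$.
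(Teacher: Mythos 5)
The paper itself gives no proof of this statement: it is imported verbatim as Theorem~7 of the cited work of K\"otzing and Krejca (with He and Yao as the original source), so your attempt can only be measured against the standard argument behind that citation, and your stopped-process computation is essentially that argument. Your derivation of $\delta\,E[\min\{T,s\}\mid X_0]\le X_0-E[X_{s\wedge T}\mid X_0]$ is correct, and the step you flag as the main obstacle is indeed the only delicate point; however, it closes more easily than via uniform integrability or supermartingale convergence, because the definition of $T$ already supplies the one-sided bound you assert but do not justify. On $\{T>s\}$ one has $X_s>k$, while $X_T\le k$ whenever $T<\infty$; hence, once $T<\infty$ a.s.\ is secured (your truncated inequality together with the crude bound $E[X_{s\wedge T}\mid X_0]\ge \min\{k,0\}-E[\max\{0,-X_T\}\mid X_0]$ already yields $E[T\mid X_0]<\infty$ whenever the claimed bound is finite),
\begin{align*}
E[X_{s\wedge T}\mid X_0]&=E\bigl[X_T\,[T\le s]\mid X_0\bigr]+E\bigl[X_s\,[T>s]\mid X_0\bigr]\\
&\ge E\bigl[X_T\,[T\le s]\mid X_0\bigr]+k\Pr[T>s\mid X_0]\ \ge\ E[X_T\mid X_0],
\end{align*}
where the last inequality uses $E[X_T\,[T>s]\mid X_0]\le k\Pr[T>s\mid X_0]$. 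Thus $\delta\,E[\min\{T,s\}\mid X_0]\le X_0-E[X_T\mid X_0]$ holds for every finite horizon $s$, and monotone convergence on the left-hand side alone finishes the proof; no limit interchange on the right-hand side is needed at all. The only thing missing from your write-up is the integrability implicit in the cited formulation (each $X_t$ and $X_T$ integrable, or at least $E[\max\{0,-X_T\}\mid X_0]<\infty$; otherwise the bound is vacuous), which is what licenses the splitting above; your closing remark that the bound must feature $E[X_T\mid X_0]$ rather than $k$ is exactly the point the paper's surrounding discussion emphasizes.
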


\begin{theorem}[Additive drift, fixed-target lower bounds; Theorem~8 from~\cite{koetzingK-first-hitting-times-thcs19}]
Let $k$ be the target value, let $(X_t)_{t \in \N}$ be a sequence of random variables over $\R$, and let $T = \inf\{t \mid X_t \le k\}$.
Suppose that:
\begin{itemize}
    \item there is some value $\delta > 0$ such that, for all $t < T$, it holds that $X_t - E[X_{t+1} \mid X_0, \ldots, X_t] \le \delta$;
    \item there is some value $c \ge 0$ such that, for all $t < T$, it holds that $E[|X_{t+1} - X_t| \mid X_0, \ldots, X_t] \le c$.
\end{itemize}
Then $E[T \mid X_0] \ge (X_0 - E[X_T \mid X_0]) / \delta \ge (X_0 - k) / \delta$. \label{additive-drift-lower}
\end{theorem}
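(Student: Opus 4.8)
The plan is to adapt the classical additive drift argument for lower bounds (from~\cite{HeY01}, and as reformulated in~\cite{koetzingK-first-hitting-times-thcs19}) to the fixed-target setting, being careful to track the stopped process rather than assuming it lands exactly on the target. First I would introduce the stopped process $Y_t := X_{\min\{t,T\}}$, so that $Y_t = X_t$ for $t \le T$ and $Y_t = X_T$ afterwards; this makes $T$ a well-defined (possibly infinite) stopping time and lets me work with increments $Y_t - Y_{t+1}$ that vanish once the target has been hit. The two hypotheses — the upper bound $\delta$ on the one-step drift and the bound $c$ on the expected absolute increment, both assumed only for $t < T$ — are exactly what is needed to control $E[Y_t - Y_{t+1} \mid X_0,\dots,X_t]$ on the event $\{t < T\}$ and to guarantee it is $0$ on $\{t \ge T\}$.

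The key step is to show that $Z_t := Y_t + \delta \cdot \min\{t,T\}$ is a submartingale, i.e.\ $E[Z_{t+1}\mid X_0,\dots,X_t] \ge Z_t$. On $\{t \ge T\}$ both $Y$ and $\min\{\cdot,T\}$ are frozen, so equality holds trivially; on $\{t < T\}$ the drift hypothesis gives $E[Y_{t+1}\mid X_0,\dots,X_t] \ge Y_t - \delta$, and $\min\{t+1,T\} = \min\{t,T\}+1$ there, so the $\delta$ term compensates exactly. Then I would apply the optional stopping theorem to $Z_t$ at time $T$. To justify optional stopping when $T$ may be unbounded, I would either invoke the version valid under the bounded-expected-increment condition (the $c$-hypothesis, together with $E[T]<\infty$ reduced to, or the statement being vacuous if $E[T]=\infty$), or truncate at $T\wedge N$, pass to the limit using dominated/monotone convergence controlled by $c$, exactly as in~\cite{koetzingK-first-hitting-times-thcs19}. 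This yields $E[Z_T\mid X_0] \ge Z_0 = X_0$, i.e.\ $E[X_T\mid X_0] + \delta\, E[T\mid X_0] \ge X_0$, which rearranges to $E[T\mid X_0] \ge (X_0 - E[X_T\mid X_0])/\delta$. The second inequality in the statement, $(X_0 - E[X_T\mid X_0])/\delta \ge (X_0-k)/\delta$, is then immediate from $E[X_T\mid X_0] \le k$, which holds because $X_T \le k$ pointwise by definition of $T$ as the first time the process drops to or below $k$ (on $\{T<\infty\}$; if $T=\infty$ the bound on $E[T]$ is trivial).

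The main obstacle I expect is the careful handling of optional stopping in the case $E[T\mid X_0]=\infty$ or where $T$ is not a.s.\ finite: one must make sure the submartingale argument still produces a valid (vacuously true) inequality rather than an ill-defined expression, and the $c$-hypothesis on expected absolute increments is precisely the ingredient that prevents pathologies (cf.\ the cautionary Example~6 of~\cite{koetzingK-first-hitting-times-thcs19} for the upper-bound direction). A secondary subtlety is that $E[X_T\mid X_0]$ must be shown to be well-defined and finite whenever $E[T\mid X_0]<\infty$, which again follows from telescoping the increments and bounding by $c\cdot E[T]$. Apart from these measure-theoretic care points, the argument is the standard additive-drift telescoping and is routine.
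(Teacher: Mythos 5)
Your proposal is correct: the stopped-process/submartingale argument with $Z_t = X_{\min\{t,T\}} + \delta\min\{t,T\}$, optional stopping at $T\wedge N$, and the $c$-condition used to justify the passage to the limit (plus the trivial case $E[T\mid X_0]=\infty$ and the pointwise bound $X_T\le k$ for the second inequality) is a sound proof. Note that the paper itself gives no proof of Theorem~\ref{additive-drift-lower} — it imports the statement verbatim as Theorem~8 of~\cite{koetzingK-first-hitting-times-thcs19} — and your argument is essentially the standard telescoping/optional-stopping proof used in that source, so there is nothing to reconcile beyond that.
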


For the lower bound, the simplification to $(X_0 - k) / \delta$ is possible (contrary to the upper bound),
but having a better bound may be desirable. By recalling again~\cite[Example 6]{koetzingK-first-hitting-times-thcs19}, we can see
that taking $E[X_T \mid X_0]$ into account can improve the bound asymptotically. In fixed-target runtime analysis,
such an improvement may occur with very easy targets.

More advanced drift theorems such as the multiplicative drift theorems~\cite{DoerrJW12algo}
and the variable drift theorems~\cite{MitavskiyRC09,Johannsen10,DoerrFW11,lehre-witt-2021} make it easier to prove sharp bounds on hitting times for processes with drift that depends on the current value, a situation that occurs rather frequently in evolutionary computation. 
Most of the popular drift theorems of this sort can be classified using the following properties: they estimate the time for a process to either reach a certain target value $k$
or to surpass a certain threshold value $k'$, and they also may or may not require the process to never fall below the target or to never visit a region between the threshold and
the ultimate termination state (usually zero).

The case analysis of four variants of drift theorems presented in~\cite{koetzingK-first-hitting-times-thcs19} revealed that only two of these four variants are suitable to be used for fixed-target research:
theorems for upper bounds which require to surpass a threshold $k'$, and theorems for lower bounds which require to reach a target $k$.
The former contain an extra addend in their statement (such as ``1+'' or ``$x_{\min}/h(x_{\min})$''), while the latter do not.
This seems to be closely related to the $E[X_T \mid X_0]$ issue in additive drift theorems, which the ``good'' theorems pessimize to the right direction.

We now present or reformulate certain drift theorems for upper bounds, which can be used out of the box for proving the fixed-target results.

\begin{theorem}[Multiplicative drift, upper bounds~\cite{DoerrJW12algo} adapted to fixed-target settings]
Let $k'$ be the threshold value, let $(X_t)_{t \in \N}$ be a sequence of random variables over $\R$, and let $T = \inf\{t \mid X_t < k'\}$.
Furthermore, suppose that:
\begin{itemize}
    \item $X_0 \ge k'$, and for all $t \le T$, it holds that $X_t \ge 0$;
    \item there is some value $\delta > 0$ such that, for all $t < T$, it holds that $X_t - E[X_{t+1} \mid X_0, \ldots, X_t] \ge \delta X_t$.
\end{itemize}
Then $E[T \mid X_0] \le (1 + \ln(X_0 / k')) / \delta$. \label{multiplicative-drift-upper}
\end{theorem}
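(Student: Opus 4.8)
The plan is to reduce the statement to the classical multiplicative drift theorem of~\cite{DoerrJW12algo}, whose only structural difference from the formulation above is that it bounds the time for a process (bounded away from $0$ by a multiplicative gap) to hit the value $0$, rather than the time to drop below a threshold $k'$. Two elementary reductions bridge this gap: a rescaling and a freezing of the process at time $T$.

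First I would pass to $Y_t := X_t / k'$. Since $X_0 \ge k'$ we have $Y_0 \ge 1$; since $X_t \ge 0$ for all $t \le T$ we have $Y_t \ge 0$ for all $t \le T$; the multiplicative drift condition is scale-invariant, so $Y_t - E[Y_{t+1} \mid X_0,\dots,X_t] \ge \delta Y_t$ for all $t < T$; and $T = \inf\{t \mid Y_t < 1\}$. In particular, by the definition of $T$, $Y_t \ge 1$ for every $t < T$, while $0 \le Y_T < 1$.

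Second, to obtain a process living on $\{0\}\cup[1,\infty)$ that hits $0$ exactly at time $T$, I would freeze $Y$ at $0$ from time $T$ onward: set $Z_t := Y_t$ for $t < T$ and $Z_t := 0$ for $t \ge T$. Then $Z_t \in \{0\}\cup[1,\infty)$ for all $t$, $Z_0 = Y_0 = X_0/k'$, and $\inf\{t \mid Z_t = 0\} = T$. The multiplicative drift condition still holds for $Z$: for $t \ge T$ the process is absorbed and there is nothing to check, and for $t < T$ we have $Z_t = Y_t$ together with $Z_{t+1} \le Y_{t+1}$ (indeed, if $t+1 < T$ then $Z_{t+1} = Y_{t+1}$, and if $t+1 = T$ then $Z_{t+1} = 0 \le Y_T = Y_{t+1}$), hence $E[Z_t - Z_{t+1}\mid X_0,\dots,X_t] \ge E[Y_t - Y_{t+1}\mid X_0,\dots,X_t] \ge \delta Y_t = \delta Z_t$. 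Applying the classical multiplicative drift theorem~\cite{DoerrJW12algo} to $(Z_t)$ with smallest positive value $1$ then gives $E[T\mid X_0] = E[\inf\{t\mid Z_t = 0\}\mid Z_0] \le (1 + \ln Z_0)/\delta = (1 + \ln(X_0/k'))/\delta$. As an alternative that avoids invoking~\cite{DoerrJW12algo} as a black box, one can apply the additive-drift theorem, Theorem~\ref{additive-drift-upper}, to the potential $g(z) = 1 + \ln z$ applied to $Z$, verifying an additive drift of at least $\delta$ via concavity of the logarithm (Jensen's inequality).

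The only genuinely delicate point is the freezing step. The values the original process may take strictly between $0$ and $k'$ are exactly what would obstruct a direct appeal to the gap-based classical theorem; replacing $Y_{t+1}$ by $0$ from time $T$ on neutralizes them, and because this only \emph{decreases} the next value it can only \emph{increase} the drift, so no condition is lost. Everything else is a routine transcription of the standard multiplicative-drift argument, with the target $0$ replaced by the threshold $k'$ throughout.
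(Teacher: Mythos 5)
Your proof takes essentially the same route as the paper, whose entire argument is the one-line rescaling ``apply the classical multiplicative drift theorem to $X'_t := X_t/k'$.'' Your additional freezing step (absorbing the process at $0$ from time $T$ onward so that the rescaled process genuinely lives on $\{0\}\cup[1,\infty)$ and hits $0$ at $T$) is a correct and worthwhile piece of bookkeeping that the paper's terse proof leaves implicit.
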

\begin{proof}
    Apply~\cite{DoerrJW12algo} to the new potential function $X'_t := X_t / k'$. 
\end{proof}

\begin{theorem}[Variable drift, fixed-target upper bounds; Theorem~10 from~\cite{koetzingK-first-hitting-times-thcs19}]
Let $k'$ be the threshold value, let $(X_t)_{t \in \N}$ be a sequence of random variables over $\R$, and let $T = \inf\{t \mid X_t < k'\}$.
Furthermore, suppose that:
\begin{itemize}
   \item $X_0 \ge k'$;  
	\item for all $t \le T$, it holds that $X_t \ge 0$;
   \item there is a monotonically increasing function $h: [k'; \infty) \to \R^{+}$ such that, for all $t < T$, it holds that $X_t - E[X_{t+1} \mid X_0, \ldots, X_t] \ge h(X_t)$.
\end{itemize}
Then $E[T \mid X_0] \le \frac{k'}{h(k')}+\int_{k'}^{X_0} \frac{1}{h(z)} d(z)$. \label{variable-drift-upper}
\end{theorem}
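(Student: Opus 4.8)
The plan is to reduce the statement to the additive drift theorem (Theorem~\ref{additive-drift-upper}) via an appropriate change of potential. I would define $g\colon\R\to\R_{\ge0}$ by $g(x)=0$ for $x<k'$ and
\[
g(x)=\frac{k'}{h(k')}+\int_{k'}^{x}\frac{1}{h(z)}\,dz\qquad\text{for }x\ge k',
\]
which is well defined because $h$ is increasing and positive, so $h(z)\ge h(k')>0$ on $[k',x]$ and the integral is finite; note also that $g$ only ever evaluates $h$ on $[k',\infty)$, exactly the domain where $h$ is given. Set $Y_t:=g(X_t)$. Since $g$ vanishes precisely on $(-\infty,k')$ and is positive on $[k',\infty)$, we have $\{X_t<k'\}=\{Y_t\le0\}$, so $T$ equals the hitting time $\inf\{t\mid Y_t\le0\}$ of the transformed process. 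This particular choice of $g$ (collapsing to $0$ below the threshold) is what makes the \emph{strict} stopping condition $X_t<k'$ of the theorem match the ``$\le$'' stopping condition in Theorem~\ref{additive-drift-upper} with no loss.

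The heart of the argument is the pointwise inequality
\[
g(x)-g(y)\ \ge\ \frac{x-y}{h(x)}\qquad\text{for all }x\ge k'\text{ and all }y\ge0 .
\]
I would establish this by distinguishing the cases $y\ge k'$ and $0\le y<k'$. In the first case $g(x)-g(y)=\int_{y}^{x}\frac{1}{h(z)}\,dz$, and monotonicity of $h$ gives this integral is $\ge\frac{x-y}{h(x)}$ whether $y\le x$ or $y>x$ (using $1/h(z)\ge1/h(x)$ for $z\le x$, respectively $1/h(z)\le1/h(x)$ for $z\ge x$). In the second case $g(x)-g(y)=\frac{k'}{h(k')}+\int_{k'}^{x}\frac{1}{h(z)}\,dz\ge\frac{k'}{h(x)}+\frac{x-k'}{h(x)}=\frac{x}{h(x)}\ge\frac{x-y}{h(x)}$, where the last step uses precisely $y\ge0$. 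This is exactly where the hypothesis ``$X_t\ge0$ for all $t\le T$'' enters: the inequality will be applied with $y=X_{t+1}$ for $t<T$, and $t+1\le T$ guarantees $X_{t+1}\ge0$.

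The rest is mechanical. For $t<T$ we have $X_t\ge k'$, so taking conditional expectations in the pointwise inequality with $y=X_{t+1}$ yields
\[
Y_t-E[Y_{t+1}\mid X_0,\ldots,X_t]\ \ge\ \frac{X_t-E[X_{t+1}\mid X_0,\ldots,X_t]}{h(X_t)}\ \ge\ 1
\]
by the drift hypothesis $X_t-E[X_{t+1}\mid X_0,\ldots,X_t]\ge h(X_t)$; a routine tower-property step then transfers this to the natural filtration of $(Y_t)$. Thus $(Y_t)$ has additive drift at least $\delta=1$ towards $0$, and $Y_t\ge0$ for all $t$, so Theorem~\ref{additive-drift-upper} gives $E[T\mid X_0]\le Y_0-E[Y_T\mid X_0]$. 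Since $X_T<k'$ forces $Y_T=0$, we conclude $E[T\mid X_0]\le Y_0=g(X_0)=\frac{k'}{h(k')}+\int_{k'}^{X_0}\frac{1}{h(z)}\,dz$, which is the claimed bound.

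The only genuine obstacle is choosing the potential so that three things happen simultaneously: the additive drift becomes at least $1$, the additive constant $k'/h(k')$ appears, and the strict/non-strict mismatch in the stopping condition disappears. Once $g$ is fixed as above, each remaining step is a one-line verification.
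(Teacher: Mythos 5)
Your proof is correct. The paper does not prove this theorem itself (it is imported from the cited source), but it does prove the closely related overshoot-aware variant (Theorem~\ref{variable-ft-upper}) by exactly the technique you use: build a potential $g$ from $\int 1/h$, verify the pointwise inequality $g(x)-g(y)\ge (x-y)/h(x)$ by a case distinction exploiting the monotonicity of $h$, and feed the resulting unit additive drift into Theorem~\ref{additive-drift-upper}. The only substantive difference is how $g$ is extended below the threshold: the paper continues it linearly with slope $1/h(k')$, so that $g(X_T)$ survives as the overshoot term $E[X_T\mid X_0]$ in the final bound, whereas you truncate $g$ to $0$ there, which discards the overshoot and instead uses the hypothesis $X_t\ge 0$ for $t\le T$ (absent from the overshoot-aware variant) to absorb it into the additive constant $k'/h(k')$; your case $0\le y<k'$ is exactly where that hypothesis does its work. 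Both choices are valid: yours recovers precisely the cited statement, while the paper's yields the sharper overshoot-aware bound, from which the cited statement follows by bounding $E[X_T\mid X_0]\ge 0$.
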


In all these theorems, the conditions are the same as for their corresponding optimization-time variants.
We now present a drop-in replacement version of~\cite[Theorem~7]{DoerrFW11} that enables obtaining lower bounds on fixed-target hitting times
by using the same functions and conditions that were used for proving the lower bounds on the optimization time using that theorem.
Note that, for technical reasons, this version does not support setting the target to the optimum value.

\begin{theorem}[Variable drift, fixed-target lower bounds; adapted from Theorem~7 from~\cite{DoerrFW11}]
Let $k > 0$ be the target value, let $(X_t)_{t \in \N}$ be a sequence of monotonically decreasing random variables over $\R_0^{+}$, and let $T = \inf\{t \mid X_t \le k\}$.
Suppose that there are two continuous monotonically increasing functions $c, h : \R_0^{+} \to \R^{+}$,
and that for all $t < T$ it holds that
\begin{itemize}
    \item $X_{t+1} \ge c(X_{t});$
    \item $E[X_t-X_{t+1} \mid X_t] \le h(c(X_t))$.
\end{itemize}
Then $E[T \mid X_0] \ge \int_{k}^{X_0} \frac{1}{h(z)} d(z)$. \label{variable-drift-lower}
\end{theorem}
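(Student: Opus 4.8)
The plan is to rescale the process through a potential function and reduce to the additive drift lower bound (Theorem~\ref{additive-drift-lower}). Define
\[
g(x) := \int_{k}^{x} \frac{dz}{h(z)}, \qquad x \in \R_0^{+}.
\]
Because $k > 0$ and $h$ is continuous and strictly positive, $1/h$ is bounded on every compact interval with endpoints $k$ and $x$, so $g$ is well defined and finite; this is exactly where the hypothesis $k>0$ enters, and it is why the target cannot be pushed down to the optimum value (there $\int_{0}^{X_0} dz/h(z)$ may diverge). Since $1/h > 0$ and $h$ is continuous, $g$ is continuous and strictly increasing with $g(k)=0$. Hence $g(X_t) \le 0 \iff X_t \le k$, so the first hitting time of the rescaled process equals $T$.

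Next I would verify the one-step drift bound $g(X_t) - E[g(X_{t+1}) \mid X_0, \ldots, X_t] \le 1$ for all $t < T$. As $(X_t)$ is monotonically decreasing, $X_{t+1} \le X_t$, and by the first hypothesis $X_{t+1} \ge c(X_t)$; therefore every $z$ in the integration range $[X_{t+1}, X_t]$ satisfies $z \ge c(X_t)$, and monotonicity of $h$ gives $h(z) \ge h(c(X_t)) > 0$. Consequently
\[
g(X_t) - g(X_{t+1}) = \int_{X_{t+1}}^{X_t} \frac{dz}{h(z)} \le \frac{X_t - X_{t+1}}{h(c(X_t))}.
\]
Taking conditional expectations and using the second hypothesis $E[X_t - X_{t+1} \mid X_t] \le h(c(X_t))$ yields $g(X_t) - E[g(X_{t+1}) \mid X_0, \ldots, X_t] \le 1$. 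The same estimate, applied to the absolute increment (which equals $g(X_t) - g(X_{t+1}) \ge 0$ by monotonicity of $g$), also gives $E[\,|g(X_{t+1}) - g(X_t)| \mid X_0, \ldots, X_t\,] \le 1$, which is the bounded-expected-step requirement of Theorem~\ref{additive-drift-lower}.

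Finally I would apply Theorem~\ref{additive-drift-lower} to $Y_t := g(X_t)$ with target value $0$, drift parameter $\delta = 1$, and increment bound $1$; both hypotheses were just checked. It yields $E[T \mid X_0] \ge (Y_0 - E[Y_T \mid X_0])/\delta \ge Y_0$, where the last step holds because $X_T \le k$ forces $Y_T = g(X_T) \le 0$. Since $Y_0 = g(X_0) = \int_{k}^{X_0} dz/h(z)$, this is the claimed bound (if $X_0 < k$ then $T = 0$ and the right-hand side is non-positive, and if $T = \infty$ with positive probability there is nothing to prove). I do not anticipate a genuine obstacle here; this is a routine potential-function reduction. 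The points that need care are the orientation and finiteness of the integral defining $g$ — which rely, respectively, on the monotone-decreasing hypothesis and on $k > 0$ — and checking that the rescaled process satisfies exactly the hypotheses of Theorem~\ref{additive-drift-lower}, in particular the bounded-expected-step condition that is easy to overlook.
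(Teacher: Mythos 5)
Your proof is correct and follows essentially the same route as the paper's: the same potential $g(x)=\int_k^x \frac{dz}{h(z)}$ (the paper merely extends it linearly with slope $1/h(k)$ below $k$, which changes nothing), the same one-step estimate $g(X_t)-g(X_{t+1})\le (X_t-X_{t+1})/h(c(X_t))$ obtained from $c(X_t)\le X_{t+1}\le X_t$ and the monotonicity of $h$ (you bound the integrand directly where the paper invokes the mean value theorem), and the same reduction to Theorem~\ref{additive-drift-lower} applied to $g(X_t)$ with target $0$ and $\delta=1$. A small bonus of your write-up is that you explicitly verify the bounded-expected-increment hypothesis of Theorem~\ref{additive-drift-lower}, which the paper leaves implicit (it is immediate from the monotonicity of $(X_t)_{t\in\N}$).
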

\begin{proof}
    We proceed similarly to how it was done in the proof of~\cite[Theorem~7]{DoerrFW11}, however, with a different additive drift theorem in mind.

    Let $g: \R \to \R$ be the function defined by
    \begin{align*}
        g(z) = \begin{cases}
            \frac{z - k}{h(k)} & \text{ if } z < k; \\
            \int_{k}^{z} \frac{1}{h(x)} dx & \text{ if } z \ge k.
        \end{cases}
    \end{align*}
    This function is strictly monotone increasing and continuous on $\R$. Moreover, it is differentiable on $\R$ with 
    \begin{align*}
        g'(z) = \begin{cases}
            \frac{1}{h(k)} & \text{ if } z < k; \\
            \frac{1}{h(z)} & \text{ if } z \ge k.
        \end{cases}
    \end{align*}
    Using $x := X_t$ and $y := X_{t+1}$, the monotonicity of the sequence $(X_t)_{t \in \N}$ and the condition $X_{t+1} \ge c(X_{t})$,
    we get that $c(x) \le y \le x$. According to the mean-value theorem, for all $x \ge k$ 
    there exists $\xi \in (y, x)$ such that
    \begin{align*}
        g'(\xi) = \frac{g(x) - g(y)}{x - y}
    \end{align*}
    which implies
    \begin{align*}
        \frac{g(x) - g(y)}{x - y} \le g'(y) \le g'(c(x)) = \frac{1}{h(c(x))}
    \end{align*}
    since $g'$ decreases monotonically.

    In terms of the function $g$, the hitting time $T$ describes the smallest $t$ with $g(X_t) \le 0$.
    As $g$ is monotone and invertible, it holds for all $t < T$ that
    \begin{align*}
        E[g(X_t) - g(X_{t+1}) \mid g(X_t)] \le E\left[\frac{X_t - X_{t-1}}{h(c(X_t))} \mid X_t\right] \le 1,
    \end{align*}
    where the last inequality comes from the theorem condition $E[X_t-X_{t+1} \mid X_t] \le h(c(X_t))$.
    We apply Theorem~\ref{additive-drift-lower} for $g(X_t)$ and the zero target to complete the proof. 
\end{proof}

\subsection{Overshoot-Aware Drift Theorems}

We have already shown, using additive drift as an example, that in order to obtain good fixed-target bounds,  
an expected overshoot value $E[X_T \mid X_0]$ needs to be considered. However, more advanced theorems cannot easily
make an advantage out of this idea, although it may be desired (especially when analyzing easy targets,
or in the case when the hardest part of the optimization process is not near the optimum,
as it may happen for the $(\mu+1)$ genetic algorithm on certain monotone functions~\cite{lengler-exponential-slowdown-foga19}).
Here, we present variations of the variable drift and multiplicative drift theorems for upper bounds
which explicitly contain the overshoot term $E[X_T \mid X_0]$ in the expression for the hitting time,
for which reason we call them \emph{overshoot-aware}.

\begin{theorem}[Overshoot-aware variable drift, upper bounds]
Let $k'$ be the threshold value, let $(X_t)_{t \in \N}$ be a sequence of random variables over $\R$, and let $T = \inf\{t \mid X_t < k'\}$.
Furthermore, suppose that
\begin{itemize}
    \item $X_0 \ge k'$;
    \item there is a non-decreasing function $h: [k'; \infty) \to \R^{+}$ such that
          for all $t < T$ it holds that $X_t - E[X_{t+1} \mid X_0, \ldots, X_t] \ge h(X_t)$.
\end{itemize}
Then $E[T \mid X_0] \le \frac{k' - E[X_T \mid X_0]}{h(k')} + \int_{k'}^{X_0} \frac{dz}{h(z)}$. \label{variable-ft-upper}
\end{theorem}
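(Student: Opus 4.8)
The plan is to mimic the proof of the ordinary variable-drift upper bound (Theorem~\ref{variable-drift-upper}), but to reduce to the \emph{overshoot-aware} additive drift theorem (Theorem~\ref{additive-drift-upper}) instead of the plain additive one, so that the term $E[X_T\mid X_0]$ survives into the final bound. First I would define the standard potential transformation
\[
g(z) = \frac{z - k'}{h(k')} + \int_{k'}^{z} \frac{dx}{h(x)} \quad\text{for } z \ge k',
\]
and extend it below the threshold by the linear continuation $g(z) = (z-k')/h(k')$ for $z < k'$. This $g$ is continuous, strictly increasing, and differentiable with $g'(z) = 1/h(z)$ for $z > k'$ and $g'(z) = 1/h(k')$ for $z < k'$; in particular $g'$ is non-increasing on $\R$ because $h$ is non-decreasing and $h(z) \ge h(k')$ for $z \ge k'$. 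Note $g(k') = 0$, so the hitting time of $\{X_t < k'\}$ for the original process corresponds exactly to the hitting time of $\{g(X_t) < 0\}$ for the transformed process $Y_t := g(X_t)$, and $Y_T = g(X_T)$.

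Next I would establish the one-step drift bound for $Y_t$. For $t < T$ we have $X_t \ge k'$. Using the mean-value theorem on $g$ between $X_{t+1}$ and $X_t$ (splitting into the cases $X_{t+1} \ge k'$ and $X_{t+1} < k'$, handling the kink at $k'$ by the usual argument that $g'$ is still bounded above by $g'(\min\{X_{t+1},k'\})$ wherever $g$ is evaluated), one gets $g(X_t) - g(X_{t+1}) \ge \frac{X_t - X_{t+1}}{h(X_t)}$ pointwise — here the key point is that the relevant slope is at least $1/h(X_t)$ since $g'$ is non-increasing and the argument never exceeds $X_t$. Taking conditional expectations and applying the hypothesis $X_t - E[X_{t+1}\mid X_0,\dots,X_t] \ge h(X_t)$ yields $Y_t - E[Y_{t+1}\mid X_0,\dots,X_t] \ge 1$, i.e.\ additive drift at least $\delta = 1$ for the process $Y_t$ toward target $0$. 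I would then invoke Theorem~\ref{additive-drift-upper} with $\delta = 1$ to obtain $E[T\mid X_0] \le Y_0 - E[Y_T\mid X_0] = g(X_0) - E[g(X_T)\mid X_0]$. Since $X_T < k'$ lies in the linear regime, $g(X_T) = (X_T - k')/h(k')$, so $E[g(X_T)\mid X_0] = (E[X_T\mid X_0] - k')/h(k')$, and substituting $g(X_0) = \frac{X_0-k'}{h(k')} + \int_{k'}^{X_0}\frac{dz}{h(z)}$ gives
\[
E[T\mid X_0] \le \frac{X_0 - k'}{h(k')} + \int_{k'}^{X_0}\frac{dz}{h(z)} - \frac{E[X_T\mid X_0] - k'}{h(k')} = \frac{k' - E[X_T\mid X_0]}{h(k')} + \int_{k'}^{X_0}\frac{dz}{h(z)},
\]
which is the claimed inequality.

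The main obstacle I anticipate is the careful handling of the kink in $g$ at $z = k'$ when $X_{t+1}$ drops below the threshold in a single step. One must verify that the pointwise inequality $g(X_t)-g(X_{t+1}) \ge (X_t - X_{t+1})/h(X_t)$ still holds when the chord from $X_{t+1}$ to $X_t$ crosses $k'$; this follows because $g$ is concave-ish in the weak sense that $g'$ is globally non-increasing, so the average slope over $[X_{t+1},X_t]$ is at least $g'(X_t) = 1/h(X_t)$ — but this needs to be spelled out rather than waved through, especially since $h$ is only assumed non-decreasing (not continuous or strictly increasing), so $1/h$ need not be Riemann-nice and one should argue via monotonicity of $g'$ directly rather than by differentiating under the integral. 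A secondary subtlety is confirming that $T$ is almost surely finite (or that the bound is vacuous otherwise) so that $E[X_T\mid X_0]$ is well-defined; this is inherited from the hypotheses of Theorem~\ref{additive-drift-upper} exactly as in the ordinary case, since positive additive drift toward $0$ with the process bounded below forces $T < \infty$ a.s.
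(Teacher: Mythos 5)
Your overall strategy is exactly the paper's: transform with a potential that integrates $1/h$ above the threshold and continues linearly with slope $1/h(k')$ below it, show that the transformed process has drift at least $1$, feed this into the overshoot-aware additive drift theorem (Theorem~\ref{additive-drift-upper}), and use linearity of the potential below $k'$ to replace $E[g(X_T)\mid X_0]$ by $g(E[X_T\mid X_0])$. However, the potential you actually write down is wrong, and this breaks two of your steps. You set $g(z)=\frac{z-k'}{h(k')}+\int_{k'}^{z}\frac{dx}{h(x)}$ for $z\ge k'$, so there $g'(z)=\frac{1}{h(k')}+\frac{1}{h(z)}$, not $\frac{1}{h(z)}$ as you claim; in particular $g'$ jumps \emph{up} at $k'$ (the slope just above $k'$ exceeds the slope $1/h(k')$ below it), so $g'$ is not non-increasing on $\R$, and the chord inequality $g(X_t)-g(X_{t+1})\ge (X_t-X_{t+1})/h(X_t)$ that you extract from this monotonicity fails for steps with $X_{t+1}>X_t$ — which the theorem permits, since only a drift condition (and no monotonicity of the process) is assumed. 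Moreover, even granting drift at least $1$, your final substitution contains an algebra error: with your $g$ one gets $g(X_0)-E[g(X_T)\mid X_0]=\frac{X_0-E[X_T\mid X_0]}{h(k')}+\int_{k'}^{X_0}\frac{dz}{h(z)}$, because $\frac{X_0-k'}{h(k')}-\frac{E[X_T\mid X_0]-k'}{h(k')}=\frac{X_0-E[X_T\mid X_0]}{h(k')}$, not $\frac{k'-E[X_T\mid X_0]}{h(k')}$; this is strictly weaker than the claimed bound whenever $X_0>k'$.

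The fix is to drop the extra linear term above the threshold, i.e., to take $g(z)=\int_{k'}^{z}\frac{dx}{h(\max\{x,k'\})}$, which equals $\int_{k'}^{z}\frac{dx}{h(x)}$ for $z\ge k'$ and $\frac{z-k'}{h(k')}$ for $z<k'$; this is precisely the potential used in the paper. With this $g$ the slope is genuinely non-increasing (equivalently, one checks $g(x)-g(y)\ge\frac{x-y}{h(x)}$ directly from the integral representation, also in the case $y>x$), the transformed drift is at least $1$, $g$ is linear on $(-\infty,k']$ so $E[g(X_T)\mid X_0]=\frac{E[X_T\mid X_0]-k'}{h(k')}$, and your concluding computation then goes through verbatim and reproduces the paper's proof.
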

\begin{proof}
Let $D = [k'; \infty)$. We define the potential function $g: D \to \R$ by setting 
\begin{equation*}
    g(x) = \int_{k'}^{x} \frac{dz}{h(\max\{z, k'\})}.
\end{equation*}

For any two points $x \ge y$, such that $x, y \in D$ and $x \ge k'$, the following holds:
\begin{equation*}
    g(x) - g(y) = \begin{cases}
        \int_{y}^{x} \frac{dz}{h(z)} \ge \frac{x - y}{h(x)}, & y \ge k' \\
        \frac{k' - y}{h(k')} + \int_{k'}^{x} \frac{dz}{h(z)} \ge \frac{k' - y}{h(x)} + \frac{x - k'}{h(x)} = \frac{x - y}{h(x)}, & y \le k'.
    \end{cases}
\end{equation*}

As a result, the expected potential decrease is:
\begin{align*}
    E[g(X_{t}) - g(X_{t+1}) \mid X_0, \ldots, X_t] \ge \frac{E[X_{t} - X_{t+1} \mid X_0, \ldots, X_t]}{h(X_t)} \ge 1.
\end{align*}

We also note that $g(x)$ is a linear function for $x \in [0; k']$, hence for any
random variable $Z$ taking values from $[0,k']$ it holds that $E[g(Z)] = g(E[Z])$.

Now we apply Theorem~\ref{additive-drift-upper} and prove this theorem:
\begin{align*}
    E[T \mid g(X_0)] &\le g(X_0) - E[g(X_T) \mid g(X_0)] \\ 
					 &= g(X_0) - g(E[X_T \mid X_0]) \\
                     &= g(X_0) - g(k') + g(k') - g(E[X_T \mid X_0]) \\
                     &= \int_{k'}^{X_0} \frac{dz}{h(z)} + \frac{k' - E[X_T \mid X_0]}{h(k')},
\end{align*}
where we use the fact that the random variable $X_T$ takes values from $[0;k']$ together with the observation just above, and $g$ is a bijection, so conditioning on $g(X_0)$ is
equivalent to conditioning on $X_0$. 
\end{proof}

\begin{corollary}[Overshoot-aware multiplicative drift, upper bounds]
Let $D \subseteq \{0\} \cup \R^{+}$ be the set of possible values of the random process
that is defined by the sequence of random variables $(X_t)_{t \in \N}$.
Let $k \in D$ be the target value of the process, let $T = \inf\{ t \mid X_t \le k \}$ be the hitting time,
and let $k' = \inf\{ x \mid x \in D, x > k \}$ be the lower bound on the values from $D$ that are larger than $k$.
Suppose that
\begin{itemize}
    \item $X_0 \ge k'$;
    \item there is some constant value $\delta > 0$ such that,
          for all $t < T$, it holds that $X_t - E[X_{t+1} \mid X_0, \ldots, X_t] \ge \delta X_t$.
\end{itemize}
Then $E[T \mid X_0] \le \frac{1}{\delta} \cdot \left(1 - \frac{E[X_T \mid X_0]}{k'} + \ln \frac{X_0}{k'}\right).$  \label{multiplicative-ft-upper}
\end{corollary}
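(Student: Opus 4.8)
The plan is to obtain this corollary directly from Theorem~\ref{variable-ft-upper} by instantiating the variable drift function as a linear function. First I would reconcile the two stopping times: in Theorem~\ref{variable-ft-upper} the relevant time is $\inf\{t \mid X_t < k'\}$, whereas the corollary uses $T = \inf\{t \mid X_t \le k\}$. By the definition $k' = \inf\{x \mid x \in D,\, x > k\}$, every element of $D$ that exceeds $k$ is at least $k'$, so $D$ contains no value strictly between $k$ and $k'$. Hence, for any value taken by the process, being $\le k$ is equivalent to being $< k'$, and the two stopping times coincide; in particular $X_T$ is the same random variable in both formulations, and it satisfies $X_T \le k < k'$ and $X_T \ge 0$ because $D \subseteq \{0\} \cup \R^{+}$.

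Next I would set $h(z) := \delta z$ for $z \in [k';\infty)$. Since $\delta > 0$ and $k' > 0$, this $h$ is non-decreasing and $\R^{+}$-valued, so it is an admissible drift function for Theorem~\ref{variable-ft-upper}, and the multiplicative-drift hypothesis $X_t - E[X_{t+1} \mid X_0,\ldots,X_t] \ge \delta X_t$ is exactly the hypothesis $X_t - E[X_{t+1} \mid X_0,\ldots,X_t] \ge h(X_t)$ required there. Applying Theorem~\ref{variable-ft-upper} therefore gives
\begin{equation*}
E[T \mid X_0] \le \frac{k' - E[X_T \mid X_0]}{\delta k'} + \int_{k'}^{X_0} \frac{dz}{\delta z}.
\end{equation*}
It then remains to evaluate $\int_{k'}^{X_0} \frac{dz}{\delta z} = \frac{1}{\delta}\ln(X_0/k')$ and to rewrite the first summand as $\frac{1}{\delta}\left(1 - E[X_T \mid X_0]/k'\right)$; summing the two yields precisely $\frac{1}{\delta}\left(1 - E[X_T \mid X_0]/k' + \ln(X_0/k')\right)$, which is the claimed bound.

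I do not expect a genuine obstacle, as this is a one-line specialization of the preceding theorem; the only points deserving a word of care are the equivalence of the two stopping times (which rests on the ``no gap above $k$'' property of $D$ noted above) and the implicit non-degeneracy assumption $k' > 0$, which is needed both for $h$ to map into $\R^{+}$ and for the term $\ln(X_0/k')$ to be well defined. This assumption is automatic when $k > 0$ (then $k' \ge k > 0$), and when $k = 0$ it holds whenever the stated bound is non-vacuous, so no extra hypothesis is really required. For completeness I would also recall that $X_T \in [0, k']$ — the property that lets Theorem~\ref{variable-ft-upper} replace $E[g(X_T)]$ by $g(E[X_T])$ — holds here for free because $D \subseteq \{0\} \cup \R^{+}$ and $X_T \le k < k'$.
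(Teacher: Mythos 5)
Your proposal is correct and follows exactly the paper's route: the paper also derives this corollary as an immediate specialization of Theorem~\ref{variable-ft-upper} with $h(x) = \delta x$. Your additional remarks on the equivalence of the two stopping times (via the gap in $D$ above $k$) and on $k' > 0$ are sound elaborations of details the paper leaves implicit.
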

\begin{proof}
Follows directly from Theorem~\ref{variable-ft-upper} by choosing $h(x) = \delta x$. 
\end{proof}

Note that a proper estimation of the expected overshoot $E[X_T \mid X_0]$ may in fact be tricky, especially given that it is a conditional expectation.
Contrary to Markovian processes, where $E[X_T \mid X_0] = E[X_T \mid X_{T-1}]$ and understanding $X_{T-1}$ may be rather easy, for more involved processes,
such as optimization of multimodal functions or algorithms with self-adjusting parameters,
this value may depend on the whole sequence of values.


\subsection{Applications}
\subsubsection{Minimum Spanning Trees}

We begin with fixed-target bounds for minimum spanning trees solved by the \ea and its variations.
In the context of evolutionary computation, the function to optimize can be defined in different ways.
We follow~\cite{NeumannW07} and use a function which consists of two parts: the number of connected components with a large weight
(to facilitate connecting all the graph vertices), and the weight of the chosen edges. This function is to be minimized.
It is known~\cite{NeumannW07,DoerrJW12algo} that the \ea optimizes this function in expected time $O(m^2(\log{nw_{\max}}))$,
where $m$ is the number of edges, $n$ is the number of vertices, and $w_{\max}$ is the maximum edge weight.

\begin{theorem}
	Starting from a randomly initialized graph, 
	the expected time for Algorithm~\ref{algo:mpl} with $\mu = \lambda = 1$, 
    whose mutation distribution $\mathcal{M}$ selects a single bit to flip with probability $q$,
	to find a graph with at most $k$ connected components is at most $\frac{1}{q} (1+\ln{\frac{m-1}{k}})$.
\end{theorem}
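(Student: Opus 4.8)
The plan is to apply the fixed-target multiplicative drift theorem, Theorem~\ref{multiplicative-drift-upper}, to the potential $X_t := c_t - 1$, where $c_t$ denotes the number of connected components of the subgraph encoded by the current search point after iteration~$t$. A graph with at most $k$ connected components has been found exactly when $X_t \le k-1$, i.e.\ when $X_t < k$, so I would take the threshold to be $k' := k$. If $X_0 < k'$ the time is $0$ and the bound holds trivially, so assume $X_0 \ge k'$; since $c_t \ge 1$ for all $t$ we have $X_t \ge 0$ throughout, so the structural hypotheses of Theorem~\ref{multiplicative-drift-upper} are satisfied.

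The heart of the argument is a multiplicative drift estimate with rate $\delta = q$, and it rests on two observations. First, the component count is non-increasing: with $\mu = \lambda = 1$ an offspring is kept only if it is at least as good as the parent, and in the two-part Neumann--Wegener MST fitness function~\cite{NeumannW07} the component-count contribution strictly dominates the total-weight contribution, so any offspring with strictly more components is rejected; hence $X_{t+1} \le X_t$ always. Second, from a state with $c_t = c \ge 2$ there are at least $c-1$ \emph{absent} edges of $G$ whose insertion merges two components: contracting every component of the current subgraph yields a connected (multi)graph on $c$ vertices, and any of its spanning trees has $c-1$ edges, each corresponding to an edge of $G$ joining two distinct components and therefore not present in the current subgraph. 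Flipping exactly one such bit (and no other), an event of probability at least $(c-1)q$ because these ``flip exactly this one bit'' events are pairwise disjoint, strictly improves the fitness --- so it is accepted --- and lowers the potential to $X_t - 1$. Together these give $X_t - E[X_{t+1} \mid X_0,\dots,X_t] \ge (c-1)\,q = q\,X_t$ for all $t < T$.

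Theorem~\ref{multiplicative-drift-upper} then yields $E[T \mid X_0] \le (1 + \ln(X_0/k'))/\delta = \tfrac{1}{q}\bigl(1 + \ln(X_0/k)\bigr)$. To conclude, I would bound the initial potential by $X_0 = c_0 - 1 \le n - 1 \le m - 1$ --- using $c_0 \le n$ and $n \le m$ for the connected input graphs of interest --- and then, since $z \mapsto \ln(z/k)$ is increasing, obtain $E[T \mid X_0] \le \tfrac{1}{q}\bigl(1 + \ln\tfrac{m-1}{k}\bigr)$; this bound does not involve the random initialization and hence also holds after averaging over it.

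The hard part will not be the drift computation but the two inputs to it: establishing that the number of connected components can never increase requires reading off the precise shape of the MST fitness function together with the elitism of the selection step, and the ``at least $c-1$ improving single-bit flips'' count has to be argued via the contraction/spanning-tree picture rather than by a naive count of cut edges. A further point to be careful about, as stressed in Section~\ref{sec:drift}, is that one must use the \emph{threshold}-based multiplicative drift theorem for upper bounds --- the one that carries the extra additive ``$1+$'' --- because the target $k$ is in general strictly larger than the least value the process can reach, so the plain optimization-time multiplicative drift theorem would not apply directly.
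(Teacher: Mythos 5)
Your proposal is correct and follows essentially the same route as the paper's own proof: the potential is the number of connected components minus one, the drift estimate comes from the at least $c-1$ absent edges of $G$ whose single-bit insertion merges two components, and the fixed-target (threshold) multiplicative drift theorem, Theorem~\ref{multiplicative-drift-upper}, is applied with threshold $k'=k$. Your write-up is in fact somewhat more detailed than the paper's (monotonicity of the component count under elitist selection, the contraction/spanning-tree count, and the bound $X_0\le m-1$), so there is nothing to add.
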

\begin{proof}
	Consider the potential function $g(x)=s-1$, where $s$ is the number of connected components in the subgraph
	which consists of the edges included in the genotype $x$. If there are $s$ such components, 
	there are at least $s-1$ edges, which can be added to decrease the number of components.
	To do that, it is enough to flip at least one bit corresponding to these edges.
	To apply Theorem~\ref{multiplicative-drift-upper}, we estimate the drift as 
	$E[g(X_t)-g(X_{t+1}) \mid g(X_t)=c] \ge cq.$

    The target of $k$ connected components maps to the target potential of $k-1$ and hence to the threshold value $k$. 
    By applying Theorem~\ref{multiplicative-drift-upper} we get the desired bound. 
\end{proof}

\begin{theorem}
	Starting from some spanning tree,
	the expected time for Algorithm~\ref{algo:mpl} with $\mu = \lambda = 1$, 
    whose mutation distribution $\mathcal{M}$ selects exactly two bits to flip with probability $q$,
	to find spanning tree with the weight at most $k$ larger than the minimum possible weight
	is at most $\frac{1}{q} (1+\ln{\frac{(n-1) w_{\max}}{k+1}})$.
\end{theorem}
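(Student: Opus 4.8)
The plan is to run the argument of the preceding (connectivity) theorem along the same lines, but with the \emph{weight} of the current tree as the potential, using the classical matroid exchange property of spanning trees to supply the drift, and finishing with Theorem~\ref{multiplicative-drift-upper}. Fix a minimum spanning tree $T^{*}$ and put $g(x)=w(x)-w(T^{*})$, where $w(x)$ is the total weight of the edges encoded by $x$. Starting from a spanning tree, elitist selection keeps the current genotype a spanning tree: a $2$-bit flip that inserts two edges only raises the weight and is rejected; one that deletes two edges disconnects the graph and is rejected because of the large connectivity term in the fitness; and one that inserts an edge and deletes another is accepted only when the deleted edge lies on the cycle created by the inserted one, i.e.\ only when the offspring is again a spanning tree of no larger weight. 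Hence along the run $g$ is a nonnegative integer (weights are positive integers), it never increases, and the target ``spanning tree of weight at most $w(T^{*})+k$'' is exactly $g\le k$; as $g$ is integral this is $g<k'$ with threshold $k'=k+1$. Also $g(X_{0})\le w(X_{0})\le (n-1)w_{\max}$ since a spanning tree has $n-1$ edges, and if the initial tree already meets the target there is nothing to prove, so we may assume $X_{0}\ge k'$.

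The heart of the proof is the one-step drift. I would invoke the exchange property in the form: for a spanning tree $T$ with $w(T)>w(T^{*})$ there is a bijection $\alpha$ from $T\setminus T^{*}$ to $T^{*}\setminus T$ such that $T-e+\alpha(e)$ is a spanning tree with $w(\alpha(e))\le w(e)$ for each $e\in T\setminus T^{*}$, and $\sum_{e\in T\setminus T^{*}}\bigl(w(e)-w(\alpha(e))\bigr)=w(T)-w(T^{*})$. Each such exchange is realised by flipping exactly the two bits corresponding to $e$ and $\alpha(e)$; since the edges $e$ are pairwise distinct tree edges and the edges $\alpha(e)$ pairwise distinct non-tree edges, these bit-pairs are pairwise distinct, so the corresponding mutation events are disjoint and each has probability at least $q$. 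Every such mutation is accepted (the offspring is a spanning tree of weight $w(T)-(w(e)-w(\alpha(e)))\le w(T)$) and lowers $g$ by exactly $w(e)-w(\alpha(e))$. Summing over the exchanges and using that $g$ is nonincreasing, for $g(X_{t})=c$ we obtain $E[g(X_{t})-g(X_{t+1})\mid g(X_{t})=c]\ge q\sum_{e}\bigl(w(e)-w(\alpha(e))\bigr)=qc$, which is exactly the multiplicative drift condition of Theorem~\ref{multiplicative-drift-upper} with $\delta=q$. Applying that theorem with $X_{0}\le (n-1)w_{\max}$ and $k'=k+1$ yields $E[T\mid X_{0}]\le \frac{1}{q}\bigl(1+\ln\frac{(n-1)w_{\max}}{k+1}\bigr)$, as claimed.

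I expect the drift step to be the real obstacle: one must cite the exchange lemma in precisely the form where the improvements sum to the whole weight gap, check that the $2$-bit moves it produces are pairwise distinct and are accepted by the elitist selection, and rule out any mutation that raises the potential — which is where the fact that the process stays on spanning trees (elitism together with the connectivity penalty in the fitness) is genuinely used. Everything after that is a direct, routine application of multiplicative drift, fully parallel to the connectivity-phase theorem; the only numerical points are the bounds $X_{0}\le (n-1)w_{\max}$ and $k'=k+1$.
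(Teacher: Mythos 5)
Your proposal is correct and follows essentially the same route as the paper: the same potential $w(x)-w_{\mathrm{opt}}$, the same multiplicative drift bound $qc$, and the same application of Theorem~\ref{multiplicative-drift-upper} with $X_0\le(n-1)w_{\max}$ and threshold $k'=k+1$. The only difference is that you re-derive the drift estimate via the exchange bijection, whereas the paper simply cites it from the earlier MST analysis of Doerr, Johannsen, and Winzen; your explicit remark that integer weights are needed for the threshold $k'=k+1$ is a welcome clarification.
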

\begin{proof}
	We again reuse the corresponding result from~\cite{DoerrJW12algo}.
	The process is defined as $X_{t}=w(x)-w_{\text{opt}}$, and~\cite{DoerrJW12algo} gives the drift bound of $E[X_{t}-X_{t+1} \mid X_{t}=x] \ge X_{t} \cdot q$.
	The application of Theorem~\ref{multiplicative-drift-upper} yields the desired upper bound on the fixed-target runtime,
	as $X_0 \le (m-1)w_{\max}$. 
\end{proof}

We give the two-bit probabilities for the common algorithms.
\begin{itemize}
	\item \rls which flips pairs of bits (``2-opt mutation operator''): $q=\frac{2}{m(m-1)}$;
	\item \ea and \mea: $q = p^2(1-p)^{m-2}$;
	\item \rea: $q = \frac{p^2(1-p)^{m-2}}{1-(1-p)^m}$;
	\item \meatwo: $q = p^2(1-p)^{m-2} + (1-p)^m\frac{2}{m(m-1)}$;
	\item \eab: $q = \frac{2}{m(m-1)} \cdot (C_n^{\beta})^{-1} \cdot 2^{-\beta} \cdot \Theta(1)$.
\end{itemize}

Note that \rls which flips pairs of bits is a rather a mind experiment than an algorithm to use,
however, one may use \rls that tosses a coin and flips either single bits or pairs of bits, which just halves
the probability above.


\subsubsection{The \texorpdfstring{\ea}{(1+1) EA} on \om, Lower Bounds}

We prove the lower fixed-target bounds using variable drift.

\begin{theorem}[The lower fixed-target bound on \ea with $p=1/n$ for \om]\label{theoeaom}
	The expected time to find an individual with fitness at least $k$, $2n/3 < k <n$, when
	$n$ is large enough, is at least \label{result:om:lower:drift}
	\begin{align*}
		\left(1-O\left(\frac{n \log n}{n^3}\right)\right)en\left(2\ln{\frac{\sqrt{\frac{n}{3}}+2}{\sqrt{n-k}+2}}+\ln{\frac{n+16(n-k)+32\sqrt{n-k}}{n+16\frac{n}{3}+32\sqrt{\frac{n}{3}}}}\right).
	\end{align*}
\end{theorem}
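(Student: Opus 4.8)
The plan is to apply the variable-drift lower bound (Theorem~\ref{variable-drift-lower}) to the \ea on \om with $p=1/n$, using the number of missing ones $X_t = n - \om(x_t)$ as the potential, and exploiting Observation~\ref{fitness-levels-observation}-style truncation implicitly through the choice of target $k$: we stop as soon as $X_t \le n-k$, so the target value of the potential process is $k^{\star}:=n-k>0$, which is exactly the regime where Theorem~\ref{variable-drift-lower} applies. First I would recall the standard drift estimate for the \ea on \om: from a search point with $j=X_t$ zeros, the expected decrease of $X_t$ in one step is at most $\tfrac{j}{n}(1-1/n)^{n-1}\le \tfrac{j}{en}(1+o(1))$ when only one-bit flips help, but to get a clean upper bound valid for the theorem we need an inequality of the form $E[X_t-X_{t+1}\mid X_t=j]\le h(c(j))$. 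The trick, as in~\cite{DoerrFW11}, is to pick $c$ capturing that the process cannot jump too far in one step (with overwhelming probability a step flips $O(\log n)$ bits), so that $c(j)\ge j - O(\log n)$, and then choose $h$ so that $h(z)$ upper-bounds the true drift at a point whose current value is $\ge z$; concretely something like $h(z) = \tfrac{1}{en}\bigl(z + a\sqrt{z}+b\bigr)$ with suitable constants $a,b$ absorbing the contribution of multi-bit flips and the $(1-1/n)^{n-1}$ versus $e^{-1}$ discrepancy. This is where the somewhat unusual shape of the claimed bound — the $\sqrt{n-k}$ and $\sqrt{n/3}$ terms and the additive $16(n-k)+32\sqrt{n-k}$ — comes from: it is exactly $\int_{k^{\star}}^{X_0}\tfrac{1}{h(z)}\,dz$ for an $h$ of that quadratic-in-$\sqrt{z}$ form, split by partial fractions into a $\ln\frac{\sqrt{X_0}+\cdot}{\sqrt{k^{\star}}+\cdot}$ piece and a $\ln$ of a quadratic piece.

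Second I would handle the initialization. Since the algorithm starts at a random bit string, $X_0 = n - \om(x_0)$ is $\mathrm{Bin}(n,1/2)$, so $X_0 \ge n/3$ with probability $1-e^{-\Omega(n)}$; conditioning on the event $\{X_0 \ge n/3\}$ costs only a $1-e^{-\Omega(n)}$ factor, and on that event the integral $\int_{k^{\star}}^{X_0}\tfrac{1}{h(z)}\,dz$ is at least its value at $X_0 = n/3$, which produces the $\sqrt{n/3}$ and $n/3$ constants in the statement. The $\bigl(1-O(n\log n/n^3)\bigr)$ prefactor should then come from two sources bundled together: (i) the probability $O(\text{poly}(n)\cdot n^{-\Omega(\log n)})$ — or more simply $O(n^{-2})$-type — that some step flips more than, say, $c\log n$ bits while $X_t$ is in the relevant range, which would violate the $X_{t+1}\ge c(X_t)$ hypothesis, handled by a union bound over the (polynomially many expected) steps; and (ii) the slack between $(1-1/n)^{n-1}$ and $e^{-1}$. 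I would make these precise just enough to see they are $O(n\log n / n^3) = O(\log n / n^2)$.

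The main obstacle, as the authors themselves flag for the overshoot-aware theorems, is getting the drift-bounding functions $c$ and $h$ simultaneously right: $h$ must be a genuine upper bound on $E[X_t - X_{t+1}\mid X_t]$ in terms of $c(X_t)$ (not $X_t$), which forces one to quantify how much the one-step progress at value $j$ can exceed the ``one-bit-flip'' heuristic $j/(en)$ — the contribution of flipping two good bits, one good and one bad, etc. — and to fold all of that into the $a\sqrt{z}+b$ correction with explicit constants $a=32$, $b=16\cdot(\text{something})$ matching the statement. Verifying that this $h$ is monotone increasing and continuous on $\R_0^+$ and that $c$ is continuous monotone increasing with $X_{t+1}\ge c(X_t)$ holding for all $t<T$ (using that the \ea is elitist so $X_t$ is non-increasing, plus the no-large-jump event) is the technical heart. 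Once $c,h$ are fixed, the rest is the routine computation of the integral by partial fractions and the substitution $u=\sqrt{z}$, plus bookkeeping of the two small error factors, which I would not carry out in detail here.
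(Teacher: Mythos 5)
Your overall strategy is the paper's: apply the fixed-target variable-drift lower bound (Theorem~\ref{variable-drift-lower}) to $X_t=n-f(x_t)$ with target $n-k$, use Chernoff to replace $X_0$ by $n/3$, and absorb the failure probabilities of the ``no large jump'' event and of the initialization into the $1-O(n\log n/n^3)$ prefactor. But the concrete choices you make at the decisive step are not the ones that produce the stated bound, and they are internally inconsistent. You propose $c(j)\ge j-O(\log n)$ (jumps bounded by the number of flipped bits) while simultaneously positing $h(z)=\frac{1}{en}(z+a\sqrt z+b)$; these do not fit together. The square-root terms in the statement do \emph{not} come from the multi-bit-flip contribution to the drift, as you suggest --- that contribution is already captured by the known drift bound $E[X_t-X_{t+1}\mid X_t=s]\le\frac{s}{en}(1+\frac{16s}{n})$ from \cite[Lemma~6]{DoerrFW11}. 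They come from the choice $c(x)=x-\sqrt{x}$ (a ``bad step'' is a fitness gain exceeding $\sqrt{x}$), because the theorem requires $h(c(X_t))$ to dominate the drift at $X_t$, so $h$ is the drift bound composed with (an upper bound on) $c^{-1}$, namely $h(z)=\frac{z+2\sqrt z}{en}\bigl(1+\frac{16z+32\sqrt z}{n}\bigr)$. With your $c(j)=j-O(\log n)$ the integral $\int_{n-k}^{n/3}\frac{dz}{h(z)}$ would have a different shape (and near the optimum would lose a $\ln\ln n$ term), while your quadratic-in-$\sqrt z$ form of $h$ yields only one logarithm plus an arctangent under the substitution $u=\sqrt z$, not the two logarithms of the statement; the second logarithm, $\ln\frac{n+16(n-k)+32\sqrt{n-k}}{n+16\frac{n}{3}+32\sqrt{n/3}}$, requires the product form above (a quartic in $\sqrt z$), whose antiderivative is $2\ln(\sqrt z+2)-\ln(n+16z+32\sqrt z)$ plus an arctangent term that is then dropped as asymptotically negligible.

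The fix is to recognize that $c$, $h$ and the drift bound can be taken verbatim from \cite[Theorem~5 and Lemma~6]{DoerrFW11} --- that reuse is the whole point of the argument --- so that the only genuinely new work is (i) extending the $O(n^{-3})$ bad-step probability, proved there only for $x\ge 9$, down to $2\le x\le 8$ (a direct calculation, e.g.\ $\binom{8}{3}p^3$ for $5\le x\le 8$, and a separate argument for $2\le x\le 4$ where the per-step probability is only $O(n^{-2})$ but the process spends only $O(n)$ expected steps), and (ii) evaluating the known antiderivative between $n-k$ and $n/3$ instead of between $1$ and $n/3$. Without identifying $c(x)=x-\sqrt x$ and the correct $h$, the ``routine computation'' you defer cannot reproduce the constants $2$, $16$ and $32$ in the claimed expression.
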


\begin{proof}
	The basis of this proof is~\cite[Theorem 5]{DoerrFW11}. Our aim is to apply
	Theorem~\ref{variable-drift-lower}, which allows jumps below the target.
	This allows us to use the original the potential function $X_t=n-f(x)$
	and the existing bound on the expected drift~\cite[Lemma 6]{DoerrFW11}:
    $E[X_t-X_{t+1} \mid X_t=s] \le \frac{s}{en}(1+\frac{16s}{n})$.

	Following~\cite{DoerrFW11}, we bound the step size with $c(x)=x-\sqrt{x}$.
	We denote as a bad step the event of increasing the fitness by more than $\sqrt{x}$.
	To condition on that, we estimate the probability of making a bad step for $2 \le x \le n$,
	which was shown in~\cite{DoerrFW11} to be $O(n^{-3})$ for $x \ge 9$. For a good fixed-target result,
    we need to cover the rest. For $5 \le x \le 8$, the probability of a bad step is at most $\binom{8}{3}p^3 = \frac{56}{n^3} = O(n^{-3})$.
    For $2 \le x \le 4$, a similar calculation yields the probability of $O(n^{-2})$.
    Since the latter bound corresponds to only $\Theta(1)$ fitness values, in which the algorithm spends at most $O(n)$ iterations in expectation,
    the union bound over all bad steps during $en \ln n$ iterations is at most $O((n \log n) / n^3)$.
    This is reflected by having the $1 - O((n \log n) / n^3)$ quotient in the result.

    We also reuse the function $h(x)$ from~\cite[Theorem 5]{DoerrFW11}, which is $h(x) \le \frac{x+2\sqrt{x}}{en}(1+\frac{16+32\sqrt{x}}{n})$,
	and apply Theorem~\ref{variable-drift-lower} to get
	\begin{align*}
		&E[T\mid X_{0}] \ge \int_{k'}^{X_0}\frac{en}{(x+2\sqrt{x})(1+\frac{16+32\sqrt{x}}{n})}dx \\ 
		&\ge en \cdot \left[2\ln{\left(\sqrt{x}+2\right)}-\ln{(n+16x+32\sqrt{x})}+\frac{8\arctan{\frac{4\sqrt{x}+4}{\sqrt{n-16}}}}{\sqrt{n-16}}\right]_{k'.}^{X_{0}}
	\end{align*}

	We simplify the expression above by dropping the addend containing the arctangent, since it increases with $x$ and its value is asymptotically smaller than other addends.
	Finally we choose $k' = n - k$ and bound $X_{0}$ from below by $n/3$. The latter choice is due to Chernoff bounds, which show that the algorithm starts with an individual
	with a Hamming distance to the optimum of at least $n/3$ with probability $1 - e^{-\Omega(n)}$, which hides in the leading factor of $1 - O((n \log n) / n^3)$. 
\end{proof}

Figure~\ref{om:profile-new} illustrates that the bound proven in Theorem~\ref{theoeaom} is significantly better than the one from~\cite{LenglerS15}
and captures the essense of the algorithm's behaviour.

\begin{figure}[!t]\centering
\includegraphics[scale=1.22]{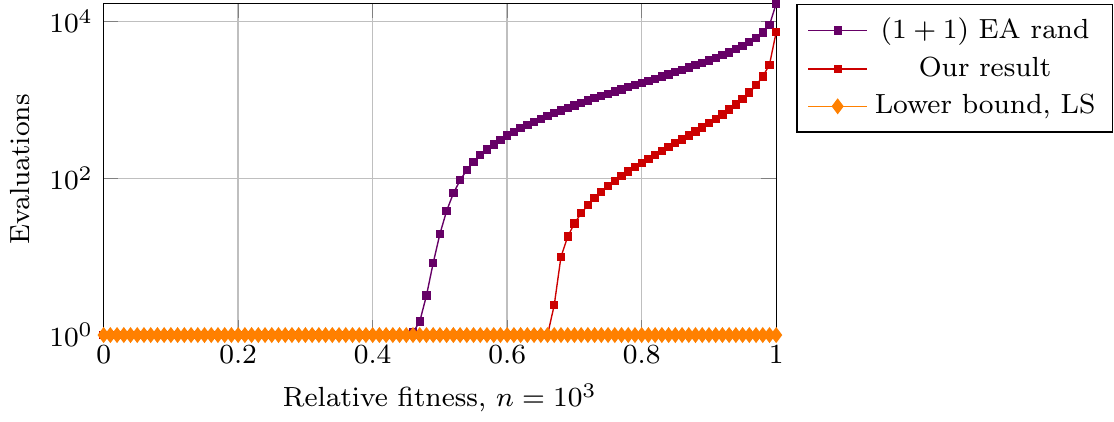}
\par
\includegraphics[scale=1.22]{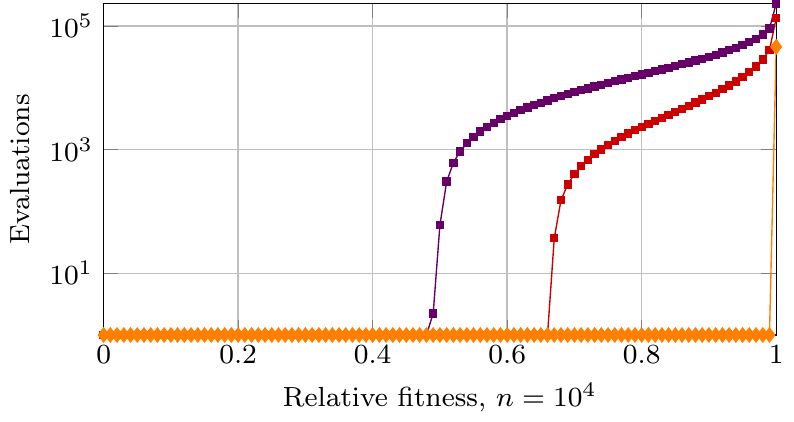}%
\includegraphics[scale=1.22]{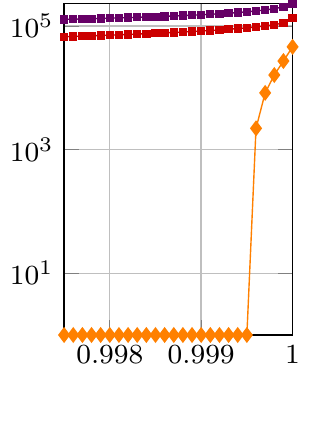}
\par
\includegraphics[scale=1.22]{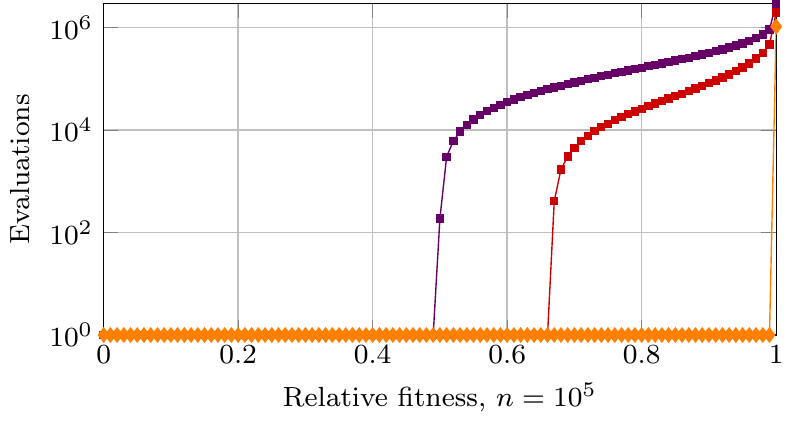}%
\includegraphics[scale=1.22]{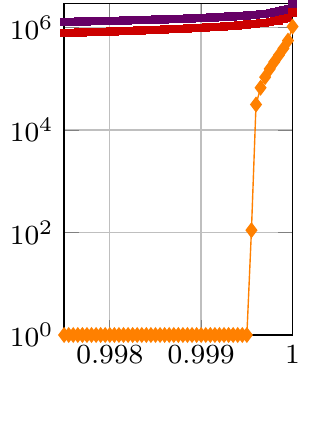}
\par
\caption{The runtime profile of the \ea on \om and the lower bounds, the existing and the new ones.
Plots in the left part of the figure show the general picture for three problem sizes $n \in \{10^3, 10^4, 10^5\}$,
whereas plots on the right zoom into the regions close to the optimum.
The plotted quantity is the expected number of fitness evaluations required to reach the target fitness value against that (relative) fitness value. 
The runtime profiles of the \ea were computed exactly using dynamic programming following the ideas of~\cite{buskulic-doerr}.
For the lower bounds, the respective expressions were evaluated and plotted, disregarding the $(1 \pm o(1))$ factors.
The notation in the legend is as follows:
``\ea rand'' is the \ea starting at a randomly generated bit string,
``Our result'' is the one proven in Theorem~\ref{result:om:lower:drift},
and ``Lower bound, LS'' is the bound from~\cite[Corollary~3]{LenglerS15}.
}\label{om:profile-new}
\end{figure}


\section{Summary of Difficulties of Fixed-Target Analysis}\label{sec:difficulties}

In this work, we have often seen that fixed-target analyses are not much more difficult than classic runtime analyses. However, from the section dedicated to drift theorems we also learned that, in order to obtain fixed-target results from existing optimization time results, more powerful drift theorems sometimes require additional statements to yield bounds that are good enough. More extreme examples are known, such as our earlier attempt with the \bv\ function in~\cite{fixed-target-gecco19}, for which a radically different approach had to be developed.
What is more, sometimes even the easy proofs employing fitness levels required us to dig deeper into the original optimization time result
and refer not to the main theorem formulation, but to the details of its proof.
Below we present an attempt to classify the possible difficulties that may arise when one wants to prove fixed-target results based on the existing results about optimization times.

\begin{enumeratepar}
    \item \emph{Intermediate results can directly be reused, but final theorems are not enough}. Since many of the existing papers aim at proving optimization times,
          they may not expose the intermediate results, which can be used in fixed-target proofs, as separate citable units, such as theorems or lemmas.
          A simple example is the proof of Theorem~\ref{onemax-lower-bounds-levels}, where we could not even point to the equation we used due to it being unnumbered.

          It may even happen that the proof nearly gives the fixed-target result, but it cannot be derived from the theorem statement. The typical reason is that the result gets simplified in a way
          that does not change the optimization time much but affects the possible fixed-target result. Our proof of Theorem~\ref{result:mu+1} diverged from its source
          at the point of simplification of one of the $\min\{\ldots\}$ clauses: if we followed this simplification too, the final result would be much less precise.

          Presenting the results as a set of smaller lemmas and theorems may make it easier to build fixed-target results (and further extensions) atop of them.
          We admit, however, that it requires more work, and some proofs may be too difficult to split into multiple lemmas, which would then have large and cumbersome statements.
    \item \emph{Results can be directly reused, but they are not applicable or sharp enough for the full range of targets}.
          For various reasons the existing optimization time results may be quite sharp on their own, but appear to be too loose in fixed-target contexts.
          A good example of such a situation is a bound which is sharp for hard regions of the search space, but is loose for the remaining part.
          This is the case for all simple upper bounds for \om, which assume that only one-bit flips are beneficial.
          A similar example for lower bounds is~\cite[Corollary~3]{LenglerS15}, which implicitly assumes that optimization in easy parts is performed instantly.
          Such existing results need to be augmented by more work that refines them in parts that are not too meaningful for finding optimization times,
          but essential for good fixed-target results.

          It follows that, surprisingly at first sight, more complicated proofs of optimization times that yield sharper bounds should be easier to adapt to the fixed-target context
          than easier proofs. We admit, however, that such proofs are harder to produce and verify, and the main messages important for understanding optimization times may get obscured.
    \item \emph{Some results can be directly reused, but additional statements need to be proven}.
          One of such cases observed in this paper are drift theorems, which can borrow all the conditions from the existing optimization time results,
          but require to prove the bounds on the overshoot term $E[X_T \mid X_0]$ in order to yield the best possible precision. This is not actually required at times,
          however, on some occasions not doing that results in a ridiculously imprecise bound, such as an upper bound that does not take the target into account.

          This can be seen as a fair price to pay, since drift theorems are applicable for a wider range of processes than fitness level theorems,
          and harder problems may demand more complicated solutions. Based on some of our preliminary work, however, we conjecture that if one has both upper and lower bounds
          that are based on drift theorems and are good enough, the matching fixed-target results may be obtained with much less effort.
    \item \emph{Existing results require more work in order to be usable}.
          In the most complicated cases, the existing optimization time proofs may be too overfitted to the optimum being the target.
          For instance, it may be the case that the distance between the optimum and the target, which used to be zero,
          appears in so many locations of the proof, that the whole proof needs to be significantly reworked and extended to produce the fixed-target results.
          This is what seems be the case for linear functions: some of our work in~\cite{fixed-target-gecco19} for \bv, especially the failed attempts that were not included into that paper, suggests that
          the principles of designing the potential function in~\cite{Witt13} shall be significantly changed to work well in the fixed-target context.
\end{enumeratepar}

While two latter points seem to be fundamental, two former appear to follow from the current habits of performing and presenting the research,
for which reason they may be called the \emph{social} ones: the use of best practices, possibly assisted by automated theorem provers,
has the potential of resolving a large fraction of these difficulties just as a side effect.


\section{Conclusion}

In this first work focussed on fixed-target runtime analysis, most of our results indicate that deriving fixed-target results for the whole set of reasonable targets 
is in several cases not more complicated than just analyzing the classical optimization time (which is the special case where the target is set to the fitness of an optimal solution).
Since fixed-target results are much more informative than the optimization time alone, in such situations we can only advocate to conduct runtime analyses in the more general fixed-target perspective.
As discussed, this often does not need different proofs, all that is required is to formulate the information present in the proof also in the result.
That said, there are also problems which require additional facts to be proven in order to obtain sharp enough fixed-target results. Extending our current toolbox to analyze these situations should be a fruitful direction for further research.

Together with a complementary fine-grained notion that appeared recently, optimization times starting from a good solution~\cite{antipovBD-ppsn20-from-good}, which might also be called
\emph{fixed-start runtime analysis} to unify terms, fixed-target runtime analysis can help to prove runtimes for algorithms to cross fitness intervals.
This has direct applications in designing and understanding hyper-heuristics, as well as in parameter tuning and parameter control.
Since these research areas tend to require especially sharp bounds, it may be the case that the difficulties in applying even the hardest
tools that take care of fine effects, such as target overshooting, may actually pay off well.

\section*{Acknowledgments}
This was supported by a public grant as part of the Investissement d'avenir project, reference ANR-11-LABX-0056-LMH, LabEx LMH,
and by RFBR and CNRS, project number 20-51-15009.


\bibliographystyle{alpha}
\bibliography{alles_ea,ich,extra}

\end{document}